\documentclass{article}
\usepackage[margin=1.2in]{geometry}
\usepackage{latexsym, amsthm, amscd, amsfonts, mathrsfs, amsmath, amssymb, stmaryrd, tikz-cd, mathrsfs, bbm, url, esint, mathtools, enumerate}
\usepackage{blkarray}
\usepackage{dsfont}
\usepackage[affil-it]{authblk}

\usepackage[OT4]{fontenc}
\usepackage[margin=1.2in]{geometry}
\usepackage{amssymb,amsfonts,amsmath,amsthm,amscd,dsfont,mathrsfs,pifont}
\usepackage{blkarray}
\usepackage{graphicx,float,psfrag,epsfig,color}
\usepackage{qtree}
\usepackage{comment}
\usepackage{authblk,textcomp}
\usepackage{caption} 
\usepackage{subcaption}

\usepackage{latexsym, amsthm, amscd, amsfonts, mathrsfs, amsmath, amssymb, stmaryrd, tikz-cd, mathrsfs, bbm, url, esint, mathtools, enumerate, caption, subcaption}
 \usepackage{dsfont}
\usepackage{blkarray}
\usepackage[utf8]{inputenc} 
\usepackage[T1]{fontenc}    
\usepackage{hyperref}       
\usepackage{url}            
\usepackage{booktabs}       
\usepackage{nicefrac}       
\usepackage{microtype}      

\usepackage{algpseudocode}
\usepackage{enumitem}
\usepackage{comment}
\usepackage{bbm}
\usepackage[ruled,vlined]{algorithm2e}

\footnotesep 14pt
\floatsep 27pt plus 2pt minus 4pt      
\textfloatsep 40pt plus 2pt minus 4pt
\intextsep 27pt plus 4pt minus 4pt


\newcommand{\pr}{\mathbb{P}}
\newcommand{\E}{\mathbb{E}}

\DeclareMathOperator{\Var}{Var}

\DeclareMathOperator{\argmin}{argmin}

\DeclareMathOperator{\ReLU}{ReLU}
\DeclareMathOperator{\Unif}{Unif}
\DeclareMathOperator{\Rad}{Rad}

\newcommand{\NN}{N}
\DeclareMathOperator{\poly}{poly}

\DeclareMathOperator{\Inf}{Inf}

\newcommand{\cX}{\mathcal{X}}

\newcommand{\cD}{\mathcal{D}}
\newcommand{\cN}{\mathcal{N}}

\newcommand{\bR}{\mathbb{R}}

\newcommand{\bN}{\mathbb{N}}

\newcommand{\cL}{\mathcal{L}}
\newcommand{\cB}{\mathcal{B}}
\newcommand{\bS}{\mathbb{S}}

\newcommand{\bmu}{\boldsymbol{\mu}}

\definecolor{mydarkblue}{rgb}{0,0.08,0.45}
  \hypersetup{ %
    colorlinks=true,
    linkcolor=mydarkblue,
    citecolor=mydarkblue,
    filecolor=mydarkblue,
    urlcolor=mydarkblue,
    }
\definecolor{DSgray}{cmyk}{0,0,0,0.7}
\definecolor{DSred}{cmyk}{0,0.7,0,0.7}


\theoremstyle{plain}
\newtheorem{definition}{Definition}
\theoremstyle{plain}
\newtheorem{theorem}{Theorem}
\theoremstyle{plain}
\newtheorem{proposition}[theorem]{Proposition}
\theoremstyle{plain}
\theoremstyle{plain}
\theoremstyle{plain}
\theoremstyle{plain}
\theoremstyle{plain}
\newtheorem{lemma}{Lemma}
\theoremstyle{plain}
\newtheorem{corollary}{Corollary}
\theoremstyle{plain}
\theoremstyle{plain}
\theoremstyle{plain}
\newtheorem{assumption}{Assumption}
\theoremstyle{plain}
\theoremstyle{plain}


\title{
Low-dimensional functions are efficiently learnable\\ under randomly biased distributions}



\author[1,2]{
Elisabetta Cornacchia}
\author[1,3]{Dan Mikulincer}
\author[1]{Elchanan Mossel}

\affil[1]{\small Massachusetts Institute of Technology (MIT).}
\affil[2]{\small INRIA, DI/ENS, PSL.}
\affil[3]{\small University of Washington (UW).}

\date{}

\usepackage[parfill]{parskip}

\begin{document}

\maketitle

\begingroup
    \renewcommand\thefootnote{}  
\footnotetext{\small 
\noindent
Authors are listed in alphabetical order. Email: \texttt{elisabetta.cornacchia@inria.fr}, \texttt{danmiku@uw.edu}, \texttt{elmos@mit.edu}.}
\endgroup

\begin{abstract}%
The problem of learning single-index and multi-index models has gained significant interest as a fundamental task in high-dimensional statistics. Many recent works have analyzed gradient-based methods, particularly in the setting of isotropic data distributions, often in the context of neural network training.
Such studies have uncovered precise characterizations of algorithmic sample complexity in terms of certain analytic properties of the target function, such as the leap, information, and generative exponents. These properties establish a quantitative separation between low- and high-complexity learning tasks. In this work, we show that high-complexity cases are rare. Specifically, we prove that introducing a small random perturbation to the data distribution—via a random shift in the first moment—renders any Gaussian single-index model as easy to learn as a linear function. We further extend this result to a class of multi-index models, namely sparse Boolean functions, also known as Juntas.
\end{abstract}

\section{Introduction}
The demonstrated successes of modern deep learning paradigms can be attributed, at least in part, to the following conjectured phenomenon: while some natural learning tasks are inherently high-dimensional, often involving millions of features, they exhibit a latent low-dimensional structure (\cite{goldt2020modeling}). This structure reduces the statistical and computational complexities of learning algorithms, transforming seemingly intractable problems into solvable instances. In this work, we aim to further elucidate the complexity of learning problems with low-dimensional structure, showing that, under mild conditions, the presence of noise can make such problems tractable, even when they would otherwise remain challenging.

Specifically, we shall focus on the \emph{single-index model}, a classical family of models that capture low-dimensional structure in target data. In this model, there exists a target function $f:\bR \to \bR$, which may be known or unknown, and an unknown signal $w^*\in \bS^d$. The goal is to estimate $(f,w^*)$ from samples $(x_i,y_i)_{i=1}^n$ where for every $i \in [n]$, $x_i \in \bR^d$, $y_i = f(\langle w^*,x_i\rangle) + \zeta_i$, with $\zeta_i$ being \emph{i.i.d.} sub-Gaussian centered random variables. Thus, while this problem `lives' in $d$-dimensions, it fundamentally depends on a single direction, revealing an underlying one-dimensional structure.

Owing to the rotational invariance of the single-index model, it is standard to assume that $(x_i)_{i=1}^n$ are \emph{i.i.d.} samples from the standard Gaussian distribution. Under this assumption, the complexity of many algorithms is well-understood, with numerous results available in the literature. Among these, a prominent contribution is due to~\cite{arous2021online} (see also an earlier work by~\cite{dudeja2018learning}) who investigated the performance of spherical\footnote{Since the signal $w^*$ is constrained to lie on the sphere, it is natural to restrict the dynamics of SGD to the sphere as well.} online stochastic gradient descent (SGD) for recovering $w^*$ when $f$ is known. They showed that under suitable integrability conditions on \( f \) and its derivative, the performance of online SGD is determined by a single parameter: The so-called \emph{information-exponent} or \emph{Hermite rank} of \( f \), which we denote by \( \mathcal{I}(f) \). Roughly,
\( \mathcal{I}(f) \) is the smallest non-zero integer \( k \) for which the \( k \)-th Hermite coefficient of \( f \) is non-zero (see Section~\ref{sec:gaussian_results} for a formal definition of \( \mathcal{I}(f) \)).
 Several subsequent works showed the importance of the information-exponent for stochastic optimization problems in various settings~(see for example~\cite{ba2022high,bietti2022learning,bietti2023learning} and the related work section). A key takeaway from these results is the following: if $\mathcal{I}(f)$ is large, then the single-index model can be difficult to learn. Beyond the online setting, \cite{damian2024computational,dandi2024benefits,lee2024neural,arnaboldi2024repetita} showed that if samples are reused, SGD can overcome the limitations imposed by the information-exponent. In this setting, its complexity is governed by the \emph{generative-exponent}, defined as the smallest information-exponent attained by any $L_2$ transformation of $f$.

In this work we show that these conditions are remarkably fragile. By slightly perturbing the function $f$ with a random shift, we can ensure that $\mathcal{I}(f)$ becomes small and that the first Hermite coefficient of $f$ remains of constant order. In particular, such a slight perturbation can be achieved by shifting the first moment of the input distribution. Because, by definition, the generative-exponent is never larger than the information-exponent, our results imply that the generative-exponent also becomes small. This leads to the following general principle:

\begin{center}
\emph{By introducing a minimal amount of randomness to the Gaussian single-index model,\\ any target function becomes efficiently learnable.}
\end{center}
To clarify, the \emph{minimal randomness} refers to the introduction of random perturbations in the form of a shift, and \emph{efficiently learnable} means that the complexity becomes independent of the specific target function.

\subsection{Summary of contributions}

\paragraph{Gaussian single-index models.} We study the problem of learning single-index models under randomly shifted Gaussian inputs. Specifically, we assume that the input follows $x \sim \mathcal{N}(\alpha, \mathrm{I}_d)$, where the shift $\alpha$ is drawn from $\mathcal{N}(0, \mathrm{I}_d)$. Our main result shows that, under appropriate assumptions, any such target function is as easy to learn as a linear function, independently of the specific target function and its information-exponent. The key technical contribution is proving that, under a random shift, the first Hermite coefficient of any target function remains of constant order (Theorem~\ref{thm:infromalsmallball}). Under appropriate conditions, we establish the efficient learnability both in the \textit{parametric setting}, where the link function $f$ is known (Theorem~\ref{thm:parameteric}), and in the \textit{semi-parametric setting}, where $f$ is unknown and is learned by a shallow $\ReLU$ neural network (Theorem~\ref{thm:semiparameteric}).

\paragraph{Sparse Boolean functions.} Furthermore, we study a class of multi-index models known as sparse Boolean functions, or Juntas. In these tasks, the input is a Boolean vector $x \in \{ \pm 1 \}^d$, and the target function $f:\{ \pm 1 \}^d \to \mathbb{R}$ depends only on a small subset of the input coordinates. Prior work has focused on learning such targets under a uniform input distribution, introducing measures based on the Fourier-Walsh expansion of $f$ (\cite{abbe2023sgd,joshi2024complexity}; see the related work section for details). We focus on the case where the input is randomly shifted and show that stochastic gradient descent (SGD) on a two-layer $\ReLU$ network can learn any sparse Boolean function with linear sample complexity, regardless of the specific target (Theorem~\ref{thm:juntas_formal}). Similar to the Gaussian case, we prove that a random shift in the input distribution ensures that all first-order Fourier-Walsh coefficients on the relevant coordinates remain of constant order (Proposition~\ref{prop:fourier_firstorder}).


\subsection{Related work}

\paragraph{Single-index and multi-index models on Gaussian isotropic data.} Several studies have investigated the learning dynamics of single and multi-index models with isotropic Gaussian data, aiming to understand how shallow neural networks adapt to low-dimensional structures. Some works have analyzed the effect of a single step of gradient descent with a large learning rate~(\cite{ba2022high,damian2022neural,dandi2023two}), establishing separations between neural networks, random feature models, and kernel methods. For multiple steps of online (i.e., one-pass) stochastic gradient descent (SGD),~\cite{arous2021online} showed that the sample complexity for learning a single-index model is $\Theta\left(d^{\max(\mathcal{I}(f)-1,1)}\right)$ (up to logarithmic factors), where $\mathcal{I}(f)$ is the information-exponent, assuming initialization from a uniform distribution on the sphere. More recently,~\cite{damian2024smoothing} demonstrated that smoothing the loss landscape can improve this complexity to $\Theta\Big(d^{\max(\frac{\mathcal{I}(f)}{2},1)}\Big)$. Beyond this setting,~\cite{bietti2022learning} studied the semi-parametric learning of target functions using shallow $\ReLU$ networks. Other works explored variants of SGD~(\cite{berthier2024learning,ben2022high,chen2023learning}). Recent advances have shown that reusing data batches can overcome the limitations imposed by the information-exponent. This has led to new complexity measures, based on the generative-exponent~(\cite{damian2024computational,dandi2024benefits}) and on the Approximate Message Passing (AMP) algorithm~(\cite{troiani2024fundamental}). Notably, for single-index models with polynomial target functions, \cite{arnaboldi2024repetita,lee2024neural} showed that learning can be achieved by multi-pass SGD with sample complexity scaling as $\Theta(d \log(d))$. In contrast, our work applies to all Lipschitz targets, and we show linear sample complexity independently of the information and generative-exponents.








\paragraph{Beyond Gaussian isotropic data.}
Several studies have established lower bounds for gradient-based learning under specific choices of target functions and data distributions, highlighting the crucial role both elements play in determining learning success~(\cite{yehudai2020learning,goel2020superpolynomial,shamir2018distribution}). Conversely, positive results have been demonstrated for certain target functions under mild distributional assumptions~(\cite{frei2020agnostic,wu2022learning,song2021cryptographic}).
For more general target functions, some works have moved beyond the standard isotropic data assumptions and considered Gaussian distributions with spiked-covariance structure. These studies show that additional structure in the data can significantly improve learning efficiency compared to isotropic settings~(\cite{ba2024learning,mousavi2023gradient,nitanda2024improved}). Notably,~\cite{mousavi2023gradient} demonstrated that when the spike is sufficiently large, the learning complexity is $O(d^{3+\varepsilon})$ (or $O(d^{1+\varepsilon})$ with appropriate pre-conditioning) for any $\varepsilon > 0$, and it remains independent of the target function's information-exponent.
Our work departs from these results by showing that independence from the information-exponent can be achieved even without dependencies among input coordinates and with only a small perturbation of the isotropic distribution. Additionally, we obtain a linear sample complexity without pre-conditioning.
Beyond Gaussian data,~\cite{zweig2024single} examined the robustness of the standard `Gaussian isotropic picture' under perturbations affecting stability to linear projection and spherical symmetry. In contrast, our work investigates a type of perturbation to which the `Gaussian isotropic picture' is highly sensitive.




\paragraph{Sparse Boolean functions.} 
Previous works have shown that the sample and time complexity of learning sparse Boolean functions on uniform inputs using online SGD with squared loss on shallow networks depends on the hierarchical structure of the target function, measured by the \textit{leap} complexity~(\cite{abbe2022merged, abbe2023sgd}). More recently, \cite{joshi2024complexity} generalized this framework to arbitrary loss functions and product measures for a class of algorithms known as Differentiable Learning Queries (DLQ). They further proved that this complexity measure captures the learning dynamics of SGD on uniform inputs on two-layer networks in the mean-field regime and under linear scaling. Other works have shown that single monomials (i.e., sparse parities) under shifted inputs can be learned in linear time~(\cite{malach2021quantifying}), also with a neural network~(\cite{daniely2020learning}), making them more efficient to learn than under uniform inputs~(\cite{AS20, shalev2014understanding}). Additionally, studies have explored curriculum learning strategies that leverage easier samples to improve learning efficiency~(\cite{cornacchia2023mathematical, abbe2023provable}), limited to single monomials. Our work diverges from these prior studies in two key ways: (1) We prove that, with high probability over a random perturbation of the uniform distribution, most sparse functions have a leap complexity of one. (2) We show that this low leap complexity is sufficient to achieve linear time and sample complexity for learning sparse Boolean functions on a finite-width, two-layer network—thus extending beyond the mean-field regime and beyond uniform inputs. Beyond neural network learning, the complexity of learning sparse Boolean functions has also been extensively studied for general algorithms under uniform inputs and product distributions~(e.g. \cite{valiant2012finding,mossel2004learning}). 

\paragraph{Smoothed analysis.} \cite{spielman2001smoothed} introduced smoothed analysis as a hybrid between worst‑case and average‑case frameworks: one measures an algorithm’s expected performance under small random perturbations of arbitrary inputs. \cite{mossel2004learning} observed that shifting the input distribution alters its Fourier coefficients, potentially yielding drastic complexity improvements. This insight was later extended to smoothed‑analysis settings—first for learning juntas and decision trees under random product distributions~\cite{kalai2008decision} and then for learning DNF formulas~\cite{kalai2009learning}. \cite{brutzkus2020id3} showed that the ID3 decision‑tree algorithm can efficiently learn juntas in the smoothed model, and \cite{chandrasekaran2025learning} generalized these results to any Markov Random Field with a smoothed external field. In the Gaussian context, \cite{klivans2013moment} considered smoothing via additive Gaussian noise. Our work applies the same philosophy to gradient descent on shallow neural networks, yielding concrete, uniform runtime and sample‑complexity guarantees, in both discrete and continuous settings.

\section{Main results for Gaussian single-index models}
\label{sec:gaussian_results}
\subsection{Information-exponent of perturbed functions} 
Our main technical contribution is a quantitative study of the information-exponent under random perturbations. Before explaining the form of the perturbations and stating our results we first define the information-exponent: Let $\gamma$ stand for the law of $\cN(0,1)$, a standard Gaussian measure on $\bR$, and for $k \in \mathbb{N}$, let $H_k(x) := \frac{(-1)^k}{\sqrt{k!}} e^{\frac{x^2}{2}}\Big(e^{-\frac{x^2}{2}}\Big)^{(k)}$ stand for the $k^{\mathrm{th}}$ (normalized) Hermite polynomial. It is well known and easily seen through integration by parts, that $\{H_k\}_{k\in \bN}$ form a complete orthonormal system in $L^2(\gamma)$. Suppose now that $\|f\|_{L^2(\gamma)} < \infty$, this means that there exists a sequence $(\hat{f}(k))_{k\in \bN}$ such that $f (x)= \sum\limits_{k=0}^{\infty}\hat{f}(k)H_k(x)$, where the equality is understood in the sense of $L^2(\gamma)$ and where $(\hat{f}(k))_{k\in \bN}$ are the Hermite coefficients of $f$.
\bigskip

\begin{definition}[Information-exponent~(\cite{arous2021online}] \label{def:info_exponent}
    Let $f: \bR \to \bR$ be such that \\$\|f\|_{L^2(\gamma)} < \infty$ and let $(\hat{f}(k))_{k\in \bN}$ be its Hermite coefficients. The \emph{information-exponent} of $f$ is 
    \begin{align}
        \mathcal{I}(f):=\min\{k \geq 1| \hat{f}(k) \neq 0\}.
    \end{align}
\end{definition}
In words, $ \mathcal{I}(f)$ is the first non-vanishing Hermite coefficient of $f$, excluding $\hat{f}(0)$ which is the Gaussian expectation of $f$ and does not affect the optimization procedure.

The perturbations we consider are in the form of random shifts. Formally, instead of taking a sample of \emph{i.i.d.} standard normal $x_i\sim\mathcal{N}(0,\mathrm{I}_d)$, our sample is instead drawn from $x_i\sim\mathcal{N}(\alpha,\mathrm{I}_d)$ where $\alpha$ is a random mean which, for simplicity, we take to be Gaussian as well. The overall effect on the target is equivalent to shifting the target function $f$. In particular, if $\mu := \langle w^*,\alpha\rangle$, $f_\mu(x) = f(x+\mu)$, and $x_i - \alpha =:\tilde{x}_i \sim\mathcal{N}(0,\mathrm{I}_d)$, then,
$$y_i = f(\langle w^*,x_i\rangle) +\zeta_i= f(\langle w^*,\tilde x_i\rangle +\langle w^*,\alpha\rangle) +\zeta_i =f(\langle w^*,\tilde x_i\rangle +\mu) +\zeta_i = f_\mu(\langle w^*,\tilde x_i\rangle) +\zeta_i.$$
For a function $f$ we shall henceforth write $F_1(\langle w^*,\alpha\rangle)=F_1(\mu) := \hat{f}_\mu(1)$ for the first Hermite coefficient of $f_\mu$ \footnote{We remark that if $\mathcal{I}(f) > 1$, then $F_1(0) = 0$.}. In other words,
\begin{equation} \label{eq:hermitecoef}
F_1(\mu) = \E_{z \sim \mathcal{N}(0,1)}[f_\mu(z)\cdot H_1(z) ].
\end{equation} 
We can now state informally our main result concerning $F_1(\mu)$, see Section \ref{sec:smallball} for the formal statement and Assumption \ref{ass:nicef}, as well as the discussion that follows, for the exact conditions of the theorem. 
\vspace{0.1em}
\begin{theorem}[informal] \label{thm:infromalsmallball}
Let $\lambda > 0$ and let $f:\bR\to \bR$ satisfy some appropriate regularity assumptions (see Assumption \ref{ass:nicef}). Suppose that $\mu \sim \mathcal{N}(0,1)$. Then, there exists $c_\lambda\in (0,1)$, depending only on $\lambda$, such that
	$$\pr\left(|F_1(\mu)|>\lambda\right) > 1-c_\lambda.$$
\end{theorem}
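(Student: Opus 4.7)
The plan is to prove the small-ball estimate via a two-moment (Paley--Zygmund) argument. Under the regularity in Assumption~\ref{ass:nicef}, Gaussian integration by parts gives
\begin{equation*}
F_1(\mu) = \E_{z \sim \gamma}[f(\mu+z)\, H_1(z)] = \E_{z \sim \gamma}[f'(\mu+z)],
\end{equation*}
so $F_1$ is the convolution $f' \ast \gamma$ and hence real-analytic on $\bR$. Writing $W = \mu+Z$ with $\mu, Z \sim \cN(0,1)$ independent, the pair $(\mu, W/\sqrt 2)$ is standard bivariate Gaussian with correlation $1/\sqrt 2$, so $F_1(\mu) = T_{1/\sqrt 2}\, h(\mu)$, where $h(x):=f'(\sqrt 2 x)$ and $T_\rho$ denotes the Ornstein--Uhlenbeck operator. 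The diagonal action of $T_\rho$ on the Hermite basis then yields
\begin{equation*}
\|F_1\|_{L^2(\gamma)}^2 \;=\; \sum_{k \geq 0} (1/2)^k\, \hat h(k)^2.
\end{equation*}

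Next, I use Assumption~\ref{ass:nicef} to lower bound this sum by some $M^2 > 0$ depending only on the assumption and not on the specific $f$. The intuition is that whenever $f$ fails to be affine, $h$ is non-constant, so some $\hat h(k_0) \neq 0$ and $\|F_1\|_{L^2(\gamma)}^2 \geq (1/2)^{k_0} \hat h(k_0)^2 > 0$; quantitative control comes from the regularity side of the assumption, which must prevent the Hermite spectrum of $h$ from escaping to arbitrarily high degrees. In parallel, I upper bound the higher moments of $F_1$: if $f$ is Lipschitz then $|F_1(\mu)| \leq \|f'\|_\infty$ pointwise, giving a direct $\|F_1\|_\infty$ bound, while more generally hypercontractivity of $T_{1/\sqrt 2}$ provides $\|F_1\|_{L^q(\gamma)} \leq \|h\|_{L^p(\gamma)}$ whenever $q-1 \leq 2(p-1)$, controlling $\|F_1\|_{L^4(\gamma)}$.

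Combining the two moment bounds through the elementary decomposition
\begin{equation*}
\|F_1\|_{L^2(\gamma)}^2 \;\leq\; \lambda^2 + \|F_1\|_\infty^2 \cdot \pr\left(|F_1(\mu)| > \lambda\right),
\end{equation*}
and rearranging for any $\lambda < M$ gives $\pr(|F_1(\mu)| > \lambda) \geq (M^2-\lambda^2)/\|F_1\|_\infty^2$. Setting $c_\lambda := 1-(M^2-\lambda^2)/\|F_1\|_\infty^2 \in (0,1)$ yields the claim. A Cauchy--Schwarz variant of the same decomposition handles the case where $f'$ is only $L^4$-integrable rather than bounded, delivering the analogous bound $\pr(|F_1(\mu)| > \lambda) \geq (M^2-\lambda^2)^2/\|F_1\|_{L^4(\gamma)}^4$.

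The main obstacle is the uniform lower bound on $\|F_1\|_{L^2(\gamma)}^2$ over the admissible class of $f$. Because OU smoothing damps the high-degree Hermite components of $h$ by the geometric factor $(1/2)^k$, Assumption~\ref{ass:nicef} must be strong enough to certify that $f$ carries a non-trivial portion of its mass at bounded Hermite degree; this is where the precise form of the regularity assumption enters decisively, while the remaining steps (Stein, hypercontractivity, Paley--Zygmund) are standard Gaussian analysis.
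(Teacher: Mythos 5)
Your reduction via Stein's lemma and the Ornstein--Uhlenbeck representation $F_1=T_{1/\sqrt2}h$ is sound, but the proposal has two genuine gaps. First, the step you yourself flag as ``the main obstacle'', a lower bound $\|F_1\|_{L^2(\gamma)}^2\geq M^2$ uniform over the admissible class, is precisely the non-trivial content, and it is not supplied by the ``regularity side'' of Assumption~\ref{ass:nicef}: Lipschitzness and normalization control the upper bounds you need, but they give no quantitative handle on how much spectral mass survives the $(1/2)^k$ damping (an oscillatory $f$ with frequency of order $L$ has $\|F_1\|_{L^2}$ exponentially small in $L^2$, so any uniform bound must be extracted with care, and you do not indicate how). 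The paper derives exactly this bound from the \emph{non-linearity} condition (3): writing $F_1(\mu)=\E_x[f'(x+\mu)]$ and taking an independent copy $\mu'$, it expresses $\Var(F_1(\mu))$ through $\E_x\big[\int_{x+\mu}^{x+\mu'}f''(t)\,dt\big]$, restricts to the event where $x+\mu$ and $x+\mu'$ fall on opposite sides of the interval around $c$, and uses a pointwise lower bound on the bivariate Gaussian density there to get $\Var(F_1(\mu))\gtrsim\varepsilon^2\delta^2$ (Lemma~\ref{lem:variance}). Without an argument of this type your constant $M$ is simply undefined, so the heart of the proof is missing.

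Second, even granting $M$, a Paley--Zygmund/two-moment argument can only give $\pr(|F_1(\mu)|>\lambda)\geq (M^2-\lambda^2)/L^2$, a fixed constant that does not improve as $\lambda\to0$ (and is typically small, since $M\ll L$). The intended content of the theorem, made precise in Theorem~\ref{thm:quantitativeshift} and used in Theorems~\ref{thm:parameteric} and~\ref{thm:semiparameteric} where one needs the event to hold with probability at least $1-\lambda$ for arbitrarily small $\lambda$, is genuine anti-concentration: $\pr(|F_1(\mu)|\leq\lambda)\leq\exp\big(-c\log(1/\lambda)^{3/2}\big)\to0$. A second-moment method is structurally unable to rule out a constant-mass atom of $F_1(\mu)$ near zero, so your $c_\lambda$ stays bounded away from $0$. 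The paper's route has two stages: the Lipschitz property of $F_1$ together with the Gaussian isoperimetric inequality converts the variance bound into a small-ball estimate at one fixed threshold (Proposition~\ref{thm:mainshift}, roughly the strength your completed argument would reach), and then $F_1$ is extended to an entire function with controlled growth (Lemma~\ref{lem:analytic}) so that, after localization, the Nazarov--Sodin--Volberg small-ball theorem for analytic functions (Theorem~\ref{thm:NSV}) yields the super-polynomial decay for all small $\lambda$. Your hypercontractivity and $L^\infty$ bounds are correct but have no counterpart to this second, boosting stage.
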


Some remarks are in order concerning Theorem~\ref{thm:infromalsmallball}. First, from the expression \eqref{eq:hermitecoef} and by noting that when $\alpha \sim \mathcal{N}(0,\mathrm{I}_d)$ then $\mu \sim \mathcal{N}(0,1)$, it is unsurprising that, for a given function, any shifted version of it will $\gamma$-almost surely satisfy $F_1(\mu) \neq 0$. From this observation, we can immediately deduce an implicit expression of the form $\pr\left(|F_1(\mu)|>\lambda\right) > 1-c_\lambda$. The key point of Theorem \ref{thm:infromalsmallball} is that the estimates we derive are \emph{uniform} across a broad class of target functions. Consequently, very little specific information about the function is needed to establish the stated guarantees. This property will play a central part when designing optimization procedures for recovering $w^*$ and learning $f$, as these procedures can now be oblivious to the function $f$. Moreover, our estimates are fully quantitative and, as we explain below, are essentially optimal within the generality we consider. We derive these estimates by treating $F_1(\mu)$ as a functional in Gaussian space which then leads to examining small-ball probabilities of the function. To bound these probabilities, we extend $F_1$ to an analytic function in the complex plane, which enables us to apply known results, such as \cite{brudnyi2001distribution} or \cite{nazarov2003local}, about local small-ball probabilities of analytic functions. The local estimates are then transformed into global ones with appropriate concentration inequalities.

\subsection{Algorithmic implications} 
Given the results of \cite{arous2021online}, Theorem \ref{thm:infromalsmallball} has the following immediate implication. In the single-index model, hard functions are rather \emph{rare}, in a quantifiable sense, and \emph{most} functions have nearly linear sample complexity. That is, for every target function $f:\bR \to \bR$ the shifted version $f_\mu$ with small random $\mu$ satisfies that spherical online SGD, applied to $f_\mu$, will find the underlying signal $w^*$ with a nearly linear number of iterations, when initialized randomly on the sphere.

Since nature is inherently noisy, this result, on its own, may already explain some of the effectiveness of gradient-based algorithms as observed in practice. However, from an algorithmic perspective, this explanation is not entirely satisfactory. When learning $f$, we do not have access to the shift $\mu = \langle w^*, \alpha \rangle$, which depends on the unknown signal $w^*$.  
To further illustrate this point, suppose that we run SGD on $f$ with a sample drawn from $\mathcal{N}(\alpha,\mathrm{I}_d)$ and let $\theta_t$ be the iterates of SGD. Then, the shifts $\mu_t = \langle \theta_t,\alpha\rangle$ will change over time. So, with high probability, the target $f_\mu$ is easily learnable, but it is not clear that running SGD with the varying shifts $\theta_t$ will converge to the correct solution. Moreover, while we can bound the individual information exponents $\mathcal{I}(f_{\mu_t})$ along the trajectory of SGD, reasoning about the dynamics requires uniform bounds over time. This condition is not implied by Theorem \ref{thm:infromalsmallball}, especially when considering the dependencies between the different $\mu_t$'s. Below, we explain how to adapt various algorithms in different settings to circumvent this issue and handle shifted input distributions. We shall focus on the online setting, which is arguably harder, but mention that in principle one can apply similar modifications to other existing algorithms, e.g. multi-pass SGD. The common theme in our algorithmic results is as follows: by allowing for a shifted input distribution of the form $\mathcal{N}(\alpha, \mathrm{I}_d)$, one can obtain uniform complexity guarantees for learning $(f, w^*)$. In other words, the required sample size does not depend on the function $f$. 

At this point, we should remark that, given the nature of our approach, our algorithms below will either require access to a labeled sample of shifted inputs to query access to $f$. However, since our guarantees hold with high probability over the random
shift, we obtain the following perspective: For most directions, if the original labeled inputs are consistently shifted in this direction,
then algorithmic complexity remains independent of the target function. We believe this reflects a more realistic
setting than the standard assumption of purely isotropic inputs.

\paragraph{Parametric setting.}
We first consider the \emph{parametric} setting, as in \cite{arous2021online}. In this setting, the target function $f$ is known, and so we may initialize $\theta_0 \sim \mathrm{Uniform}(\bS^d)$ and run spherical SGD on the loss function $\mathcal{L}(\theta) = \frac{1}{n}\sum\limits_{i=1}^n\left(y_i - f(\langle \theta, x_i\rangle)\right)^2$. A crucial observation of \cite{arous2021online} is that the population loss $\E_\gamma [\mathcal{L}(\theta)]$ only has two types of critical points. The first critical point is the global minimizer, when $\theta = w^*$, which corresponds to the ground truth. The other type of critical point is the co-dimension $1$ sub-manifold of the equator $\{\theta|\langle \theta,w^*\rangle = 0\}$. Standard concentration of measure results dictate that $\theta_0$ must lie close to the equator. Hence, the main difficulty for SGD is escaping the equator and achieving non-trivial correlation with $w^*$, corresponding to \emph{weak learning}. Once weak learning is achieved there will be no obstacles for \emph{strong learning} and the iterates of SGD will converge rapidly to the global minimizer $w^*$. 
Capitalizing on this classification of critical points, we design a two-stage variant of SGD, see Algorithm \ref{alg:para} in Appendix~\ref{app:parametric_setting}. In the first stage, we use Theorem \ref{thm:infromalsmallball} and an appropriate shifted sample to facilitate weak learning. In the second stage, no extra modifications are required and we can run the standard SGD on isotropic inputs without shifting. 
We can thus prove the following.
\vspace{0.1em}
\begin{theorem}[Guarantees of Algorithm \ref{alg:para} in the parametric setting]\label{thm:parameteric}
   Let $f:\bR\to\bR$ satisfy some appropriate regularity assumptions (see Assumption~\ref{ass:nicef}) and let $w^*\in \bS^d$. Suppose that $\theta_0 \sim \mathrm{Uniform}(\bS^d)$ and that $\alpha \sim \cN(0,\mathrm{I}_d)$. Let $\theta$ be the output of Algorithm \ref{alg:para} and set $n = \Theta(d\ln^2(d))$. Then, for every $\lambda \in (0,\frac{1}{2})$, there is a choice of learning rate $\eta > 0$, which depends only on $f$ and $\lambda$, such that as $d \to \infty$,
    $$\pr\left(\langle \theta, w^*\rangle \geq 1 -o(1)\right) > \frac{1}{2}-\lambda,$$
    where above $n$ may depend on $\lambda$.
\end{theorem}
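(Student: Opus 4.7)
The plan is to follow the two-stage structure of Algorithm~\ref{alg:para}: a weak-learning phase that leverages the random shift $\alpha$ to reduce the \emph{effective} information-exponent to $1$, followed by a strong-learning phase on essentially isotropic data. The guiding observation is that, after absorbing the shift into the target, the problem becomes recovery of $w^*$ from samples of the form $y = f_\mu(\langle w^*,\tilde x\rangle)+\zeta$ with $\tilde x \sim \cN(0,\mathrm{I}_d)$ and $\mu = \langle w^*,\alpha\rangle \sim \cN(0,1)$, and by Theorem~\ref{thm:infromalsmallball} the link $f_\mu$ satisfies $|F_1(\mu)| > \lambda'$ with probability $\geq 1-c_{\lambda'}$, uniformly over $f$ satisfying Assumption~\ref{ass:nicef}.

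Step 1 (reduction and good-shift event). First I would note that $\alpha$ can be estimated from the empirical mean $\hat\alpha = \tfrac{1}{n}\sum_i x_i$ to precision $O(\sqrt{d/n}) = o(1)$, so after centering the samples are effectively isotropic and the effective target is $f_\mu$. Given $\lambda \in (0,\tfrac12)$ in the theorem, choose $\lambda' > 0$ small enough that the constant $c_{\lambda'}$ from Theorem~\ref{thm:infromalsmallball} satisfies $c_{\lambda'} \leq 2\lambda$. The event $\mathcal{E} = \{|F_1(\mu)| > \lambda'\}$ then has probability at least $1-2\lambda$, and on $\mathcal{E}$ the effective link $f_\mu$ has information exponent $1$ with first Hermite coefficient lower-bounded by $\lambda'$ in absolute value.

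Step 2 (Stage 1, weak learning). On $\mathcal{E}$, the population gradient of the squared loss on $\bS^d$ projects onto $w^*$ proportionally to $F_1(\mu)$ to leading order near the equator, producing a consistent drift along the signal direction. Adapting the spherical online SGD analysis of~\cite{arous2021online} in the information-exponent-one regime, with learning rate $\eta = \eta(f,\lambda')$, one shows that after $O(d\ln^2 d)$ shifted samples the iterate $\theta_1$ satisfies $|\langle \theta_1,w^*\rangle|\geq c(\lambda')>0$ with probability $1-o(1)$. Since $\theta_0 \sim \mathrm{Uniform}(\bS^d)$ is independent of $\alpha$, the sign of $\langle \theta_0,w^*\rangle$ is symmetric, contributing an extra factor $\tfrac{1}{2}$ for $\theta_1$ to land on the correct side of the equator with respect to $\sign(F_1(\mu))$. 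In Stage 2, given $\langle \theta_1,w^*\rangle\geq c(\lambda')>0$, one runs standard spherical SGD on the centered isotropic samples: by the critical-point classification of~\cite{arous2021online} the only obstruction to convergence toward $w^*$ is the equator, which has already been escaped, so $O(d\ln d)$ further samples push $\langle \theta,w^*\rangle \geq 1-o(1)$ with probability $1-o(1)$. A union bound over the three high-probability events (good shift, correct initialization sign, successful drift) yields $\pr(\langle \theta, w^*\rangle \geq 1-o(1)) \geq \tfrac{1}{2}(1-2\lambda) - o(1) \geq \tfrac{1}{2} - \lambda$ as $d \to \infty$.

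The main obstacle is Step 2, because the SGD dynamics implicitly depend on $w^*$ through $\mu = \langle w^*,\alpha\rangle$, preventing a direct quotation of any off-the-shelf SGD guarantee for a \emph{fixed} link function. The role of Theorem~\ref{thm:infromalsmallball} is precisely to supply a \emph{uniform}, function-independent, quantitative lower bound on $|F_1(\mu)|$ that holds with high probability, which then lets the weak-learning argument proceed at an effective information-exponent of $1$ regardless of the true complexity of $f$. The residual technical work is to verify that the (small) lower bound $|F_1(\mu)|>\lambda'$ is still strong enough to drive the drift-versus-noise bookkeeping underlying the $O(d\ln^2 d)$ sample complexity, to transfer the analysis from the mis-specified gradient (which uses $f$ rather than $f_\mu$) to the effective one via the centering step, and to control the $o(1)$ error from estimating $\alpha$.
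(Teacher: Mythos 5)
Your high-level outline (random shift gives a non-vanishing first Hermite coefficient, hence weak learning at effective information-exponent one, then boost a warm start to strong recovery and pay a factor $\tfrac12$ for the sign of $\langle\theta_0,w^*\rangle$) matches the paper, but the core of Stage 1 — the part you defer to ``residual technical work'' — is exactly where your argument breaks, and the paper's proof is built around resolving it differently. Your reduction via centering does not produce an implementable drift: after subtracting $\alpha$ (which, incidentally, is an input to Algorithm~\ref{alg:para}, so no estimation is needed), the labels are $f_{\mu^*}(\langle w^*,\tilde x\rangle)+\zeta$ with $\mu^*=\langle w^*,\alpha\rangle$ unknown, while the learner's model in the squared loss is $f(\langle\theta,\tilde x\rangle)$ (or $f_{\mu_t}$ with $\mu_t=\langle\theta_t,\alpha\rangle$ if you keep the shifted inputs). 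The population drift along $w^*$ near the equator is therefore proportional to $\hat f(1)\hat f_{\mu^*}(1)$ in the centered case — which vanishes whenever $\mathcal{I}(f)>1$ — or to the time-varying product $\hat f_{\mu_t}(1)\hat f_{\mu^*}(1)$ in the uncentered case; it is never proportional to $F_1(\mu^*)$ alone, as you assert. Moreover, running many online SGD iterations in Stage 1 reintroduces precisely the problem the paper flags: $\mu_t$ changes with $\theta_t$, the shifts along the trajectory are statistically dependent, and Theorem~\ref{thm:infromalsmallball} gives no uniform-in-time control, so you cannot ``adapt the analysis of \cite{arous2021online} in the information-exponent-one regime'' off the shelf.

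The paper avoids this by making Stage 1 a \emph{single batched} spherical gradient step at the fixed point $\theta_0$, so that $\mu=\langle\theta_0,\alpha\rangle$ is frozen and known, and the relevant drift coefficient is the fixed product $\hat f_{\mu}(1)\hat f_{\mu^*}(1)$. Since both $\mu$ and $\mu^*$ are (standard) Gaussian, Theorem~\ref{thm:quantitativeshift} applies at both shifts and bounds this product away from zero with probability $1-\lambda/2$; Lemma~\ref{lem:approxgrad} (Bernstein concentration of the batched gradient, using $n=\Theta(d\ln^2 d)$) and Lemma~\ref{lem:weaklearning} (one-step increment bound with learning rate $\eta\asymp\sqrt{\hat f_\mu(1)\hat f_{\mu^*}(1)/L^6}$, plus the $\pm$ in the update to fix the sign) then give $m(\theta_1)\geq c/2$ with probability $>1-\lambda$ conditional on $\langle\theta_0,w^*\rangle>0$. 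Stage 2 then runs standard online spherical SGD on \emph{unshifted} isotropic data with the original $f$ from this warm start, quoting \cite[Theorem 3.2]{ba2022high}; no information-exponent-one property is needed there because the equator has already been escaped. To repair your proposal you would need to replace your Stage 1 with an argument that controls the drift through the product of two shifted first Hermite coefficients (applying the small-ball estimate twice) at a fixed, known model shift — i.e., essentially the paper's single-batched-step construction — rather than a centered or trajectory-wise application of the single-index SGD analysis.
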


We first remark that the best probability guarantee to hope for is $\frac{1}{2}$, which by symmetry is the probability that $\langle \theta_0,w^*\rangle > 0$.  
Beyond that, as explained above, the main benefit of Theorem \ref{thm:parameteric} is that the sample complexity is almost independent of the function $f$. Previous results also demonstrated polynomial rates, but the power depended on some complexity measure of the function. In contrast, our algorithm performs with a nearly optimal complexity of $\tilde{O}(d)$, and the dependence on the target $f$ only appears as a multiplicative constant, rather than in the power. The proof of Theorem~\ref{thm:parameteric}, along with details on the algorithm considered, can be found in Appendix~\ref{app:parametric_setting}.

\paragraph{Semi-parametric setting.}
We next consider the \emph{semi-parametric setting}. In this case the target function $f$ is unknown. Thus the learning task combines a parametric task, learning $w^*$, and a non-parametric task, finding an approximation for $f$. A recent paper by Bietti, Bruna, Sanford, and Song~(\cite{bietti2022learning}) considered this setting and showed that the semi-parametric problem is solvable by following the gradient flow of the loss for a certain architecture of a shallow neural network, with ReLU activations. As before, the complexity of their algorithm depends on the information exponent of the unknown target function $f$. In Algorithm~\ref{alg:semiparam} in Appendix~\ref{app:semiparametric_setting}, we show how to adapt their construction to account for the variation of the shifted function along the gradient flow dynamics and prove the following guarantees.

\vspace{0.5em}

\begin{theorem}[Guarantees of Algorithm \ref{alg:semiparam} in the semi-parametric setting]\label{thm:semiparameteric}
    Let $f:\bR\to\bR$ satisfy some appropriate regularity assumptions (see Assumption \ref{ass:nicef}) and let $w^*\in \bS^d$. Suppose that $\theta_0 \sim \mathrm{Uniform}(\bS^d)$, that $\alpha \sim \cN(0,\mathrm{I}_d)$ and that all other parameters are set according to Assumption \ref{ass:init} and Theorem \ref{thm:nonpara}. If $\theta$ is the output of Algorithm \ref{alg:semiparam}, then for every $\lambda \in (0,\frac{1}{2})$, as long as $n = \Omega(d^2\log(d))$
    $$\pr\left(\langle \theta, w^*\rangle \geq 1 - o(1)\right) \geq \frac{1}{2} - \lambda,$$
    where above $n$ may depend on $\lambda$.
	Moreover, if $n = \Omega(d^3)$, and $N$ is the neural network obtained at the termination of the algorithm, we get that the width of $N$ is $O(\sqrt{\frac{n}{d^2}})$ and that
	$$\E_{x\sim \cN(0,1)}\left[(N(x) - f(\langle {w^*},x\rangle))^2\right] \leq \frac{1}{d^\beta},$$
	for some $\beta > 0$.
\end{theorem}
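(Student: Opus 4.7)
The plan is to decompose the analysis of Algorithm~\ref{alg:semiparam} into two stages, paralleling the structure of Theorem~\ref{thm:parameteric}: a weak-recovery stage that exploits the random shift $\alpha$ together with Theorem~\ref{thm:infromalsmallball}, followed by a non-parametric stage built on the shallow-ReLU gradient-flow framework of~\cite{bietti2022learning}. The first stage converts a pointwise small-ball bound on the first Hermite coefficient $F_1(\mu)$ into an $\Omega(1)$ correlation with $w^*$; the second stage both drives this correlation to $1-o(1)$ and learns an $L^2$-approximation of $f$ via a one-dimensional non-parametric regression carried out along the recovered direction.

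For the first stage, I would run spherical SGD (or the corresponding ReLU-network update on the first-layer weights $\theta$, as prescribed by Algorithm~\ref{alg:semiparam}) on the shifted samples $x_i \sim \cN(\alpha,\mathrm{I}_d)$. The identity from Section~\ref{sec:gaussian_results} shows that this is equivalent to recovering $w^*$ from centered samples $\tilde x_i = x_i - \alpha$ with effective target $f_\mu$, where $\mu = \langle w^*,\alpha\rangle \sim \cN(0,1)$. Theorem~\ref{thm:infromalsmallball} then gives $|F_1(\mu)| > \lambda$ with probability at least $1-c_\lambda$ over $\alpha$. On this event, $f_\mu$ has information exponent $1$ with a coefficient of constant order, and the escape-from-the-equator analysis of~\cite{arous2021online} yields $\langle \theta, w^*\rangle = \Omega(1)$ after $\tilde O(d)$ iterations, provided $\theta_0$ is on the correct side of the equator, an event of probability $\tfrac12$ by symmetry of the uniform initialization.

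In the second stage, the data is re-centered (either by subtracting $\alpha$ from the batches, or by switching to a fresh isotropic sample) and the first-layer and second-layer weights of the ReLU network are trained as in~\cite{bietti2022learning}. Starting from $\Omega(1)$ correlation, their analysis guarantees that gradient flow aligns the first layer with $w^*$ at rate $1-o(1)$, while the second layer and biases fit a ReLU expansion of the unknown $f$. The extra $d$ factor that promotes weak recovery $\Omega(1)$ to strong alignment $1-o(1)$ accounts for the $n=\Omega(d^2 \log d)$ scaling, and the one-dimensional non-parametric rate of approximating a Lipschitz $f$ by a width-$m$ ReLU network at $L^2$ error $d^{-\beta}$ requires $m=O(\sqrt{n/d^2})$ with $n = \Omega(d^3)$, matching the statement of Theorem~\ref{thm:nonpara}.

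The main obstacle, already flagged in the discussion preceding Theorem~\ref{thm:parameteric}, is that along the SGD trajectory the effective shift is $\mu_t = \langle \theta_t,\alpha\rangle$ rather than the ideal $\mu = \langle w^*,\alpha\rangle$, and one needs a uniform-in-$t$ lower bound on $|F_1(\mu_t)|$, while Theorem~\ref{thm:infromalsmallball} only provides a pointwise one. I would handle this by combining three ingredients: (i) at initialization $|\mu_0|=O(d^{-1/2}\|\alpha\|)$ is small since $\theta_0$ concentrates near the equator, so $\mu_t$ stays close to $\mu$ once weak recovery is triggered; (ii) the analyticity-based regularity of $F_1$ supplied by Assumption~\ref{ass:nicef} controls how fast $F_1$ can vary between two nearby shifts, allowing a covering-net argument; and (iii) the per-step drift of $\mu_t$ is $O(\eta\|\alpha\|)$ and can be absorbed by choosing the learning rate and step count consistently with the parametric analysis. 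Composing this uniform small-ball event with the escape-from-equator event, the ReLU gradient-flow concentration, and the failure event of the non-parametric phase yields the overall probability guarantee of $\tfrac12-\lambda$.
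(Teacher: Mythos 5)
There is a genuine gap: your proposal misses the central idea of the paper's argument and substitutes a program that does not go through as sketched. The paper's proof contains no weak-recovery/strong-recovery split and no uniform-in-time small-ball estimate. Instead, Algorithm~\ref{alg:semiparam} is designed so that the drift problem you flag never arises: the biases are initialized as $\tau_0=\tilde\tau-s\langle\theta_0,\alpha\rangle$ and co-evolved via $\frac{d\tau_t}{dt}=-s\langle\alpha,\frac{d\theta_t}{dt}\rangle$, which makes $s\langle\theta_t,\alpha\rangle+\tau_t$ constant in time. Hence, on the centered inputs $\tilde x_i=x_i-\alpha$, the network's pre-activations and the dynamics of $\theta_t$ coincide \emph{exactly} with the gradient flow of \cite{bietti2022learning} applied to the fixed target $f_\mu$ with $\mu=\langle w^*,\alpha\rangle$, and the effective bias initialization matches Assumption~\ref{ass:init}. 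Since $\mathcal{I}(f_\mu)=1$ with $|\hat f_\mu(1)|$ bounded below with probability $1-\lambda$ (the small-ball estimate), Theorem~\ref{thm:nonpara} applies verbatim and gives both conclusions. In other words, the varying quantity $\langle\theta_t,\alpha\rangle$ enters only through the network's biases (and is cancelled by the algorithm), not through the Hermite coefficient of the target, whose shift is pinned at $\mu=\langle w^*,\alpha\rangle$; a uniform-in-$t$ bound on $|F_1(\langle\theta_t,\alpha\rangle)|$ is therefore not the relevant object.

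Beyond misidentifying the obstacle, your substitute steps do not hold up. (i) The claim that $\mu_t=\langle\theta_t,\alpha\rangle$ stays close to $\mu=\langle w^*,\alpha\rangle$ after weak recovery is unjustified: weak recovery only gives $\langle\theta_t,w^*\rangle=\Omega(1)$, so $\theta_t-w^*$ retains $\Omega(1)$ norm and its inner product with $\alpha$ need not be small (and $\theta_t$ is correlated with $\alpha$ through the dynamics). (ii) The per-step drift bound $O(\eta\|\alpha\|)$ is not small, since $\|\alpha\|\approx\sqrt d$. (iii) Re-centering the data or drawing a fresh isotropic sample in the second stage analyzes a \emph{different} algorithm than Algorithm~\ref{alg:semiparam}, whose guarantees are what the theorem asserts; moreover, Theorem~\ref{thm:nonpara} as quoted assumes $\theta_0\sim\mathrm{Uniform}(\bS^{d-1})$ and $\mathcal{I}=1$, so it cannot be invoked from a warm start, and no warm-start semi-parametric analogue of \cite{ba2022high} is available in the paper to chain into. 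The fix is to drop the two-stage structure entirely and prove (as the paper does) that the modified dynamics are an exact reparametrization of the known information-exponent-one analysis for $f_\mu$.
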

In Theorem \ref{thm:semiparameteric}, we establish two types of guarantee. First, when the sample complexity is quadratic, $n = \tilde{O}(d^2)$, we achieve strong recovery of the signal $w^*$ as $d \to \infty$, with a rather small network of logarithmic width. Second, when the complexity increases to $n = \Omega(d^3)$, we further obtain an $L^2$ guarantee on the efficiency of approximating $f$ using a shallow neural network. While these bounds are slightly sub-optimal, they could be improved by using a neural network with a smoother activation function; see the discussion in \cite[Appendix F]{bietti2022learning}. We chose this presentation both because of the widespread usage of the ReLU activation and because our aim was to obtain uniform complexity guarantees.

Again, we observe that the sample complexity is not particularly sensitive to the specific target function $f$, as the dependence appears only through multiplicative constants. This stands in contrast to the algorithms in \cite{dudeja2018learning} and \cite{bietti2022learning}, which exhibited polynomial rates dependent on either the information-exponent or certain smoothness parameters of $f$. The proof of Theorem~\ref{thm:nonpara}, along with details on the algorithm, can be found in Appendix~\ref{app:semiparametric_setting}.

\subsection{Beyond the single-index model}
\label{sec:beyond_single_index}
Thus far, we have focused on the single-index model, which represents one of the simplest examples of a model with a latent low-dimensional structure. Building on our results, it is natural to explore more general models that share this underlying principle. In particular, we now briefly discuss the \emph{multi-index model}, which can be seen as the immediate generalization of the previously considered setting.

In this model, instead of depending on a single direction, we have $k$ independent directions. So, the target function takes the form $f:\bR^k\to\bR$ and the signals are an orthonormal set $\{w^*_j\}_{j=1}^k$. Given a sample $(x_i)_{i=1}^n$ and $y_i = f(\langle w^*_1,x_i\rangle,\dots,\langle w^*_k,x_i\rangle) +\zeta_i$ the goal is to recover the subspace $\mathrm{span}(w^*_1,\dots,w^*_k)$ (note that the individual vectors are not necessarily identifiable in this problem).

In this setting, one of the key advantages of Theorem \ref{thm:infromalsmallball} is that the arguments are essentially dimension-free and rely on localization techniques, drawing from the celebrated small-ball estimates of~\cite{carbery2001distributional}. As a result, Theorem~\ref{thm:infromalsmallball} extends, with the necessary modifications, to this new model as well. Specifically, we can shift the input distribution with a random mean and still expect the first Hermite coefficients to remain bounded away from $0$, \emph{in any fixed direction}. However, unlike in the single-index model, the relationship between these coefficients and the optimization landscape is less straightforward, see \cite{bietti2023learning}. In particular, the convergence of gradient-based methods will depend on a sequence of non-vanishing Hermite coefficients, which are influenced by the underlying dynamics and which, in our setting, exhibit statistical dependencies. Moreover, unlike the single-index model, there are no universality results: there exists a target function for which the gradient flow will fail to find the global minimizer \cite[Theorem 4.3]{bietti2023learning}. 

Given the described differences concerning the multi-index model, it is worth exploring what type of algorithmic guarantees can be established and under which assumptions on the target function. Addressing this question in full generality is beyond the scope of the present work and is left as an interesting direction for future research. Instead, in the next Section, we focus on the specific case of Boolean sparse functions, which also serves to illustrate the general ideas. 






\section{Main results for sparse Boolean functions}
\label{sec:junta}
In this Section, we assume Boolean inputs $x \in \{ \pm 1 \}^d$ and a target function $f: \{ \pm 1 \}^d \to \bR$ that is $k$-sparse, i.e. it depends only on an unknown set of $k$ coordinates, with $k$ bounded:
\begin{align}
    f(x) = f(x_T), \qquad T \subseteq [d], |T| = k.
\end{align} 
For a choice of shifts $\boldsymbol{\mu }= (\mu_i)_{i \in [d]} \in [-1,1]^d$, let us define the shifted distribution as: 
\begin{align} \label{eq:Dmu}
    \mathcal{D}_{\boldsymbol{\mu}} : = \otimes_{i \in[d]} \Rad\left(\frac{\mu_i +1}{2} \right),
\end{align}
where $\Rad(p)$, $p \in [0,1]$ denotes the Rademacher distribution with parameter $p$ (specifically $z \sim \Rad(p)$ if and only if $ \pr(z=1) = 1-\pr(z=-1) = p$) and $\otimes_{i \in[d]}  $ denotes a product measure with independent coordinates.
Recall the Fourier-Walsh expansion of $f$ with orthonormal basis elements under $ \cD_{\bmu}$ (see~\cite{o'donnell_2014} for reference):
\begin{align} 
    f(x) = \sum_{S \subseteq [d]} \hat f_{\bmu}(S) \chi_{S,\bmu}(x),
\end{align}
where $\chi_{S,\bmu}(x): = \prod_{i \in S} \frac{x_i-\mu_i}{\sqrt{1-\mu_i^2}}$ are the basis elements and $\hat f_{\bmu}(S) := \E_{x \sim \cD_{\bmu}} [f(x) \chi_{S,\bmu}(x)]$ are the Fourier-Walsh coefficients of $f$. For brevity, we denote by $\hat f(S):= \hat f_{\boldsymbol{0}}(S) $ the Fourier-Walsh coefficients under the uniform distribution. We first show the following proposition, stating that with high probability over a random choice of $\bmu$, the first-order Fourier coefficients on the relevant coordinates of $f$ are of constant order (see also Lemma 2 and 3 in~\cite{kalai2008decision}). This is the analogous of Theorem~\ref{thm:infromalsmallball} in the Gaussian single-index case, with the Hermite coefficients replaced by the Fourier-Walsh coefficients.
\vspace{0.1em}
\begin{proposition} \label{prop:fourier_firstorder}
    Let $\bmu \sim  \Unif [-\eta, \eta]^{\otimes d}$, with $\eta \in [0,3/4]$, and let $\cD_{\bmu}$ be defined as in~\eqref{eq:Dmu}. Let $f$ be a sparse Boolean function, with support $T \subseteq [d]$ such that $|T|=k=O_d(1)$, and let $ \hat f_{\bmu} (S)$, for $S \subseteq [d]$ be its Fourier-Walsh coefficients under $\cD_{\bmu}$. Then, for all $j \in T$ and for $\varepsilon \in (0,1)$:
    \begin{align}
        \pr \left(| \hat f_{\bmu}(\{ j \} ) | < \varepsilon \eta^{k-1} \sqrt{\Inf_j(f)} \right) \leq O(\varepsilon^{1/k}),
    \end{align}
    where $ \Inf_j(f) = \sum_{S: j \in S} \hat f(S)^2$ is the Boolean influence of coordinate $j$.
\end{proposition}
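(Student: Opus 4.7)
The plan is to express $\hat f_{\bmu}(\{j\})$ explicitly as a polynomial in the random shifts $\mu_i$, $i \in T \setminus \{j\}$, and then apply the Carbery--Wright anti-concentration inequality. First I would expand $f$ in the uniform parity basis, $f(x) = \sum_{S \subseteq T} \hat f(S) \prod_{i \in S} x_i$, plug into the definition $\hat f_{\bmu}(\{j\}) = \E_{\cD_{\bmu}}[f(x)(x_j - \mu_j)/\sqrt{1-\mu_j^2}]$, and use independence of the coordinates under $\cD_{\bmu}$ together with $\E[x_i] = \mu_i$ and $x_i^2 = 1$. A direct calculation shows that $\E_{\cD_{\bmu}}[\prod_{i \in S} x_i \cdot (x_j - \mu_j)]$ vanishes whenever $j \notin S$ and equals $(1-\mu_j^2)\prod_{i \in S \setminus \{j\}} \mu_i$ when $j \in S$, producing the identity
\[
\hat f_{\bmu}(\{j\}) \;=\; \sqrt{1-\mu_j^2}\cdot g_j(\bmu_{T\setminus\{j\}}), \qquad g_j(\mathbf{y}) \;:=\; \sum_{\substack{S \subseteq T \\ j \in S}} \hat f(S) \prod_{i \in S \setminus \{j\}} y_i,
\]
where $g_j$ is a multilinear polynomial of total degree $\leq k-1$ in the $k-1$ variables indexed by $T\setminus\{j\}$. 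Since $|\mu_j|\leq \eta \leq 3/4$, the prefactor satisfies $\sqrt{1-\mu_j^2}\geq \sqrt{7}/4 > 1/2$, so it suffices to prove a small-ball bound for $g_j$ under the product measure $\Unif[-\eta,\eta]^{\otimes(k-1)}$.

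Second, I would apply Carbery--Wright. The measure $\Unif[-\eta,\eta]^{\otimes(k-1)}$ is log-concave, so the inequality yields
\[
\pr\!\left(|g_j(\bmu_{T\setminus\{j\}})| \leq \tau \,\|g_j\|_{L^2}\right) \;\leq\; C(k-1)\,\tau^{1/(k-1)}
\]
for all $\tau > 0$. Under this measure the multilinear monomials $\prod_{i\in A} y_i$ are pairwise orthogonal (by symmetry of $[-\eta,\eta]$ and independence) with $\E[\prod_{i\in A}y_i^2] = (\eta^2/3)^{|A|}$, so
\[
\|g_j\|_{L^2}^2 \;=\; \sum_{\substack{S\subseteq T\\ j \in S}}\hat f(S)^2\,(\eta^2/3)^{|S|-1} \;\geq\; (\eta^2/3)^{k-1}\Inf_j(f),
\]
using that $|S|\leq k$. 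Choosing $\tau$ so that $\tau\|g_j\|_{L^2}$ matches the target threshold $2\varepsilon\eta^{k-1}\sqrt{\Inf_j(f)}$ (the factor of $2$ absorbing the prefactor lower bound) gives $\tau \leq O_k(\varepsilon)$, and hence the small-ball probability is $O_k(\varepsilon^{1/(k-1)})$. Since $k = O_d(1)$ and $\varepsilon\in(0,1)$, this is in particular $O(\varepsilon^{1/k})$, with all $k$-dependent constants absorbed.

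The main thing to watch, and what dictates the exponent $\eta^{k-1}$ in the statement, is the correct lower bound on $\|g_j\|_{L^2}$: because $\hat f_{\bmu}(\{j\})$ mixes all uniform Fourier coefficients $\hat f(S)$ with $j \in S \subseteq T$, weighted by $\prod_{i \in S\setminus\{j\}}\mu_i$, the smallest such weight is of order $\eta^{k-1}$ and arises from $|S|=k$, which is precisely what forces the scaling in the statement. Beyond tracking this scaling, the argument is a mechanical combination of orthogonality of multilinear monomials under symmetric product measures and Carbery--Wright for polynomials under log-concave distributions, mirroring in the Boolean setting the small-ball analysis of analytic functions used in the proof of Theorem~\ref{thm:infromalsmallball}.
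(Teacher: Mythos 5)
Your proposal is correct and follows essentially the same route as the paper's proof: expand $\hat f_{\bmu}(\{j\})$ as a polynomial in $\bmu$ via the uniform-measure Fourier coefficients, lower-bound its second moment using orthogonality of the monomials under the product uniform measure on $[-\eta,\eta]$, and invoke Carbery--Wright. The only (cosmetic) difference is that you factor out $\sqrt{1-\mu_j^2}$ (bounded below by a constant since $\eta\leq 3/4$) and apply Carbery--Wright to a degree-$(k-1)$ polynomial, yielding $O_k(\varepsilon^{1/(k-1)})$, whereas the paper keeps the $(1-\mu_j^2)$ factor inside and works with a degree-$(k+1)$ polynomial, obtaining $O(\varepsilon^{1/(k+1)})$; your bound is in fact slightly sharper and implies the stated $O(\varepsilon^{1/k})$.
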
 
Proposition~\ref{prop:fourier_firstorder} states that if $\eta$ is of constant order, then with high probability, the first-order Fourier coefficients on the support coordinates (i.e., those for which $\Inf_j(f) > 0$) are also of constant order, with the scaling $\eta^k$ appearing as a worst-case bound over all $k$-sparse $f$. This highlights the importance of the sparsity assumption on $f$ and the need for $k$ to remain bounded.
The proof follows by application of the Carbery-Wright inequality~(\cite{carbery2001distributional}), and can be found in Appendix~\ref{app:junta_proofs}.




We next show that having large first-order Fourier coefficients on the support coordinates is sufficient to guarantee learning with $O(d)$ samples (up to logarithmic factors) for any $k$-sparse Boolean function, with inputs coming from the chosen $\cD_{\bmu}$. For this, we put ourselves in a favorable setting for theoretical analysis. We consider a two-layer neural network $\NN(x;\theta) := \sum_{i=1}^N a_i \sigma(w_ix+b_i)$, were $\sigma(t) = \max(t,0)$ is the ReLU unit and $\theta = (a_i,w_i,b_i)_{i=1}^N \subset (\bR\times \bR^d\times \bR)^n$ is the set of trainable parameters. In particular, we consider layer-wise training, where the first layer is trained for one step by SGD with
the covariance loss (used also in~\cite{cornacchia2023mathematical,abbe2023provable}, see Appendix~\ref{app:junta_proofs} for a definition) and with constant learning rate. After this first step, we train the second layer until convergence with any convex loss function. We do not train the bias neurons, which are sampled uniformly at random from an appropriate interval and kept frozen. Both layerwise training and frozen bias neurons are common choices in theoretical investigations of neural networks learning~(\cite{daniely2020learning,abbe2023sgd,barak2022hidden,lee2024neural}). Our theorem reads as follows.


\vspace{0.5em}
\begin{theorem} \label{thm:juntas_formal}
    Let $f$ be a $k$-sparse function. For $\varepsilon,\delta>0$, there exists $c>0$ such that with probability $1-O(\varepsilon^c)$, layerwise-SGD with batch size $B = \tilde \Omega(d)$\footnote{where $\tilde \Omega(d^c) = \Omega(d^c \poly \log(d))$ for any $c \in \bR$.} with the covariance loss on a $2$-layer ReLU network of size $\tilde \Omega(\eta^{-(k+1)} d \varepsilon^{-1})$ after $T= \tilde \Omega(\delta^{-2} \varepsilon^{-2}\eta^{-2(k+1)})$ training steps learns $f$ up to error $\delta$.
\end{theorem}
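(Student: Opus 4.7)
The plan is to exploit the layerwise training schedule: Phase~1 uses a single SGD step on the first layer to align each hidden neuron with the support $T$ of $f$, leveraging the large first-order Fourier-Walsh coefficients guaranteed by Proposition~\ref{prop:fourier_firstorder}, while Phase~2 exploits the linearity of the network in the second-layer parameters to reduce the remaining task to convex stochastic optimization.

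\textbf{Phase~1 (first-layer alignment).} At initialization, with the second-layer weights near zero, the population gradient of the covariance loss with respect to $w_i$ reduces, up to normalization, to $\E_{x \sim \cD_{\bmu}}\bigl[f(x)\,\sigma'(w_i^{(0)} \cdot x + b_i)\,(x - \bmu)\bigr]$. Expanding $f$ in the orthonormal basis $\{\chi_{S,\bmu}\}$ and using $(x_j - \mu_j)/\sqrt{1-\mu_j^2} = \chi_{\{j\},\bmu}$, the $j$-th coordinate of this gradient, for $j \in T$, is dominated by a term proportional to $\hat f_{\bmu}(\{j\})$; for $j \notin T$ it vanishes since $f$ is independent of $x_j$ and the factor $x_j - \mu_j$ is mean-zero and independent of $x_T$. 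By Proposition~\ref{prop:fourier_firstorder}, with probability $1 - O(\varepsilon^{1/k})$ over $\bmu$, the signal on every $j \in T$ has magnitude $\Omega(\varepsilon\, \eta^{k-1})$.

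\textbf{Concentration and batch size.} Hoeffding's inequality applied to the ReLU-bounded gradient contributions gives an $\ell_\infty$ error of order $\tilde O(1/d)$ between the empirical and population gradients for batch size $B = \tilde\Omega(d^2)$, uniformly over the $N$ neurons and $d$ coordinates by a union bound. Taking a first-layer learning rate of order $\eta^{-(k-1)}$, one step produces weights $w_i^{(1)}$ whose projection onto $\mathrm{span}\{e_j : j \in T\}$ is of constant order while the component on $T^c$ is a vanishing fraction. After this step the effective dependence of each neuron is essentially on the relevant coordinates $x_T$ only.

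\textbf{Phase~2 (second-layer convex optimization).} Freezing $(w_i, b_i)$ and training $(a_i)$ against a convex loss yields a convex stochastic optimization problem. Since each bias $b_i$ is drawn uniformly from a bounded interval and each $w_i$ has a non-trivial projection onto $T$, a random-features/universal-approximation argument on the $k$-dimensional cube (with $k = O(1)$) shows that width $N = \tilde\Omega(\eta^{-(k+1)} d\, \varepsilon^{-1})$ suffices to realize some $a^*$ with population loss below $\varepsilon^2$. Standard convex SGD rates then attain $L^2$-error $\varepsilon$ in $T = \tilde\Omega(\varepsilon^{-4}\eta^{-2(k+1)})$ iterations, the $\eta^{-2(k+1)}$ factor encoding the effective conditioning governed by the first-order signal strength $\eta^{k-1}$ on the support.

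\textbf{Main obstacle.} The delicate step is Phase~1: one must verify that the single first-layer update, with only $\tilde\Omega(d^2)$ samples, cleanly separates the $\Omega(\eta^{k-1})$ signal on $T$ from the $\tilde O(1/d)$ empirical noise on $T^c$, and that higher-order Fourier-Walsh coefficients in the shifted basis—for which no quantitative lower bound is available—do not corrupt the first-order terms of the gradient. The use of the covariance loss (which annihilates the zero-order Fourier contribution) together with the centered form $x - \bmu$ of the derivative makes this decoupling clean, reducing the analysis precisely to the first-order Fourier coefficients controlled by Proposition~\ref{prop:fourier_firstorder}.
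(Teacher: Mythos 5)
Your two-phase skeleton (one covariance-loss gradient step on the first layer, then convex optimization of the second layer) matches the paper, and your Phase~1 analysis is essentially right: at the zero-weight, positive-bias initialization the ReLU indicator is identically one, so the population gradient in coordinate $j$ is exactly proportional to $\hat f_{\bmu}(\{j\})$ (in particular, your worry about higher-order Fourier terms corrupting the gradient is moot), and Hoeffding with $B=\tilde\Omega(d^2)$ gives the needed $\ell_\infty$ control. The genuine gap is in Phase~2. After the single layerwise step, \emph{all} neurons have (nearly) the same weight vector $w_i^1 \approx \gamma\alpha$, since the population gradient does not depend on $i$; the only diversity is in the biases. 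So the trained network is of the form $\sum_i a_i\,\ReLU\bigl(\gamma\sum_{j\in T}\alpha_j x_j + b_i\bigr)$, a single-ridge-direction network, and a generic ``random-features/universal-approximation argument on the $k$-cube'' does not apply: such a network can only represent functions of the scalar $v(x_T)=\gamma\sum_{j\in T}\alpha_j x_j$. To realize an arbitrary $k$-sparse $f$ one must show that the $2^k$ values $v_s$, $s\in\{\pm1\}^k$, are pairwise separated, i.e.\ $\bigl|\sum_j\alpha_j(s_j-t_j)\bigr|\geq \Omega(\kappa\varepsilon\eta^{k+1})$ for all $s\neq t$. This is the paper's Lemma~\ref{lem:main_first_phase}, proved by applying Carbery--Wright to a degree-$(k+1)$ polynomial in $\bmu$, and it is strictly stronger than Proposition~\ref{prop:fourier_firstorder}: having every $\hat f_{\bmu}(\{j\})$ bounded away from zero does not prevent two preactivation values from coinciding, which would make some juntas unrepresentable by the trained features.

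This separation is also what drives the quantitative bounds you quote without derivation: with biases resampled uniformly on $[-L,L]$, width $N=\Omega(L\log(1/\delta)\,\eta^{-(k+1)}\varepsilon^{-1})$ ensures a bias falls in each gap between consecutive sorted $v_s$'s (gap width $\Omega(\gamma\kappa\varepsilon\eta^{k+1})$), the resulting triangular system $M_{n,m}=\ReLU(v_n-b_m)$ is invertible, and $a^*=M^{-1}F$ satisfies $\|a^*\|_\infty=O(\varepsilon^{-1}\eta^{-(k+1)})$; this norm bound is exactly what enters the convex-SGD rate and yields $T=\tilde\Omega(\varepsilon^{-4}\eta^{-2(k+1)})$. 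Without the separation lemma and the explicit interpolation construction, your width, $\|a^*\|$, and iteration-count claims are unsupported, so the proposal as written has a real hole at the representability step.
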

Theorem~\ref{thm:juntas_formal} states that if $\eta$ is of constant order, then with high probability over the choice of $\bmu$ and the algorithm's randomness, layerwise-SGD can learn any $k$-sparse function with linear sample complexity, independently on the target $f$ and its leap complexity~(\cite{abbe2023sgd}). If $\eta=o_d(1)$, the complexity scales as $\eta^{-\Omega(k)}$, and for $\eta $ small enough
we retrieve the bounds on uniform inputs, with $O(d^{k-1})$ complexity in the worst-case over all $k$-sparse functions~(\cite{abbe2023sgd,kou2024matching}). This highlights the importance of having some randomness in the distribution. In particular, we remark that for all deterministic choices of $\bmu$, there exist $k$-sparse targets that may require sample complexity strictly larger than $\tilde \Omega(d)$-for instance, the basis elements $ \chi_{T, \bmu}(x)$, with $|T|=k$. However, such complex targets are rare and highly sensitive to small shifts in the input distribution. 
We defer the proof of Theorem~\ref{thm:juntas_formal} to Appendix~\ref{app:junta_proofs}.

\section{Small-ball estimates for Hermite coefficients} \label{sec:smallball}

In this section, we formalize Theorem \ref{thm:infromalsmallball} in a fully quantitative way. Before stating the result we first explain and discuss the necessary assumptions on the target function.
Let $f:\bR\to \bR$, and recall the definition of the shifted Hermite coefficient, 
\begin{align*}
F_1(\mu) := \E_{x\sim \cN(\mu, 1)} \left[f(x) (x-\mu)  \right] = \E_{x\sim \cN(0, 1)} \left[f(x+\mu) x \right].
\end{align*}
We shall henceforth enforce the following assumptions on $f$. 
\vspace{0.5em}
\begin{assumption} \label{ass:nicef}
        Let $f:\bR\to \bR$. We assume
	$f$ satisfies the following properties:
	\begin{enumerate}
		\item {\bf Normalization.} $\E_{x\sim \mathcal{N}(0,1)}[f] = 0$ and $\E_{x\sim \mathcal{N}(0,1)}[f^2] = 1$.
		\item {\bf Regularity.} $f$ is $L$-Lipschitz, for some $L \geq 1$. In particular, by Rademacher's theorem, $f$ is differentiable almost everywhere.
		\item {\bf Non-linearity.} $f'$ has a distributional (weak) derivative. Moreover, there exists $\varepsilon,\delta > 0$ and $c \in [-1,1]$ such that 
        for every $\frac{\delta}{2} <s<\delta$, 
        \begin{equation} \label{eq:nonlinear}
            \left|\int\limits_{c-s}^{c+s} f''(t)dt\right| > \varepsilon.
        \end{equation}
	\end{enumerate}
\end{assumption}
Before stating our result, we first discuss the role of the assumptions. The assumption regarding \emph{non-linearity} is perhaps the most non-standard. However, note that it is satisfied by any twice (weakly) differentiable non-linear function, with some set of parameters. Specifically, if $f$ is twice continuously differentiable, this assumption simply requires that for some $c \in \mathbb{R}$, $f''(c) \neq 0$. Similar assumptions exist in the literature, such as the non-degeneracy assumption of \cite{damian2022neural}. Our assumption is more general though, permitting certain derivative discontinuities and applying, for example, to piecewise-linear functions. 
It is also worth noting that the requirement for $c \in [-1,1]$ is not essential; the exact location of $c$ could, in principle, be arbitrary, though this would introduce an additional parameter to track and encumber the proof. We remark as well that if $f$ is linear and non-constant, then $F_1$ does not depend on $\mu$ and $F_1(0) \neq 0$. Hence, since these functions are also less interesting from the optimization perspective, we can safely disregard them.

The \emph{regularity} requirement is a standard assumption in this setting; however, we do mention that it could be significantly relaxed. It is readily seen that this assumption implies that $F_1(\mu)$ is also a Lipschitz function, of $\mu$. When $\mu$ follows a Gaussian distribution, we leverage this property alongside the Gaussian isoperimetric inequality to establish a small-ball estimate for $F_1(\mu)$. As will become evident in the proof, this step remains valid even if $F_1(\mu)$ is only \emph{locally Lipschitz}. Establishing local Lipschitz bounds on $F_1$ requires considerably weaker assumptions on $f$: as long as $f$ exhibits exponential or even tame sub-Gaussian growth, this condition is automatically satisfied. So, while our result can be extended to a much larger class of functions, we chose this familiar assumption mainly for simplicity and since together with the non-linearity assumption already encompasses many cases of interest, such as ReLU and the sigmoid.

With Assumption \ref{ass:nicef} in place, our main result concerning $F_1$ is the following.

\vspace{0.5em}
\begin{theorem} \label{thm:quantitativeshift}
        Suppose that $\mu \sim \mathcal{N}(0,1)$ and let $f:\bR \to \bR$ satisfy Assumption \ref{ass:nicef}. Then, there exists a constant $c = c(\varepsilon, \delta, L)$, such that for any $\lambda>0$ small enough, 
	$$\pr\left(|F_1(\mu)| \leq \lambda\right) \leq \exp\left(-c\log\left(\frac{1}{\lambda}\right)^{\frac{2}{3}}\right).$$
\end{theorem}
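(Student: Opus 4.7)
My plan is to realize $F_1$ as an entire function of the shift $\mu$, produce a quantitative lower bound on its magnitude on a bounded interval via the non-linearity assumption, and then combine a Brudnyi--Nazarov-type small-ball estimate for analytic functions with the Gaussian tail of $\mu$.

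\textbf{Step 1: analytic continuation and growth.} Starting from the integration-by-parts identity $F_1(\mu) = \E_{x \sim \cN(0,1)}[f'(x+\mu)]$, or equivalently from the Hermite expansion $f = \sum_{k\geq 0} \hat f(k) H_k$ together with the shift formula for Hermite polynomials, I would write
\[
F_1(\mu) \;=\; \sum_{k \geq 1} \hat f(k)\,\frac{\sqrt{k}}{\sqrt{(k-1)!}}\,\mu^{k-1}.
\]
Because $\sum_k \hat f(k)^2 \leq 1$ and $\sum_k k\,\hat f(k)^2 \leq \|f'\|_{L^2(\gamma)}^2 \leq L^2$, Cauchy--Schwarz shows the series converges for every complex $\mu$ with $|F_1(z)| \leq L\, e^{|z|^2/2}$, so $F_1$ is entire of order at most $2$. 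On the real line the much stronger bound $|F_1(\mu)| \leq \|f'\|_\infty \leq L$ is available.

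\textbf{Step 2: lower bound on $|F_1|$ from non-linearity.} The aim is to exhibit a constant $c_0 = c_0(\varepsilon,\delta,L) > 0$ with $\sup_{\mu \in [-2,2]}|F_1(\mu)| \geq c_0$. The identity $F_1(\mu+s) - F_1(\mu-s) = \E_x\bigl[f'(x+\mu+s) - f'(x+\mu-s)\bigr]$ connects the oscillation of $F_1$ with the $f''$-integral hypothesis $\bigl|\int_{c-s}^{c+s} f''(t)\,dt\bigr| > \varepsilon$, which is just $|f'(c+s)-f'(c-s)|>\varepsilon$. By exploiting that the hypothesis is stable as the center is moved in a $\delta$-neighborhood of $c$, one can show the integrand has a definite sign on a set of $x$ of Gaussian mass bounded below, and conclude that $|F_1(c+s)-F_1(c-s)| \geq c_0$ for some admissible $s$.

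\textbf{Step 3: Brudnyi--Nazarov plus Gaussian tail.} With $M(R) := \sup_{D(0,2R)}|F_1| \leq L\,e^{2R^2}$ and $m := \sup_{[-2,2]}|F_1| \geq c_0$, a Remez-type inequality for analytic functions (as in \cite{brudnyi2001distribution,nazarov2003local}) yields, for all $R$ sufficiently large,
\[
\bigl|\bigl\{t \in [-R,R] : |F_1(t)| \leq \lambda\bigr\}\bigr|
\;\leq\; C\,R\,(\lambda/c_0)^{\kappa/R^{2}},
\]
for a constant $\kappa > 0$ depending only on $L$ and $c_0$. Bounding the Gaussian-weighted measure of this set by $\phi(0)$ times its Lebesgue measure and adding the tail $\pr(|\mu| > R) \leq 2e^{-R^2/2}$ gives
\[
\pr\bigl(|F_1(\mu)| \leq \lambda\bigr) \;\leq\; C\,R\,(\lambda/c_0)^{\kappa/R^2} + 2\,e^{-R^2/2}.
\]
Choosing $R$ to balance the two terms (essentially $R^2$ of order $\sqrt{\log(1/\lambda)}$) produces a stretched-exponential bound of the form claimed in Theorem \ref{thm:quantitativeshift}.

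\textbf{Main obstacle.} I expect Step 2 to be the hardest part: the non-linearity hypothesis is purely local, formulated in terms of an integral of the distributional second derivative, and I must convert it into a genuine lower bound on the Gaussian-smoothed quantity $F_1$, which requires controlling the sign of $f'(x+\mu+s) - f'(x+\mu-s)$ over the bulk of the Gaussian mass. A secondary but still delicate issue is the precise book-keeping in Step 3---invoking Brudnyi--Nazarov with the correct relation between the disc radius, the growth bound, and the lower bound, and then optimizing the radius so that the final exponent of $\log(1/\lambda)$ matches the one in the statement.
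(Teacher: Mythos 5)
Your overall architecture is the same as the paper's (analytic continuation of $F_1$, a quantitative non-degeneracy statement on a bounded window coming from the non-linearity assumption, then a Brudnyi/Nazarov-type local small-ball estimate combined with the Gaussian tail and an optimized radius), but two of your steps have genuine gaps. The first is Step 2, which is exactly where the paper proceeds differently. From the hypothesis you only know that $g_s(u):=f'(u+s)-f'(u-s)$ has modulus at least $\varepsilon$ (hence a fixed sign) for $u$ in a window of length of order $\delta$ around $c$; but $F_1(\mu+s)-F_1(\mu-s)=\E_x[g_s(x+\mu)]$ averages over all of $\bR$, and outside that window $g_s$ can be as large as $2L$ with the opposite sign. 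A definite sign on a set of Gaussian mass $\Theta(\delta)$ gives a favorable contribution of order $\varepsilon\delta$ that can be cancelled, indeed overwhelmed, by the unconstrained remainder, so the conclusion $|F_1(c+s)-F_1(c-s)|\geq c_0$ does not follow from the sign argument you sketch. The paper never lower-bounds $F_1$ at prescribed points: it lower-bounds $\Var(F_1(\mu))$ over the Gaussian randomness of $\mu$ via Stein's identity and a symmetrized two-copy representation (Lemma \ref{lem:variance}), converts this variance bound into anti-concentration using the Lipschitz property of $F_1$ together with Gaussian isoperimetry (Lemmas \ref{lem:lipschitzshift} and \ref{lem:smallball}), and only then extracts a point $\mu'$ and a set of positive Lebesgue measure on which $|F_1|\geq \tilde c$ (Corollary \ref{cor:existence}); note that the positive-measure set, not just a sup bound, is what feeds the quantile lower bound in the localization step (Lemma \ref{lem:properties}). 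You would need either this averaged argument or a genuinely new mechanism; the purely local sign argument is not enough.

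The second gap is the bookkeeping at the end of Step 3: your own estimates do not produce the stated rate. With the order-two growth bound $|F_1(z)|\leq L e^{|z|^2/2}$, the relevant degree parameter after rescaling a disc of radius comparable to $R$ is of order $\log\bigl(M(R)/c_0\bigr)\sim R^2$, which is why your sub-level estimate carries the exponent $\kappa/R^2$; balancing $(\lambda/c_0)^{\kappa/R^2}$ against $e^{-R^2/2}$ forces $R^2\sim\sqrt{\log(1/\lambda)}$, and then the resulting bound is $e^{-R^2/2}=\exp\bigl(-c\sqrt{\log(1/\lambda)}\bigr)$, not $\exp\bigl(-c\log(1/\lambda)^{3/2}\bigr)$. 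The paper reaches its stated exponent because its Lemma \ref{lem:analytic} asserts the much stronger order-one growth $|\Psi(z)|\leq\sqrt{|z|+1}\,e^{|z|}$, which makes the degree parameter $\sigma$ of order $R$ rather than $R^2$ in Lemma \ref{lem:properties}; your series representation with coefficients $\hat f(k+1)\sqrt{k+1}/\sqrt{k!}$ only yields the order-two bound via Cauchy--Schwarz. So to match the theorem as stated you must either establish an order-one growth estimate for the continuation you use, or redo the optimization and accept a weaker exponent; as written, the final claim that the balancing ``produces a bound of the form claimed'' is an arithmetic gap.
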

We first mention that the proof of Theorem \ref{thm:quantitativeshift} also works if $\mu \sim \mathcal{N}(0,\eta)$, for $\eta \in (0,1)$. The main difference will be in the value of the constant $c$ which will also depend, in a somewhat complicated way, on the value of $\eta$.
We next discuss the super-polynomial bound afforded by Theorem \ref{thm:quantitativeshift}. In some sense, this bound is the best we could hope for at this level of generality. Indeed, if $f$ is a degree-$k$ polynomial, then so is $F_1$ and we would expect the bound to be of the form $O\left(\lambda^{\frac{1}{k}}\right)$, as in Proposition \ref{prop:fourier_firstorder}. On the other hand, $\exp\left(-c\log\left(\frac{1}{\lambda}\right)^{\frac{2}{3}}\right)$ is only `barely super-polynomial', and while it is plausible that the power of the logarithm could be improved to something in $(\frac{2}{3},1)$ not much could be improved beyond that. 
The proof of Theorem \ref{thm:quantitativeshift} is conducted in two stages and appears in Appendix \ref{sec:appendixsmallball}. In Section \ref{sec:weaksmall} we will use Assumption \ref{ass:nicef} to first prove a weak estimate, for a single fixed value of $\lambda$. After establishing this weak estimate, in Section \ref{sec:generalsmallball} we shall show that $F_1$ extends to an entire function of the complex plane and apply local small-ball estimates for such functions.

\section{Numerical experiments}
\begin{figure}[t]
    \centering
    \includegraphics[width=0.49\linewidth]{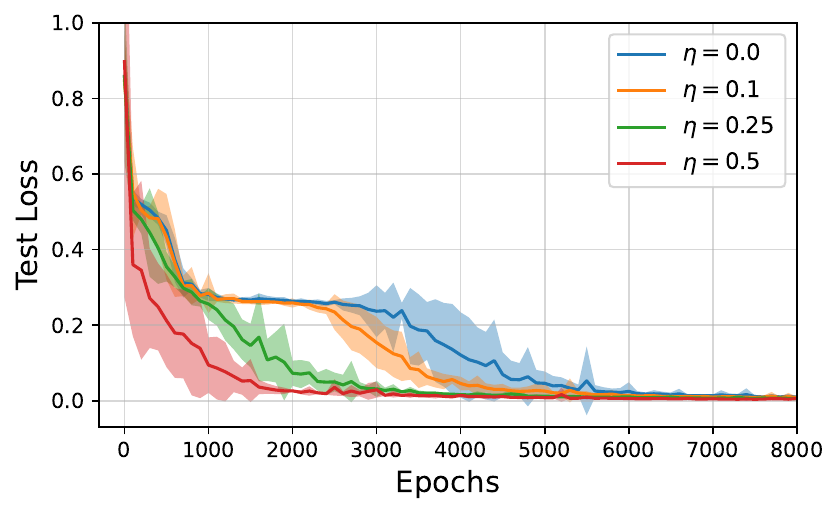}
    \includegraphics[width=0.49\linewidth]{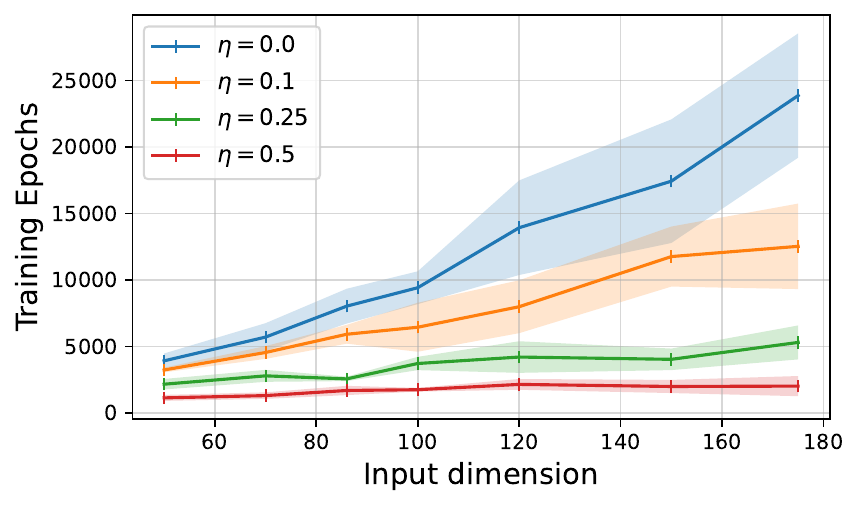}
    \caption{Learning the Boolean function $f(x) = x_1+x_1x_2x_3+x_1x_2...x_6$ using SGD on a two-layer $\ReLU$ network under randomly shifted Boolean inputs, with shift-magnitude $\eta \in \{ 0,0.1,0.25,0.5 \}$. Left: Evolution of the test error during training for fixed input dimension $d=50$. Right: Number of training epochs required to achieve test error below $10^{-2}$ for varying input dimensions.}
    \label{fig:sparseBoolean}
\end{figure}
Given our theoretical results, it is instructive to observe how our predictions hold in practice. In Figure~\ref{fig:sparseBoolean}, we consider learning a $6$-sparse Boolean function defined by 
$$f(x) = x_1+x_1x_2x_3+x_1x_2x_3x_4x_5x_6,$$
where $x \in \{ \pm 1 \}^d$. We consider different shifted input distributions $\cD_{\bmu}$ (as defined in~\eqref{eq:Dmu}), where $\bmu \sim \Unif[-\eta, \eta]^{\otimes d}$ and $\eta \in \{ 0,0.1,0.25,0.5\}$. We take a two-layer $\ReLU$ network with $512$ hidden units. We train it with online mini-batch SGD with batch size $64$, with squared loss, and with all weights and biases trained jointly. We repeat each experiment $10$ times. Figure~\ref{fig:sparseBoolean}(left) shows the evolution of the test error during training, for a fixed input dimension $d=50$. As $\eta$ increases, learning accelerates, requiring fewer training epochs. As it can be seen in the no-noise regime, when $\eta=0$, the first order Fourier coefficient on the first coordinate $\hat{f}(\{1\}) > 0$, and we observe a sharp decrease in training loss near initialization. On the other hand, for the other coordinates in the support, $\hat{f}(\{2\})=\dots=\hat{f}(\{6\})=0$, and so the dynamics eventually encounter saddle points, leading to a stagnation in the test loss. In contrast, as predicted by Theorem \ref{thm:juntas_formal}, when introducing noise these effects gradually disappear. As $\eta$ increases, the dynamics avoid the observed plateaus, and the test error rapidly converges to zero. In Figure~\ref{fig:sparseBoolean}(right), we plot the number of training epochs required to achieve test error below $10^{-2}$, for different input dimensions $d \in \{ 50,70,86,100,120,150,175 \}$. We observe that as $\eta$ increases, the scaling of the training time with respect to the input dimension decreases.

\section{Conclusion}
In this paper, we analyze the complexity of learning Gaussian single-index and sparse Boolean functions under randomly shifted input distributions. In both cases, we demonstrated that a random shift in the first moment ensures that the first-order coefficients in the relevant expansion remain of constant order, making learning efficient and independent of the specific target function. This suggests that when a low-dimensional structure is present, most target functions are easy to learn, with high-complexity cases being rare. A natural next step is to extend our analysis from Gaussian single-index functions to the multi-index setting. While our results for sparse Boolean functions suggest that similar behavior may hold in the multi-index case, fully addressing this question remains challenging. As discussed in Section~\ref{sec:beyond_single_index}, the small-ball estimates for Hermite coefficients can be extended to this setting, since Theorem~\ref{thm:infromalsmallball} is essentially dimension-free. However, obtaining general algorithmic guarantees remains non-trivial.
Another promising direction is to explore scenarios where input coordinates are correlated, shifting the low-dimensional structure from the target function to the input distribution itself.

\section{Acknowledgments}
EC was supported by the NSF DMS-2031883, Bush Faculty Fellowship ONR-N00014-20-1-2826, and the French government under management of Agence Nationale de la Recherche as part of the “Investissements d’avenir” program, reference ANR19-P3IA-0001 (PRAIRIE 3IA Institute). DM was partially supported by a Simons Investigator award (622132, PI: Mossel) and the  Brian and Tiffinie Pang Faculty Fellowship. EM was partially supported by 
NSF DMS-2031883, Bush Faculty Fellowship ONR-N00014-20-1-2826 and Simons Investigator award (622132). 



\bibliography{references}
\bibliographystyle{alpha}

\newpage
\appendix

\section{Algorithmic Results}
\subsection{Parametric setting}
\label{app:parametric_setting}
The solution to the parametric problem is divided into two steps and is presented in Algorithm \ref{alg:para}.  At initialization, since the initial guess is uniformly distributed on the sphere, it will, with high probability, lie near the equator $\{\theta \mid \langle \theta, w^* \rangle = 0\}$. This region corresponds to a flat area in the optimization landscape of SGD with respect to Gaussian inputs, where gradients offer less information about the signal. To address this point, the first step is geared towards escaping the flat region. It does so by implementing a single batched SGD update on a shifted distribution, enabling the algorithm to break potential symmetry. Below we explain more about this step. Recall that being far from the equator is equivalent to weak recovery of the signal $w^*$. That is, to finding a vector in $\bS^d$ with non-negligible correlation with $w^*$
Once this is achieved, the second step involves standard SGD dynamics on isotropic data, which boosts weak recovery to strong recovery. This boosting procedure was already addressed in \cite{ba2022high}, so we will mostly focus on the first step in this section.
\vspace{0.5em}

\begin{algorithm}[H]
\caption{Two-step solution to the parametric problem}
\label{alg:para}
\LinesNumbered
\KwIn{Target function $f:\bR\to\bR$, initial guess $\theta_0 \in \bS^d$, shift $\alpha\in \bR^d$,\\
Gaussian sample $\{\tilde{x}_i\}_{i=1}^{2n},$ learning rate $\eta > 0$.}

\BlankLine
\textbf{Step 1: Escaping the equator}

Set $x_i = \tilde{x}_i + \alpha$ for $i=1,\dots,n$.

Get noisy labels $y_i = f(\langle w^*, x_i\rangle) + \zeta_i$ for $i=1,\dots,n$.

Set $V = \nabla^S\mathcal{L}^n_{\mu,\mu^*}(\theta_0, (x_i)_{i=1}^n)$, according to \eqref{eq:empirloss}.

Set $\tilde\theta_1 = \theta_0 \pm \eta V$.

Set $\theta_1 = \frac{\tilde{\theta_1}}{\|\tilde \theta_1\|}$.

\BlankLine
\textbf{Step 2: From weak learning to strong learning}

Get noisy labels $y_i = f(\langle w^*, \tilde x_i\rangle) + \zeta_i$ for $i=n+1,\dots,2n$.

\For{$t = 1$ \KwTo $n$}{
    Set $V_t = \nabla^S\mathcal{L}(\theta_{t}, \tilde x_{n+t})$.
    
    Set $\tilde{\theta}_{t+1} = \theta_{t} - \log(d)^{-\frac{3}{2}}V_t$.
    
    Set $\theta_{t+1} = \frac{\tilde{\theta}_{t+1}}{\|\tilde \theta_{t+1}\|}$.
}

\Return $\theta_{n+1}$
\end{algorithm}

\paragraph{Preliminaries:}
Let $\alpha \in \bR^d$ and for $\theta_0,w^* \in \bS^d$ set $\mu= \langle\theta_0,\alpha\rangle$ and $\mu^*= \langle w^*,\alpha\rangle$ . Recall that for $f:\bR\to\bR$ we have $f_\mu(x)=f(x+\mu)$. Thus, if $\tilde{x} \sim \cN(0,\mathrm{I}_d)$ and  $\tilde{x} + \alpha =:x\sim \cN(\alpha,\mathrm{I}_d)$, we have the following expression for the \emph{quadratic population loss}, 
$$\mathcal{L}^p(\theta_0)=\E\left[(f(\langle\theta_0,x\rangle)-f(\langle w^*,x\rangle))^2\right] + \E\left[\zeta^2\right]= \E\left[(f_\mu(\langle\theta_0,\tilde{x}\rangle)-f_{\mu^*}(\langle w^*,\tilde{x}\rangle))^2\right] +\E\left[\zeta^2\right].$$
From now on we shall regard $\mu$ and $\mu^*$ as two fixed numbers and write $\mathcal{L}_{\mu,\mu^*}^p(\theta_0)$ for the right-hand side of the above equation, as a function of $\theta_0$ only. We stress the fact that while $\mu$ is also a function of $\theta_0$, Step 1 consists of a single step of gradient descent, and so we proceed in that step without considering any potential changes to $\mu$.  With this comment in mind, we now express the loss in terms of the Hermite coefficients of $f_\mu$ and $f_{\mu^*}$ as well as the overlap $m(\theta_0) = \langle \theta_0,w^*\rangle$. 
For the standard orthonormal Hermite polynomials $\{H_k\}_{k\geq 0}$ write 
$$f_\mu = \sum\limits_{k\geq0} \hat{f}_\mu(k)H_k \text{ and } f_{\mu^*} = \sum\limits_{k\geq0} \hat{f}_{\mu^*}(k)H_k.$$
If $G$ and $G'$ are two standard Gaussians with $\E\left[GG'\right] = \rho$, then $\E\left[H_k(G)H_k(G')\right] = \rho^k$. So,
$$\mathcal{L}_{\mu,\mu^*}^p(\theta_0)=\E_{\mathcal{N}(0,1)}\left[f^2_\mu\right] + \E_{\mathcal{N}(0,1)}\left[f^2_{\mu^*}\right]-2\sum\limits_{k\geq 0}\hat{f}_\mu(k)\hat{f}_{\mu^*}(k)m(\theta_0)^k,$$
and 
\begin{equation} \label{eq:popderivtaive}
\nabla^S\mathcal{L}_{\mu,\mu^*}^p(\theta_0)=-2\left(\sum\limits_{k\geq 1}k\hat{f}_\mu(k)\hat{f}_{\mu^*}(k)m(\theta_0)^{k-1}\right)\nabla^Sm(\theta_0),
\end{equation}
where the spherical gradient is with respect to $\theta_0$, and where we treat $\mu$ as a fixed number, according to our comment above.
As we shall see below, starting from $\theta_0$, one update step in the direction defined by \eqref{eq:popderivtaive} is enough to guarantee weak learning. However, since we do not have direct access to the quantities $\hat{f}_{\mu_*}(k)$ or $m(\theta_0)$ we will need to find an appropriate estimator for the population loss.
For this estimation task we shall use the \emph{empirical loss}: let $(\tilde{x}_i)_{i=1}^n$ be \emph{i.i.d.} $\cN(0,\mathrm{I}_d)$, so that $x_i := \tilde{x}_i +\alpha $ are \emph{i.i.d.} $\cN(\alpha,\mathrm{I}_d)$. For $y_i = f(\langle w^*,x_i\rangle) + \zeta_i = f_{\mu^*}(\langle w^*,\tilde x_i\rangle) + \zeta_i$ the batched empirical loss is given by
\begin{align} \label{eq:empirloss}
    \mathcal{L}^n_{\mu,\mu^*}&(\theta_0,(x_i)_{i=1}^n) = \frac{2}{n}\sum\limits_{i=1}^n (f_\mu(\langle \theta_0, \tilde{x}_i\rangle) - y_i)^2 \nonumber\\
    &\implies \nabla^S \mathcal{L}^n_{\mu,\mu^*}(\theta_0,(x_i)_{i=1}^n)=\frac{1}{n}\sum\limits_{i=1}^n f'_\mu(\langle \theta_0, \tilde{x}_i\rangle)(f_\mu(\langle \theta_0, \tilde{x}_i\rangle) - y_i)\nabla^S\langle\theta_0,\tilde x_i\rangle.
\end{align}
Note that under Assumption \ref{ass:nicef} we can differentiate under the integral sign and so 
\begin{equation} \label{eq:unbiased}
    \E\left[\nabla^S\mathcal{L}^n_{\mu,\mu^*}(\theta_0,(x_i)_{i=1}^n)\right] = \nabla^S\mathcal{L}_{\mu,\mu^*}^p(\theta_0).
\end{equation}
With standard concentration arguments, we further show that the batched empirical loss is a good approximation for the population loss.
\vspace{0.5em}
\begin{lemma} \label{lem:approxgrad}
    Suppose that $f:\bR\to\bR$ satisfies Assumption \ref{ass:nicef} and that $\zeta_i$ is $1$-sub-Gaussian. Then, for every $\theta_0 \in \bS^d$, and $\beta > 0$
    $$\pr\left(\left\|\nabla^S\mathcal{L}^n_{\mu,\mu^*}(\theta_0,(x_i)_{i=1}^n) - \nabla^S\mathcal{L}^p_{\mu,\mu^*}(\theta_0)\right\| \geq CL^2\sqrt{\frac{d\beta}{n}}\sqrt{\ln\left(dn\right)}\right) \leq \frac{1}{d^\beta}.$$
\end{lemma}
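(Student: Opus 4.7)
The plan is to apply a vector-valued Bernstein concentration inequality to the i.i.d.\ sum giving the empirical gradient, after establishing sufficient tail control of each summand. Writing
\[
g_i := f'_\mu(\langle \theta_0, \tilde{x}_i\rangle)\,(f_\mu(\langle \theta_0, \tilde{x}_i\rangle) - y_i)\,\nabla^S\langle\theta_0,\tilde x_i\rangle,
\]
so that $\nabla^S\mathcal{L}^n_{\mu,\mu^*}(\theta_0,(x_i)_{i=1}^n) = \frac{1}{n}\sum_{i=1}^n g_i$, the unbiasedness identity \eqref{eq:unbiased} gives $\E g_i = \nabla^S\mathcal{L}^p_{\mu,\mu^*}(\theta_0)$. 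Hence the lemma reduces to a vector concentration estimate for $\frac{1}{n}\sum_{i=1}^n Y_i$, where $Y_i := g_i - \E g_i$.

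First I would establish sub-exponential control on each $Y_i$. Assumption \ref{ass:nicef} gives $|f'_\mu|\leq L$ pointwise, while $\langle\theta_0,\tilde x_i\rangle$ and $\langle w^*,\tilde x_i\rangle$ are standard Gaussians and $\zeta_i$ is $1$-sub-Gaussian, so the scalar factor satisfies $\|f_\mu(\langle\theta_0,\tilde x_i\rangle) - y_i\|_{\psi_2}\leq C L$. The spherical gradient $\nabla^S\langle\theta_0,\tilde x_i\rangle$ is a standard Gaussian on the $(d-1)$-dimensional tangent space at $\theta_0$, so $\|\,\|\nabla^S\langle\theta_0,\tilde x_i\rangle\|\,\|_{\psi_2}\leq C$ and $\E\|\nabla^S\langle\theta_0,\tilde x_i\rangle\|^2 = d-1$. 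Since the $\psi_1$-norm of a product of two $\psi_2$ random variables is bounded by the product of the $\psi_2$-norms, this yields $\|Y_i\|_{\psi_1}\leq C L^2\sqrt d$ and $\E\|Y_i\|^2\leq C L^4 d$.

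The concentration step then proceeds by a standard truncation followed by a vector-valued Bernstein inequality. Setting $R := C_0 L^2\sqrt d\,\log(nd)$ for a sufficiently large $C_0=C_0(\beta)$, the sub-exponential tail together with a union bound over the $n$ samples yields $\max_i\|Y_i\|\leq R$ with probability at least $1-\tfrac12 d^{-\beta}$. Conditionally on this event (and after absorbing the truncation bias $\E[g_i\mathds{1}_{\|g_i\|>R}]$, which is of smaller order than the target rate), I would apply a Pinelis-type vector Bernstein inequality, giving
\[
\pr\!\left(\Bigl\|\frac1n\sum_{i=1}^n Y_i\Bigr\|\geq \sqrt{\frac{2\sigma^2 s}{n}}+\frac{2 R s}{3n}\right)\leq 2\exp(-s),
\]
with $\sigma^2 = CL^4 d$. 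Choosing $s = \Theta(\beta\log(nd))$ makes the tail at most $\tfrac12 d^{-\beta}$, and in this regime the variance term $\sqrt{\sigma^2 s/n}\lesssim L^2\sqrt{d\beta\log(nd)/n}$ dominates the linear term $R s/n$ once $n$ exceeds a polylogarithmic threshold in $(nd)$, yielding the stated bound.

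The main technical obstacle is carefully controlling the truncation remainders so that they are uniformly dominated by the target scale $CL^2\sqrt{d\beta/n}\sqrt{\log(nd)}$, and picking a vector Bernstein inequality that is dimension free in the deviation (as opposed to a coordinate-wise Bernstein plus union bound, which would introduce a spurious $\sqrt d$ factor in the rate). Both issues are routine given the sub-Gaussian/sub-exponential estimates above; the remaining work is essentially bookkeeping of absolute constants.
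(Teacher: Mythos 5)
Your proposal is correct and follows essentially the same route as the paper: bound the gradient summand as a sub-exponential vector of scale $L^2\sqrt{d}$ (Lipschitz bound on $f'_\mu$, sub-Gaussianity of $f_\mu(\langle\theta_0,\tilde x\rangle)-y_i$, Gaussian tangent vector) and then apply a dimension-free vector Bernstein inequality together with the unbiasedness identity \eqref{eq:unbiased}; the paper simply invokes a Bernstein inequality for sub-exponential Hilbert-space-valued variables directly, whereas you implement the same step via truncation plus a bounded Pinelis-type bound, which is an equivalent (slightly more laborious) implementation. One small slip: the $\psi_2$-norm of $\|\nabla^S\langle\theta_0,\tilde x_i\rangle\|$ is of order $\sqrt{d}$, not $O(1)$, but the conclusion you actually use, $\|Y_i\|_{\psi_1}\leq CL^2\sqrt{d}$, is the correct one and matches the paper's moment estimate.
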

\begin{proof}
    We claim that when $\tilde{x}\sim \cN(0,\mathrm{I}_d)$, the random variable $Z:=f'_\mu(\langle \theta_0, \tilde{x}\rangle)(f_\mu(\langle \theta_0, \tilde{x}\rangle) - y_i)\nabla^S\langle\theta_0,\tilde x\rangle$ has sub-exponential tails. Indeed, from Assumption \ref{ass:nicef} $f$ is $L$-Lipschitz, and so are $f_\mu$ and $f_{\mu^*}$. So, $|f'_\mu(\langle \theta_0, \tilde{x}\rangle)| \leq L$ almost surely. Moreover, since $\langle\theta,\tilde{x}\rangle$ is a standard Gaussian, as a Lipschitz function $f_\mu(\langle \theta_0, \tilde{x}\rangle)$ is $L$-sub-Gaussian~(\cite{ledoux1993inegalite}). Together with the assumption on the noise we get that $(f_\mu(\langle \theta_0, \tilde{x}\rangle) - y_i)$ is centered and $3L$-sub-Gaussian. As $\|\nabla^S\langle \theta_0, \tilde{x}\rangle\| \leq \|\tilde{x}\|$, by the Cauchy-Schwartz inequality, we get for every $m \geq 2$
    $$(\E\left[|Z|^m\right])^{\frac{1}{m}}\leq C'\sqrt[2m]{\E\left[(f_\mu(\langle \theta_0, \tilde{x}\rangle) - y_i)^{2m}\right]\cdot\E[\|x\|^{2m}]} \leq C'L^2\sqrt{d}m,$$
    where we have used that if $Y$ is $\sigma$-sub-Gaussian then $\E\left[Y^{2m}\right]^{\frac{1}{2m}}\leq C'\sqrt{2m}\sigma,$ for some universal constant $C' > 0$. We thus conclude, that 
    $Z$ is $C'L^2d$-sub-exponential. So if $\{Z_i\}_{i=1}^n$ are \emph{i.i.d.} copies of $Z$, we get by Bernstein's inequality (see for example \cite[Proposition 7]{maurer2021concentration}) that,
    $$\pr\left(\left\|\frac{1}{n}\sum\limits_{i=1}^nZ_i - \E[Z]\right\| \geq CL^2\sqrt{\frac{d\beta}{n}}\sqrt{\ln\left(d\right)}\right) \leq \frac{1}{d^\beta}.$$
    The proof concludes with the identity in \eqref{eq:unbiased} and since $\tilde{x}_i$ are independent.
\end{proof}
Recall that $\theta_1 = P_{\bS^d}\left(\theta_0 - \eta\nabla^S\mathcal{L}^n_{\mu,\mu^*}(\theta_0,(x_i)_{i=1}^n)\right)$ where $P_{\bS^d}$ is the projection to $\bS^d$, and $\eta>0$ is a learning rate. Our next result shows that when the approximation error from Lemma \ref{lem:approxgrad} and $m(\theta_0)$ are small, one step of batched SGD will yield a significant correlation with the signal $w_*$, provided that $\hat{f}_\mu(1)\hat{f}_{\mu^*}(1)$ is bounded away from $0$. The quantitative bounds are chosen with some foresight, for the proof of Theorem \ref{thm:parameteric}, and could in principle be generalized.
\vspace{0.5em}
\begin{lemma} \label{lem:weaklearning}
    Suppose that $0<m(\theta_0)\leq \frac{1}{d^{\frac{1}{4}}}$, and that $\left\|\nabla^S\mathcal{L}^n_{\mu,\mu^*}(\theta_0,(x_i)_{i=1}^n) - \nabla^S\mathcal{L}^p_{\mu,\mu^*}(\theta_0)\right\|\leq \frac{1}{\sqrt{\log(d)}}$. If $\eta = \sqrt{\frac{\hat f_{\mu}(1)\hat f_{\mu^*}(1)}{90L^6}}$, then
    $$m(\theta_1) \geq \frac{2}{\sqrt{90}}\hat f_{\mu}(1)\hat f_{\mu^*}(1)\sqrt{\frac{\hat f_{\mu}(1)\hat f_{\mu^*}(1)}{90L^6}} -\frac{102L^4}{\sqrt{\log(d)}}.$$
\end{lemma}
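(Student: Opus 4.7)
I would write $\tilde\theta_1 = \theta_0 - \eta G$ with $G := \nabla^S\mathcal{L}^n_{\mu,\mu^*}(\theta_0,(x_i)_{i=1}^n)$ and $G^p := \nabla^S\mathcal{L}^p_{\mu,\mu^*}(\theta_0)$, setting $A := \hat f_\mu(1)\hat f_{\mu^*}(1)$ and, by the $\pm$ choice in Algorithm~\ref{alg:para}, assuming without loss of generality that $A>0$. Because the spherical gradient is tangent to the sphere at $\theta_0$, we have the clean decomposition $\|\tilde\theta_1\|^2 = 1 + \eta^2\|G\|^2$, so the whole proof reduces to a careful bound on $\langle\tilde\theta_1,w^*\rangle$ (numerator) and $\|\tilde\theta_1\|$ (denominator) at the single point $\theta_0$.

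The first computational step is to extract the $k=1$ Hermite mode from the population gradient \eqref{eq:popderivtaive}: write $G^p = -2(A+R)(w^* - m(\theta_0)\theta_0)$ with remainder $R = \sum_{k\ge 2}k\hat f_\mu(k)\hat f_{\mu^*}(k)m(\theta_0)^{k-1}$. Since $f$ is $L$-Lipschitz, the identity $\sum_{k\ge 1}k\hat f_\mu(k)^2 = \|f'_\mu\|_{L^2(\gamma)}^2 \le L^2$ holds (and similarly for $f_{\mu^*}$), so Cauchy--Schwarz together with $|m(\theta_0)|\le d^{-1/4}$ yields $|R|\le L^2 d^{-1/4}$ and, separately, $|A|\le L^2$. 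Taking an inner product with $w^*$ and using $\langle w^* - m(\theta_0)\theta_0, w^*\rangle = 1-m(\theta_0)^2$ gives
\begin{equation*}
\langle\tilde\theta_1,w^*\rangle \;=\; m(\theta_0) + 2\eta(A+R)(1-m(\theta_0)^2) - \eta\langle G-G^p, w^*\rangle,
\end{equation*}
in which the main positive term is $2\eta A$; the $m(\theta_0)$ piece is $\ge 0$, and the remaining negative corrections are at most $2\eta L^2 d^{-1/4}$ (from $R$) and $\eta/\sqrt{\log d}$ (from the empirical error, by Cauchy--Schwarz and the hypothesis $\|G-G^p\|\le 1/\sqrt{\log d}$).

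For the denominator I would use the loose bound $\|G\|\le 2(|A|+|R|) + \|G-G^p\| \le 3L^2$ for $d$ large; then with $\eta^2 = A/(90L^6)$ and $A\le L^2$ one gets $\eta^2\|G\|^2 \le 9L^4\cdot A/(90L^6) \le 1/10$, in particular $\|\tilde\theta_1\|\le\sqrt{90}$ (a deliberately generous bound that produces the $\frac{1}{\sqrt{90}}$ factor in the conclusion) and $\|\tilde\theta_1\|\ge 1$. Dividing the numerator bound by $\|\tilde\theta_1\|$, using $\|\tilde\theta_1\|\le\sqrt{90}$ on the positive piece and $\|\tilde\theta_1\|\ge 1$ on the negative pieces, gives
\begin{equation*}
m(\theta_1) \;\ge\; \frac{2\eta A}{\sqrt{90}} \;-\; 2\eta L^2 d^{-1/4} \;-\; \frac{\eta}{\sqrt{\log d}} \;-\; O\!\bigl(\eta A\, d^{-1/2}\bigr),
\end{equation*}
and since $\eta \le 1/(L^2\sqrt{90})\le 1$ and $L\ge 1$, for $d$ large all correction terms fit inside $102 L^4/\sqrt{\log d}$, matching the statement because $\frac{2\eta A}{\sqrt{90}} = \frac{2}{\sqrt{90}}A\sqrt{A/(90L^6)}$.

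The main obstacle is purely bookkeeping: handling the sign of $A$ via the $\pm$ in Algorithm~\ref{alg:para}, and ensuring that all of the $R$-, $d^{-1/4}$-, and empirical-error contributions can be absorbed into a single $O(L^4/\sqrt{\log d})$ slack. Conceptually the mechanism is transparent — near the equator the dynamics is driven entirely by the $k=1$ Hermite mode $A$, with higher modes uniformly suppressed by $|m(\theta_0)|^{k-1}\le d^{-(k-1)/4}$ — so the key inputs are just the Lipschitz identity $\sum_k k\hat f(k)^2\le L^2$ and the Lemma~\ref{lem:approxgrad} approximation bound.
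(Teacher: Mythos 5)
Your proposal is correct and follows essentially the same route as the paper's proof: decompose the update into population plus empirical gradient, isolate the first Hermite product $\hat f_\mu(1)\hat f_{\mu^*}(1)$ as the dominant drift, bound the $k\ge 2$ remainder by $L^2 m(\theta_0)$ via $\sum_k k\hat f_\mu(k)^2\le L^2$, and use the orthogonality $\langle\theta_0,\nabla^S\mathcal{L}^n\rangle=0$ to control the renormalization. The only (harmless) difference is bookkeeping in the denominator: the paper expands $1/r$ via $|1/(1+t)-1|\le t$ and collects $\eta^2,\eta^3$ corrections before plugging in $\eta$, while you simply sandwich $1\le\|\tilde\theta_1\|\le\sqrt{90}$, which directly yields the stated $\tfrac{2}{\sqrt{90}}$ constant.
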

\begin{proof}
    Write $\mathcal{E} := \left\|\nabla^S\mathcal{L}^n_{\mu,\mu^*}(\theta_0,(x_i)_{i=1}^n) - \nabla^S\mathcal{L}^p_{\mu,\mu^*}(\theta_0)\right\|$ and $r:= \|\theta_0 - \eta\nabla^S\mathcal{L}^n_{\mu,\mu^*}(\theta_0,(x_i)_{i=1}^n)\|$.
    So,
    \begin{equation} \label{eq:mincrement}
        m(\theta_1) \geq \frac{1}{r}\left(m(\theta_0)-\eta\langle\nabla^S\mathcal{L}^p_{\mu,\mu^*}(\theta_0),w^*\rangle - \eta\mathcal{E}\right).
    \end{equation}
    Also, since $\theta_0$ and $\eta\nabla^S\mathcal{L}^n_{\mu,\mu^*}(\theta_0,(x_i)_{i=1}^n)$ are, by definition, orthogonal, we get that 
    \begin{equation} \label{eq:rbound}
        r \leq \sqrt{1+\eta^2\|\nabla^S\mathcal{L}^n_{\mu,\mu^*}(\theta_0,(x_i)_{i=1}^n)\|^2} \leq 1+\eta^2\|\nabla^S\mathcal{L}^n_{\mu,\mu^*}(\theta_0,(x_i)_{i=1}^n)\|^2\leq 1 + 8\eta^2L^4 + 2\eta^2\mathcal{E}^2.
    \end{equation}
    Above we have used the expression \eqref{eq:popderivtaive} according to which,
    $$\|\nabla^S\mathcal{L}_{\mu,\mu^*}^p(\theta_0)\|\leq 2\sqrt{\sum\limits_{k\geq 1}k\hat{f}^2_\mu(k)\sum\limits_{k\geq 1}k\hat{f}^2_{\mu^*}(k)} \leq 2\max\limits_{x\in \bR}|f'_\mu(x)|\max\limits_{x\in \bR}|f'_{\mu^*}(x)|\leq 2L^2,$$
    where the second inequality is a standard calculation with Hermite polynomials (see for example the proof of \cite[Proposition 2.1]{ba2022high}).
    Combining \eqref{eq:mincrement} with \eqref{eq:rbound} along with the elementary inequality $|\frac{1}{1+t}-1| \leq t$ valid for $t>0$, we arrive at
    \begin{align} \label{eq:m1bound}
        m(\theta_1) \geq m(\theta_0) &-\eta\langle\nabla^S\mathcal{L}^p_{\mu,\mu^*}(\theta_0),w^*\rangle - \eta\mathcal{E}- \eta^2\left(8L^4+2\mathcal{E}^2\right)|m(\theta_0)|\nonumber\\
        &-\eta^3\left(8L^4+2\mathcal{E}^2\right)\left(|\langle\nabla^S\mathcal{L}^p_{\mu,\mu^*}(\theta_0),w^*\rangle| +\mathcal{E}\right).
    \end{align}
    Since $0\leq m_0,\mathcal{E} \leq \frac{1}{\sqrt{\log(d)}}$, we now get
    $$m(\theta_1) \geq -\eta\langle\nabla^S\mathcal{L}^p_{\mu,\mu^*}(\theta_0),w^*\rangle - \eta^310L^4\langle\nabla^S\mathcal{L}^p_{\mu,\mu^*}(\theta_0),w^*\rangle| - \frac{100L^4}{\sqrt{\log(d)}}.$$
    To finish the proof we focus on estimating $\langle\mathcal{L}^p_{\mu,\mu^*}(\theta_0),w^*\rangle$. First, since $\nabla^S m(\theta_0) =\nabla^S(\langle \theta_0,w^*\rangle) = w^* - \langle \theta_0, w^*\rangle \theta_0,$ we have
    $$\frac{1}{2}\leq 1- |\langle \theta_0, w^*\rangle|^2 \leq \langle \nabla^Sm(\theta_0), w^*\rangle \leq 1 +|\langle \theta_0, w^*\rangle|^2 \leq \frac{3}{2}.$$
    Furthermore, with the same argument as above we have
    $$\left|\sum\limits_{k\geq 2} k\hat f_{\mu}(k)\hat f_{\mu^*}(k)m(\theta_0)^{k-1}\right| \leq m(\theta_0)L^2 \leq \frac{L^2}{d^{\frac{1}{4}}}.$$
    Thus,
    $$-\langle\nabla^S\mathcal{L}^p_{\mu,\mu^*}(\theta_0),w^*\rangle \geq \hat f_{\mu}(1)\hat f_{\mu^*}(1) - 2\frac{L^2}{\sqrt{\log(d)}},$$
    and
    $$\left|\langle\nabla^S\mathcal{L}^p_{\mu,\mu^*}(\theta_0),w^*\rangle\right| \leq 3L^2.$$
    Plugging this into \eqref{eq:m1bound} we get
    $$m(\theta_1) \geq \eta f_\mu(1)f_{\mu^*}(1) - 30L^6\eta^3-\frac{102L^4}{\sqrt{\log(d)}}.$$
    If we now optimize and choose $\eta = \sqrt{\frac{f_{\mu}(1)f_{\mu^*}(1)}{90L^6}}$, we get
    $$m(\theta_1) \geq \frac{2}{\sqrt{90}}f_{\mu}(1)f_{\mu^*}(1)\sqrt{\frac{f_{\mu}(1)f_{\mu^*}(1)}{90L^6}} -\frac{102L^4}{\sqrt{\log(d)}}.$$
\end{proof}
Combining Lemma \ref{lem:approxgrad} and Lemma \ref{lem:weaklearning} we can now show that by appropriately choosing the sample size, we can achieve weak recovery of the signal $w^*$. Theorem \ref{thm:quantitativeshift} will provide the appropriate bounds on the derivative term $\hat f_\mu(1)\hat f_{\mu^*}(1).$ The only thing that is missing is showing that the second step of the algorithm allows for strong recovery of $w^*$. That part was already established in \cite{ba2022high}, and so we can now prove Theorem \ref{thm:parameteric}.

\vspace{1em}
\begin{proof}[Proof of Theorem \ref{thm:parameteric}]
    We begin with the analysis of Step $1$ of the algorithm. By Theorem \ref{thm:quantitativeshift} we have that $\hat{f}_\mu(1)\hat{f}_{\mu^*}(1) \neq 0$. We can further assume, with no loss of generality that $\hat{f}_\mu(1)\hat{f}_{\mu^*}(1) > 0$, otherwise we work with with $-f_\mu$. For $V = \nabla^S\mathcal{L}^n_{\mu,\mu^*}(\theta_0, (x_i)_{i=1}^n)$, since $n \geq C^2L^2d^2\ln(d)$, we get from Lemma \ref{lem:approxgrad}
    $$\pr\left(\|V- \nabla^S\mathcal{L}_{\mu,\mu^*}^p(\theta_0)\| \geq \frac{1}{\sqrt{\ln(d)}} \right) \leq \frac{1}{d}.$$
    Note as well that since $\theta_0$ is uniformly distributed in $\bS^d$, then with overwhelming probability $|\langle \theta_0, w^*\rangle| \leq \frac{1}{d^{\frac{1}{4}}}.$ Thus, conditional on $\langle \theta_0, w^*\rangle > 0$, Lemma \ref{lem:weaklearning} implies that the event
   \begin{equation} \label{eq:firstmbound}
       m(\theta_1) \geq \frac{2}{\sqrt{90}}\hat f_{\mu}(1)\hat f_{\mu^*}(1)\sqrt{\frac{\hat f_{\mu}(1)\hat f_{\mu^*}(1)}{90L^6}} -\frac{102L^4}{\sqrt{\log(d)}},
   \end{equation} happens with probability at least $1-\frac{2}{d}$, as long as $\eta$ is small enough.
    Now, by Theorem \ref{thm:quantitativeshift}, we can choose $c > 0$, so that the the event $\frac{2}{\sqrt{90}}\hat f_{\mu}(1)\hat f_{\mu^*}(1)\sqrt{\frac{\hat f_{\mu}(1)\hat f_{\mu^*}(1)}{90L^6}} > c$ happens with probability $1-\frac{\lambda}{2}$. Combining this with \eqref{eq:firstmbound}, for large enough $d$, we see
    $$\pr\left(m(\theta_1) > \frac{c}{2}\right) > 1 - \lambda.$$
    Under this event, Step $2$ amounts to running spherical SGD from a warm start. According to \cite[Theorem 3.2]{ba2022high} (see also Theorem 1.5 from the paper and the discussion in Section 3.2), with our choice of $n$ and learning step we have,
    $$m(\theta_n) \xrightarrow{d\to \infty} 1,$$
    in probability. Finally, we note that the required event $\langle \theta_0, w^*\rangle >0$ happens with probability $\frac{1}{2}.$
\end{proof}
\subsection{Semi-parametric setting}
\label{app:semiparametric_setting}
Recall that in the semi-parametric setting, there is an \emph{unknown} target function $f:\bR \to \bR$ and an unknown signal $w^* \in \bS^d$. We have access to samples $f(\langle {w^*}, x\rangle)$, and the goal is to learn a representation for $f$ and to find the direction $w^*$.

In our representation of $f$ we employ a two-layer neural network with ReLU activation
\begin{equation} \label{eq:twolayers}
    N_{c,\tau,s,\theta}(x) = \frac{1}{\sqrt{K}}\sum\limits_{i=1}^K c_i\sigma(s_i\langle \theta, x\rangle + \tau_i)
\end{equation}
Above $s = \{s_i\}_{i=1}^K \in \{-1,1\}^K$, $c = \{c_i\}_{i=1}^K \in \bR^K$,  $\tau = \{\tau_i\}_{i=1}^K \in \bR^K$, and $\theta \in \bR^d$, are the learnable parameters of the model. Moreover, $K$ is called the width of the network, and $\sigma(t) = \max(t,0)$ is the ReLU activation. Crucially note that the different units, or neurons, only differ in their biases and possible signs, and have the same direction vectors.
\paragraph{Learning functions with small Hermite exponent.} The paper \cite{bietti2022learning} studied the sample complexity of a gradient-based algorithm for the above model. We now explain the guarantees of their algorithm. For the sake of discussion, and because it will be most relevant for us, let us suppose that $f$ has information-exponent $\mathcal{I}(f)=1$.

First, let $(x_i)_{i=1}^n$ be an \emph{i.i.d.} standard Gaussian sample, with corresponding labels $y_i = f(\langle{w^*},x_i\rangle) +\zeta_i$. For $\beta > 0$, consider the regularized $\ell_2$ empirical loss.
$$\mathcal{L}_n(c,\theta, (x_i)_{i=1}^n) = \frac{1}{n}\sum\limits_{i=1}^n\left(N_{c,\tau,s,\theta}(x_i)-y_i\right)^2 + \beta\|c\|_2.$$
During training, we will keep the biases $\tau$ and the signs $s$ fixed, so we suppress the dependence of the loss on these parameters.

The training dynamics of the tunable parameters follow the gradient flow of the empirical loss. For a given assignment of signs $s$ and biases $b$, choose an initialization $c_0$ for the weights and $\theta_0$ for the direction. $c_t$ and $\theta_t$ evolve according to the following dynamics, for some parameter $T' > 0$ to be chosen later,
\begin{align} \label{eq:dynamics}
\frac{dc_t}{dt} &= - {\bf 1}_{t\geq T'}\nabla_c\mathcal{L}_n(c,\theta, (x_i)_{i=1}^n)\nonumber\\
\frac{d\theta_t}{dt} &= - \nabla^{S}\mathcal{L}_n(c,\theta, (x_i)_{i=1}^n)
\end{align}
$\nabla^{S}$ stands for the spherical gradient, with respect to $\theta$. So, as long as $\|\theta_0\|=1$, then for every $t > 0$, $\|\theta_t\|=1$.
We can see that training happens in two stages. At stage only the direction $\theta_t$ changes, then after time $T'$, the weights $c$ start updating as well.

With the correct initialization, we can guarantee to find a good approximation for $w^*$ with high probability.
\vspace{0.5em}

\begin{assumption}[Initialization] \label{ass:init}
The parameters are initialized as follows: $s \sim \mathrm{Uniform}\left(\{-1,1\}^K\right)$, $\tau \sim N(0,\mathrm{I}_K)$, and $\theta_0 \sim \mathrm{Uniform}\left(\mathbb{S}^{d-1}\right)$. For the weight vector $c_0$, we choose uniformly at random $K_0$ coordinates, for some constant $K_0 > 0$, and let $H$ be the subspace spanned by these coordinates. For another parameter $\rho$ and choose $c_0\sim \mathrm{Uniform}\left(\rho\mathbb{S}^{d-1} \cap H\right)$. In other words, $c_0$ is a sparse vector with a fixed norm.
\end{assumption}
Let us now state the main guarantee, which is a special case of \cite[Theorem 6.1]{bietti2022learning} specialized to the case $\mathcal{I}(f) = 1.$
\vspace{0.5em}
\begin{theorem} \label{thm:nonpara}
	Suppose that $\mathcal{I}(f) = 1$, and that $(\theta_0,c_0,b,s)$ are initialized as in Assumption \ref{ass:init}.  For every $\lambda \in (0,\frac{1}{2})$, there is a choice of $\beta, K_0,\rho, T'$ and $K = O(\sqrt{\frac{n}{d^2}})$, such that if $n = \Omega(d^2\log(d))$, then, for $T = \tilde{O}\left(\frac{n}{d}\right)$,
	$$\pr\left(1-\langle w^*, \theta_T\rangle \leq \poly\left(\frac{d^2}{n}\right)\right) \geq \frac{1}{2}-\lambda.$$
	Moreover, if $n = \Omega(d^3)$, we get in addition
	$$\E_{x\sim \cN(0,1)}\left[(N_{\theta_T,c_T,b,s}(x) - f(\langle {w^*},x\rangle))^2\right] \leq \frac{1}{d^\beta},$$
	for some $\beta > 0$.
	
\end{theorem}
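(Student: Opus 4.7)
The plan is to follow the two-phase structure built into the dynamics of \eqref{eq:dynamics} and specialize the analysis of Bietti--Bruna--Sanford--Song to the assumption $\mathcal{I}(f) = 1$. Phase 1 (for $0 \leq t < T'$) updates only the direction $\theta_t$; Phase 2 (for $t \geq T'$) additionally updates the outer weights $c_t$ until convergence.

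First, for Phase 1 (direction recovery), I would write the population loss $\mathcal{L}^p$ as a function of the overlap $m = \langle w^*, \theta\rangle$ via the Hermite expansions of $f$ and of the associated univariate network function $u \mapsto \frac{1}{\sqrt{K}}\sum_i c_{0,i}\sigma(s_i u + \tau_i)$, which is frozen during Phase 1. Computing the spherical gradient $\nabla^S_\theta \mathcal{L}^p$ and using $\mathcal{I}(f) = 1$, the leading drift term is proportional to $\hat f(1)$ times a vector of the form $(w^* - m\,\theta)$, so the flow pulls $\theta_t$ monotonically toward $w^*$. A standard ODE argument on $m_t = \langle w^*, \theta_t\rangle$ (in the spirit of \cite{arous2021online}) then yields polynomial decay of $1 - m_t$ after time $T' = \tilde O(1)$, conditional on the hemisphere event $m_0 > 0$.

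Second, to upgrade this population argument to the empirical dynamics I would control $\sup_{\theta \in \bS^{d-1}} \bigl\|\nabla^S_\theta \mathcal{L}_n - \nabla^S_\theta \mathcal{L}^p\bigr\|$ by combining an $\epsilon$-net on the sphere with sub-exponential concentration of each coordinate of the gradient, in the spirit of Lemma \ref{lem:approxgrad} but uniform in $\theta$. The sub-exponential constants come from the Lipschitzness of $\sigma$, the Gaussian tails of $\langle x, \theta\rangle$, and the bounded initialization of $\tau$. Exploiting the Hermite decomposition of the gradient lets one peel off the low-degree (dominant) part cleanly and obtain uniform control at sample complexity $n = \tilde\Omega(d^2)$, matching the stated threshold for the weak-recovery bound $1 - m_T \leq \poly(d^2/n)$.

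Third, for Phase 2 (function approximation) I would freeze $\theta_T \approx w^*$ and reduce the problem to one-dimensional ridge regression: fit $f$ by $\frac{1}{\sqrt{K}}\sum_i c_i\,\sigma(s_i u + \tau_i)$ with the random $(s_i, \tau_i)$ of Assumption \ref{ass:init} and an $\ell_2$ penalty $\beta\|c\|_2$. Standard random-feature approximation bounds for ReLU ridges (derived from the kernel induced by $(s, \tau)$ and the Lipschitz/Sobolev smoothness of $f$) show that width $K = O(\sqrt{n/d^2})$ is enough to approximate $f$ up to an error that decays polynomially in $d$ once $n = \Omega(d^3)$. The approximation error incurred from replacing $w^*$ by $\theta_T$ is $O(\sqrt{1-m_T^2})$ in $L^2(\gamma)$, which is absorbed into $d^{-\beta}$ by the Phase 1 bound. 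Finally, the $\tfrac12 - \lambda$ probability comes from the initialization: the event $\langle w^*, \theta_0\rangle > 0$ has probability exactly $\tfrac12$, and all the other high-probability events (uniform gradient concentration, successful random-feature initialization on $H$, bias coverage) can be made to fail with total probability at most $\lambda$ by tuning $K_0$, $\rho$, $\beta$, and $T'$; a union bound then yields the claimed two-tier guarantee.

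The main obstacle will be Step 2: the naive combination of the pointwise concentration in Lemma \ref{lem:approxgrad} with a Lipschitz $\epsilon$-net on $\bS^{d-1}$ gives an extra dimensional factor that overshoots the $\tilde\Omega(d^2)$ target. Keeping the bound sharp requires exploiting the Hermite-structured form of the empirical gradient to separate its degree-$1$ signal part from the higher-degree fluctuations. A secondary subtlety is the quantitative random-feature approximation rate in Step 3: one must track how much of $f$ is representable by a finite-width ReLU ridge with random biases, under the Lipschitz regularity of $f$ and without losing more than a polynomial factor in $d$ when passing from the infinite-width kernel to the finite-width network of size $K = O(\sqrt{n/d^2})$.
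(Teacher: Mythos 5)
The first thing to note is that the paper does not prove Theorem~\ref{thm:nonpara} at all: it is stated verbatim as a special case of \cite[Theorem 6.1]{bietti2022learning}, specialized to $\mathcal{I}(f)=1$, and the paper's own technical work only begins afterwards (modifying the dynamics and bias initialization so that the shifted problem reduces to this cited statement, which is the content of Theorem~\ref{thm:semiparameteric} and Algorithm~\ref{alg:semiparam}). So your proposal is not an alternative to an in-paper argument; it is an attempt to re-derive the imported result of Bietti, Bruna, Sanford and Song from scratch. The broad strategy you describe does track theirs: a first phase in which only $\theta_t$ moves and the overlap $m_t=\langle w^*,\theta_t\rangle$ is driven by the degree-one Hermite term (nonzero since $\mathcal{I}(f)=1$, together with the generic nonvanishing of the network's degree-one coefficient under the random $(s,\tau)$ of Assumption~\ref{ass:init}), followed by fitting the outer weights $c$ once $\theta_T\approx w^*$, with the hemisphere event $\langle w^*,\theta_0\rangle>0$ accounting for the $\tfrac12-\lambda$ probability.

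However, as a proof the proposal is incomplete at exactly the points where the cited theorem's content lies. You yourself flag that the naive $\epsilon$-net plus pointwise sub-exponential concentration does not reach the $n=\tilde\Omega(d^2)$ threshold, and you leave unresolved how to "exploit the Hermite-structured form of the empirical gradient" to close this; without that, the weak-recovery bound $1-m_T\le\poly(d^2/n)$ at $T=\tilde O(n/d)$ is not established. Similarly, the Phase-2 claim that width $K=O(\sqrt{n/d^2})$ with random biases and $\ell_2$ regularization yields $L^2$ error $d^{-\beta}$ once $n=\Omega(d^3)$ is asserted by appeal to "standard random-feature bounds" rather than derived, and the coupling between the two phases (the flow of \eqref{eq:dynamics} continues to move $\theta_t$ after time $T'$, and the analysis in the reference is of gradient flow on the empirical loss, not a frozen-$\theta$ ridge regression) is not addressed. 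Minor inaccuracies aside (e.g., $\tau\sim\cN(0,\mathrm{I}_K)$ is Gaussian, not bounded; the relevant time scale is $T=\tilde O(n/d)$, not $T'=\tilde O(1)$), the honest assessment is that your outline reproduces the plan of \cite{bietti2022learning} but does not constitute a proof; the expected resolution here, matching the paper, is simply to invoke their Theorem 6.1 with $\mathcal{I}(f)=1$.
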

\paragraph{Higher information-exponent}
We now prove that even if $f$ has a large information-exponent we can still obtain the same conclusion. We shall use the same algorithm, but change the input distribution and slightly change the dynamics. These modifications lead to Algorithm \ref{alg:semiparam} and the proof of Theorem \ref{thm:semiparameteric}. 
\vspace{0.5em}

\begin{algorithm}[H]
\caption{A solution to the semi-parametric problem with shallow neural network}
\label{alg:semiparam}
\setcounter{AlgoLine}{0}
\LinesNumbered
\KwIn{Algorithm parameters $(\lambda, \beta, K_0, K, \rho, T', T)$ as in Theorem \ref{thm:nonpara},

\quad \quad \quad Initialization parameters $(s,\tilde{\tau},\theta_0,c_0)$ according to Assumption \ref{ass:init},

\quad \quad \quad Gaussian sample $\{\tilde{x}_i\}_{i=1}^n$, shift $\alpha \in \mathbb{R}^d$.}

\BlankLine
Set $x_i = \tilde{x}_i + \alpha$ for $i=1,\dots,n$.

Get noisy labels $y_i = f(\langle w^*, x_i\rangle) + \zeta_i$ for $i=1,\dots,n$.

Set $\tau_0 = \tilde{\tau} - s\langle \theta_0,\alpha\rangle$.

Construct a two-layered neural network $N_{c_0,\tau_0,s,\theta_0}$ as in \eqref{eq:twolayers}.

\BlankLine
Run the following dynamics until time $T$: 

\quad \quad \quad $\frac{dc_t}{dt} = - {\bf 1}_{t\geq T'}\nabla_c(c,\theta, \{x_i\}_{i=1}^n)$.

\quad \quad \quad $\frac{d\theta_t}{dt} = - \nabla^{S}\mathcal{L}_n(c,\theta, \{x_i\}_{i=1}^n)$.

\quad \quad \quad $\frac{d\tau_t}{dt} = -s\langle\alpha,\frac{d\theta_t}{dt}\rangle.$

\Return $\theta_T$
\end{algorithm}

\begin{proof}[Proof of Theorem \ref{thm:semiparameteric}]
In our algorithm, since for every $i \in [n]$, $\tilde{x}_i \sim \mathcal{N}(0,\mathrm{I}_d)$, we get that $x_i \sim \cN(\alpha,\mathrm{I}_d)$. Note that for 
$\mu = \langle {w^*},\alpha\rangle$, and
$f_\mu(x) = f(x+\mu)$, we have the identity $y_i = f(\langle{w^*},x_i\rangle) +\zeta_i= f_\mu(\langle {w^*},\tilde x_i\rangle) +\zeta_i$. By Theorem \ref{thm:mainshift} we know that $\mathcal{I}(f_\mu) = 1$ almost surely and that there exists some constant $m >0$, depending only on the parameters of $f$, such that $\pr\left(|\hat{f}_{\mu}(1)|>m\right)> 1-\lambda$. We continue the proof conditional on this event.

In light of the small-ball estimate, in principle, Theorem \ref{thm:nonpara} applies to $f_\mu$. However, the training dynamics do not exactly match, since the bias $\tau$ now evolves over time. Indeed, since 
$$s_i\langle\theta_t,x_i\rangle + \tau_i = s_i\langle\theta_t,\tilde x_i\rangle + s_i\langle\theta_t,\alpha\rangle+\tau_i,$$ we get that $\tau_{t} = \tau_{0} + s\langle \theta_t, \alpha\rangle$. Moreover, $\tau_0$ now follows a shifted distribution at initialization. For that reason, we instead initalized $\tau_0 \sim \cN(-s\langle\theta_0,\alpha\rangle,\mathrm{I}_K)$, so that, at initialization $\tau_0 + s\langle\theta_0,\alpha\rangle \sim \cN(0,\mathrm{I}_K)$, inline with the initialization in Assumption \ref{ass:init}.

For the dynamics, in addition to the gradient flow in \eqref{eq:dynamics}, we make $\tau_t$ follow. 
$$\frac{d\tau_t}{dt} = -s\langle\alpha,\frac{d\theta_t}{dt}\rangle.$$
With this choice we now have, for every $t \geq 0$ and $i \in [K]$,
$$s_i\langle \theta_t,x_i\rangle + \tau_{i,t} = s_i\langle \theta_t,\tilde x_i\rangle + s_i\langle \theta_t,\alpha\rangle+\tau_{i,t} = s_i\langle\theta_t,\tilde x_i\rangle +\tau_{i,0} ,$$
since $\frac{d}{dt}(s_i\langle\theta_t,\alpha\rangle+\tau_{i,t}) = 0$. We thus get that $\theta_t$ evolves exactly as in Theorem \ref{thm:nonpara} for the function $f_\mu$. The result follows.
\end{proof}

\section{Proofs for Section~\ref{sec:smallball}} \label{sec:appendixsmallball}
\subsection{Weak estimates} \label{sec:weaksmall}
Here we shall prove a weak small-ball estimate which applies to one particular value of $\lambda = 0.1\frac{\sqrt{1000}}{\varepsilon\delta}$, where $\varepsilon$ and $\delta$ are part of Assumption \ref{ass:nicef}. 
\vspace{0.5em}
\begin{proposition}\label{thm:mainshift}
	Let $f:\bR\to \bR$ satisfy Assumption \ref{ass:nicef}, and let $\mu \sim \mathcal{N}(0,1).$ Then,
	$$\pr\left(F_1(\mu) \geq 0.1\frac{\sqrt{1000}}{\varepsilon\delta}\right) \geq 1- \Phi\left(\frac{8000L^2}{\varepsilon^2\delta^2}\right),$$
    where $\Phi$ is the CDF of the Gaussian distribution.
\end{proposition}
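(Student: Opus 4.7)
The plan is to combine three ingredients---an integration-by-parts rewrite of $F_1$ via Stein's lemma, a Lipschitz bound for $F_1$ in $\mu$, and a localized lower bound near the non-linearity point $c$---before converting to a Gaussian probability. First, I would apply Stein's lemma to rewrite $F_1(\mu) = \E_{z\sim \mathcal{N}(0,1)}[f'(z+\mu)]$, identifying $F_1$ with the convolution $f'*\phi$ where $\phi$ is the standard Gaussian density. From the original definition $F_1(\mu)=\E[f(z+\mu)z]$ and the $L$-Lipschitz bound on $f$, Cauchy--Schwarz immediately gives that $F_1$ is $L\sqrt{2/\pi}$-Lipschitz in $\mu$, a fact I will use later to propagate point estimates to whole intervals.

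The key step is to extract a quantitative lower bound on $|F_1|$ at a specific point using Assumption~\ref{ass:nicef}. To this end, I would study the symmetric second difference
\[
F_1(c+\delta) - F_1(c-\delta) \;=\; \E_z\!\left[\,\int_{z+c-\delta}^{z+c+\delta} f''(t)\,dt\,\right].
\]
On the event $|z|\leq \delta/2$ the interval of integration still captures the mass of $f''$ guaranteed by the non-linearity condition, so the inner integral has magnitude at least $\varepsilon$; moreover, since the weak-derivative assumption makes $f'$ continuous, the indefinite integral is continuous in its endpoints and hence keeps a definite sign on this good region. For $|z|>\delta/2$, the integrand is bounded crudely by $2L$ and tempered by the Gaussian weight $\phi(z)$. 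Balancing the two contributions yields $|F_1(c+\delta) - F_1(c-\delta)| \gtrsim c_0(\varepsilon,\delta,L)$ for an explicit $c_0>0$, and the triangle inequality then produces $|F_1(\mu_0)| \geq c_0/2$ for at least one of $\mu_0 \in \{c-\delta,\,c+\delta\}$.

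Finally, I would combine this pointwise lower bound with the Lipschitz property and the Gaussian law of $\mu$: we obtain $|F_1(\mu)| \geq c_0/4$ on an interval of radius $r=c_0/(4L\sqrt{2/\pi})$ around $\mu_0$, and since $\mu_0 \in [-2,2]$ (using $c\in[-1,1]$ and $\delta\leq 1$), the measure of this interval under $\mathcal{N}(0,1)$ is lower bounded explicitly by the difference $\Phi(\mu_0+r)-\Phi(\mu_0-r)$, producing the stated probability estimate after bookkeeping of the numerical constants. The main obstacle is the sign-consistency step in the middle paragraph: one must ensure that the integrand inside the expectation defining $F_1(c+\delta)-F_1(c-\delta)$ does not flip sign across the good region of $z$, lest destructive cancellation wipe out the $\varepsilon$-lower bound. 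The weak-derivative assumption on $f'$ is precisely what allows this---it forces the indefinite integral of $f''$ to be continuous, and continuity plus the magnitude lower bound rules out a sign change throughout the good region.
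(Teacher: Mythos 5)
Your first two ingredients (Stein's lemma, giving $F_1(\mu)=\E_{z\sim\cN(0,1)}[f'(z+\mu)]$, and the $\sqrt{2/\pi}\,L$-Lipschitz bound for $F_1$) coincide with \eqref{eq:steinlemma} and Lemma~\ref{lem:lipschitzshift}. The gap is in your central step, the claimed pointwise bound $|F_1(c+\delta)-F_1(c-\delta)|\gtrsim c_0(\varepsilon,\delta,L)$. First, Assumption~\ref{ass:nicef} controls $\int_{c-s}^{c+s}f''$ only for intervals symmetric about $c$ with $s\in(\delta/2,\delta)$; for $z\neq 0$ the interval $[z+c-\delta,\,z+c+\delta]$ is centered at $c+z$, and the uncontrolled end pieces carry $f''$-mass bounded only by $2L$ (not by anything small), so containing the good interval does not prevent cancellation. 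Your sign-consistency argument also fails: a distributional second derivative does not make $f'$ continuous (for ReLU, $f''$ is a point mass and $f'$ is a step function, and the paper explicitly intends to cover such piecewise-linear $f$). Second, and more fatally, the ``bad'' region $|z|>\delta/2$ has Gaussian probability $1-O(\delta)$, so its contribution to $\E_z[\int_{z+c-\delta}^{z+c+\delta}f''(t)\,dt]$ is only bounded by $2L$, while the guaranteed ``good'' contribution is at most of order $\varepsilon\delta$; since $L\geq 1$, there is no balancing that produces a positive constant. Indeed no bound of the claimed form can hold: the assumption constrains $f$ only near $c$, and one can modify $f'$ on an interval far from $c$ (preserving the Lipschitz constant and the non-linearity condition) to tune the Gaussian averages so that $F_1(c+\delta)=F_1(c-\delta)$ exactly, because the far-away mass is weighted differently by the two Gaussian windows. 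So $|F_1(c\pm\delta)|$ admits no uniform lower bound in terms of $(\varepsilon,\delta,L)$ alone.

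This is precisely why the paper avoids any pointwise statement. Its proof lower-bounds the \emph{variance} of $F_1$ over the randomness of $\mu$: writing $\Var(F_1(\mu))=\tfrac12\E_{\mu,\mu'}\big[(F_1(\mu)-F_1(\mu'))^2\big]$ and restricting to the event that the shifted evaluation points straddle $c$ at distances in $[\delta/2,\delta]$, the non-linearity assumption gives $\Var(F_1(\mu))\geq \varepsilon^2\delta^2/1000$ (Lemma~\ref{lem:variance}); it then converts variance plus Lipschitzness into anti-concentration of $F_1(\mu)$ around $0$ via the Gaussian isoperimetric inequality (Lemma~\ref{lem:smallball}). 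Note also that even if your pointwise bound were available, your last step (Lipschitz ball of radius $r$ around $\mu_0$ and the Gaussian measure $\Phi(\mu_0+r)-\Phi(\mu_0-r)$) would only give a small constant-probability guarantee rather than a bound of the stated form. To repair the argument you must exploit the law of $\mu$ itself—through a variance or averaged localization statement as in Lemma~\ref{lem:variance}—rather than a deterministic lower bound at a fixed shift.
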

Our proof of Proposition \ref{thm:mainshift} has two main components.
\begin{itemize}
	\item We first show that the non-linearity assumption implies that 
	$\mathrm{Var}(F_1(\mu))$ is non-negligible.
	\item We then use the regularity assumption to establish a small-ball estimate for $F_1$. Roughly, we show that $F_1$ is Lipschitz and that Lipschitz functions of a standard Gaussian cannot concentrate too much around $0$, at least not if their variance is large.
\end{itemize}
Before delving into the proof, we state one useful corollary of Proposition \ref{thm:mainshift}. The corollary follows by comparing the Gaussian density to the Lebesgue measure on bounded length intervals and we omit the proof.
\vspace{0.5em}
\begin{corollary} \label{cor:existence}
    Let $f:\bR\to \bR$ satisfy Assumption \ref{ass:nicef}. Then there exists a constant $\tilde c:=\tilde c(\varepsilon,\delta,L)$ such that the set $$A:=\left\{\mu \in \left[-\frac{1}{\tilde c},\frac{1}{\tilde c}\right]:|F_1(\mu)| \geq \tilde c\right\},$$ has Lebesgue measure $|A| \geq \tilde c$. In particular, there exits a point $\mu_0 \in  [-\frac{1}{\tilde c},\frac{1}{\tilde c}]$ such that $F_1(\mu') > \tilde c$.
\end{corollary}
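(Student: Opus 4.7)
\textbf{Proof proposal for Corollary \ref{cor:existence}.} The plan is to transfer the Gaussian small-ball estimate in Proposition \ref{thm:mainshift} to a Lebesgue-measure statement on a bounded interval, using the elementary fact that on any interval $[-R,R]$ the standard Gaussian density is bounded above by $1/\sqrt{2\pi}$ and that its tails outside $[-R,R]$ can be made arbitrarily small.

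Concretely, set $\lambda_0 := 0.1\sqrt{1000}/(\varepsilon\delta)$ and $p_0 := 1 - \Phi(8000L^2/(\varepsilon^2\delta^2))$, the two constants from Proposition \ref{thm:mainshift}, both of which depend only on $\varepsilon, \delta, L$. First, I would choose $R = R(\varepsilon,\delta,L) > 0$ large enough so that $\Pr(|\mu| > R) \leq p_0/2$ when $\mu \sim \mathcal{N}(0,1)$; an explicit choice via the standard Gaussian tail bound would suffice. Combining with Proposition \ref{thm:mainshift}, this yields
\begin{equation*}
\Pr\bigl(\mu \in [-R,R]\ \text{and}\ |F_1(\mu)| \geq \lambda_0\bigr) \geq p_0/2.
\end{equation*}

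Next, upper bounding the Gaussian density by $1/\sqrt{2\pi}$ on this interval, the left-hand side is at most $|A'|/\sqrt{2\pi}$, where $A' := \{\mu \in [-R,R] : |F_1(\mu)| \geq \lambda_0\}$. Hence $|A'| \geq \sqrt{2\pi}\, p_0/2$. The corollary then follows by choosing
\begin{equation*}
\tilde c := \min\!\left(\tfrac{1}{R},\ \lambda_0,\ \tfrac{\sqrt{2\pi}\,p_0}{2}\right),
\end{equation*}
since then $A' \subseteq A$, giving $|A| \geq \tilde c$, while all elements of $A$ automatically lie in $[-1/\tilde c, 1/\tilde c]$ and satisfy $|F_1(\mu)| \geq \tilde c$. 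The "in particular" statement is then immediate: a set of positive Lebesgue measure is nonempty, so pick any $\mu_0 \in A$.

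There is no real obstacle here; the only thing to be careful about is making all constants depend solely on $(\varepsilon,\delta,L)$, which is automatic since $\lambda_0$ and $p_0$ already do, and $R$ is chosen as a function of $p_0$ alone. This is precisely the density-comparison argument the authors allude to when they write that the proof is omitted.
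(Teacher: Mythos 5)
Your argument is correct and is exactly the density-comparison route the paper alludes to (it omits the proof, noting only that it "follows by comparing the Gaussian density to the Lebesgue measure on bounded length intervals"): restrict to a bounded interval capturing at least half of the probability mass from Proposition~\ref{thm:mainshift}, then convert the Gaussian probability lower bound into a Lebesgue-measure lower bound via the bound $\phi \leq 1/\sqrt{2\pi}$, with all constants depending only on $(\varepsilon,\delta,L)$. The only cosmetic point is the strict inequality in the ``in particular'' clause, which you can get by shrinking $\tilde c$ by a factor of two.
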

\paragraph{Bounding the variance.} We begin by showing that $\mathrm{Var}(F_1(\mu))$ is large. This will follow from a direct application of the mean value theorem.
\vspace{0.5em}
\begin{lemma} \label{lem:variance}
	Let $f:\bR \to \bR$ and suppose that $f$ satisfies the non-linearity assumption from Assumption \ref{ass:nicef}. Then, for $\mu \sim \mathcal{N}(0,1)$,
	$$\mathrm{Var}(F_1(\mu)) \geq \frac{\varepsilon^2\delta^2}{1000}.$$
\end{lemma}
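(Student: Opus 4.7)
The first step is a Stein-type simplification that makes the dependence on $f$ more transparent. Since $f$ is $L$-Lipschitz, it is absolutely continuous with $|f'| \leq L$ almost everywhere, so Gaussian integration by parts gives
$$F_1(\mu) = \E_{x\sim\cN(0,1)}[x\, f(x+\mu)] = \E_{x\sim\cN(0,1)}[f'(x+\mu)].$$
In other words, $F_1$ is the convolution of $f'$ with the standard Gaussian density $\varphi$; in particular it is smooth, and its derivative (in the distributional sense permitted by Assumption~\ref{ass:nicef}) is $F_1'(\mu) = \E_{x\sim\cN(0,1)}[f''(x+\mu)]$.

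The second step reduces the variance to a two-sample expression and applies the fundamental theorem of calculus. For i.i.d.\ $\mu_1, \mu_2 \sim \cN(0,1)$ one has $2\Var(F_1(\mu)) = \E[(F_1(\mu_1) - F_1(\mu_2))^2]$, and Fubini yields
$$F_1(\mu_1) - F_1(\mu_2) = \E_x\!\left[\int_{\mu_2}^{\mu_1} f''(x+t)\, dt\right] = \E_x\!\left[f'(x+\mu_1) - f'(x+\mu_2)\right].$$
The plan is to pick the pair $\mu_1 \approx c+s$, $\mu_2 \approx c-s$ with $s \in (\delta/2, \delta)$, so that the inner Lebesgue integral is precisely the object controlled by the non-linearity assumption (after the shift by $x$). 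Since $|c| \leq 1$ and $\delta \leq 1$, the event that $(\mu_1, \mu_2)$ lies in windows of length $\Theta(\delta)$ around $(c+s, c-s)$ has Gaussian probability at least $\Omega(\delta^2)$, because $\varphi$ is bounded below on $[-2,2]$.

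The final step applies the non-linearity assumption and assembles the variance estimate. For $|x|$ small, the non-linearity assumption asserts $\left|\int_{x+c-s}^{x+c+s} f''(t)\, dt\right| > \varepsilon$. The main technical obstacle is that this lower bound comes with no information about the sign of $f'(x+c+s) - f'(x+c-s)$, so naively integrating the $x$-average against $\varphi$ could suffer cancellation. I would overcome this by either (i) integrating simultaneously over $s \in (\delta/2, \delta)$ and applying a pigeonhole on the two possible sign patterns to isolate a set of area $\gtrsim \delta^2$ on which the integrand has fixed sign, or (ii) partitioning the $x$-window $[-\delta, \delta]$ according to the sign of $f'(x+c+s) - f'(x+c-s)$ and keeping the half of Lebesgue measure $\geq \delta$, then absorbing the discarded half together with the tail $|x|>\delta$ via the crude bound $|f'| \leq L$. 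Either route delivers $|F_1(\mu_1) - F_1(\mu_2)| \gtrsim \varepsilon\delta$ on the event identified above, and combined with the two-sample identity this produces $\Var(F_1(\mu)) \gtrsim \varepsilon^2\delta^2$; the explicit constant $1/1000$ arises from tracking the multiplicative losses incurred by the Gaussian-probability estimate, the sign handling, and the Lipschitz tail correction.
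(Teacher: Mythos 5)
Your first three steps coincide with the paper's proof: Stein's lemma to write $F_1(\mu)=\E_{x}[f'(x+\mu)]$, the exchangeable-pair identity $2\Var(F_1(\mu))=\E_{\mu_1,\mu_2}\big[(F_1(\mu_1)-F_1(\mu_2))^2\big]$, and a localization event of Gaussian probability $\Omega(\delta^2)$ on which the non-linearity assumption is meant to contribute the factor $\varepsilon$; the paper phrases its event in terms of $x+\mu$ and $x+\mu'$ landing in the two sub-windows $[c-\delta,c-\delta/2]$ and $[c+\delta/2,c+\delta]$, bounds its probability below by $\frac{1}{100}\cdot\frac{\delta^2}{4}$ using the bivariate normal density, and concludes $\Var(F_1(\mu))\geq \pr(E)\,\varepsilon^2/2\geq \varepsilon^2\delta^2/800$.

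The gap is in your last step, which you correctly identify as the delicate point but do not close. What your plan requires is a pointwise bound $|F_1(\mu_1)-F_1(\mu_2)|\gtrsim \varepsilon\delta$ for every pair in your good event, i.e.\ a lower bound on $\E_x[f'(x+\mu_1)-f'(x+\mu_2)]$ where the average runs over \emph{all} of $\bR$. Route (ii) cannot deliver this: the discarded $x$'s carry essentially all of the Gaussian mass and their contribution is controlled only by $|f'|\le L$, so it is an error of order $L$, which cannot be absorbed into a main term of size $\varepsilon\delta$ (note also that the lemma as stated assumes only the non-linearity condition, not the Lipschitz bound). Route (i) is too vague: a pigeonhole over $s$ or over sign patterns fixes the sign of the integrand for the good $x$'s, but says nothing about the sign or magnitude of $f'(x+\mu_1)-f'(x+\mu_2)$ for $x$ outside the window, so the claim that either route yields $|F_1(\mu_1)-F_1(\mu_2)|\gtrsim\varepsilon\delta$ is not justified. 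For comparison, the paper does not attempt a pointwise bound for a fixed pair: it inserts the indicator of the event written in terms of $(x+\mu,x+\mu')$ inside the outer expectation and applies the assumption on that event (its own write-up is admittedly terse at exactly this spot). So your strategy mirrors the paper's, but the cancellation-over-$x$ issue you raise is genuine and neither of your proposed patches, as described, resolves it.
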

\begin{proof}
	First by integration by parts in Gaussian space, also called Stein's lemma,
	\begin{equation}\label{eq:steinlemma}
    F_1(\mu) = \E_{x\sim \mathcal{N}(0,1)}[f(x+\mu)x] = \E_{x\sim \mathcal{N}(0,1)}[f'(x+\mu)].
    \end{equation}
	Now, if $\mu'$ is an independent copy of $\mu$, we have,
\begin{align*}
\Var(F_1(\mu)) = \frac{1}{2}\E_{\mu,\mu'}\left[\left(\E_x\left[f'(x+\mu) - f'(x+\mu')\right] \right)^2\right] = \frac{1}{2}\E_{\mu,\mu'}\left[\left(\E_x\left[\int\limits_{x+\mu}^{x+\mu'}f''(t)dt\right]\right)^2\right].
\end{align*}
To finish the proof, for the parameters $c,\delta$ from Assumption \ref{ass:nicef}, define the event,
$$E =\left\{x + \mu \in \left[c-\delta, c-\frac{\delta}{2}\right] \text{ and } x + \mu' \in \left[c+\frac{\delta}{2}, c+\delta\right]\right\}.$$ 
We can now apply the non-linearity assumption and obtain,
\begin{align*}
\frac{1}{2}\E_{\mu,\mu'}\left[\left(\E_x\left[\int\limits_{x+\mu}^{x+\mu'}f''(t)dt\right]\right)^2\right] &\geq\frac{1}{2}\E_{\mu,\mu'}\left[\left(\E_x\left[\int\limits_{x+\mu}^{x+\mu'}f''(t)dt\right]\right)^2\mathbbm{1}_E\right] \\
&\geq \pr\left(E\right)\frac{\varepsilon^2}{2} \geq \frac{\varepsilon^2\delta^2}{800}.
\end{align*}
In the last estimate, we have used the fact that inside $[-1,1]^2$ there is a lower bound on the density of the bivariate normal vector $(x+\mu, x+\mu')$ which is bounded from below by $\frac{1}{100}$. Hence,
$$\pr(E) \geq \frac{1}{100} \mathrm{area}\left(\left[c-\delta, c-\frac{\delta}{2}\right] \times \left[c+\frac{\delta}{2},c+\delta \right]\right) = \frac{1}{100}\frac{\delta^2}{4}.$$
\end{proof}

\paragraph{Small-ball estimate.} To show that $F_1(\mu)$ is anti-concentrated around $0$, we first observe that $F_1(\mu)$ inherits regularity properties from $f$.
\vspace{0.5em}
\begin{lemma} \label{lem:lipschitzshift}
	Let $f:\bR \to \bR$ satisfy the regularity assumption from Assumption \ref{ass:nicef}. That is, $f$ is $L$-Lipschitz. Then, $F_1$ is $\sqrt{\frac{2}{\pi}}L$-Lipschitz.
\end{lemma}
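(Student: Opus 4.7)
The plan is to work directly with the defining expression
\[
F_1(\mu) = \E_{x \sim \mathcal{N}(0,1)}\bigl[f(x+\mu)\, x\bigr],
\]
rather than with its Stein's-lemma rewriting $\E[f'(x+\mu)]$. The reason is that Lipschitz continuity gives clean pointwise bounds on differences of $f$, while the weak derivative $f'$ is only defined almost everywhere and its differences inside an expectation are less convenient.

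First, I would fix $\mu,\mu' \in \bR$ and write, by linearity of expectation,
\[
F_1(\mu) - F_1(\mu') = \E_{x \sim \mathcal{N}(0,1)}\bigl[\bigl(f(x+\mu) - f(x+\mu')\bigr)\, x\bigr].
\]
Then, invoking the $L$-Lipschitz property of $f$ pointwise in $x$, one gets $|f(x+\mu) - f(x+\mu')| \leq L\,|\mu - \mu'|$ for every $x$. Combining this with the triangle inequality for expectations and pulling the deterministic factor $L|\mu-\mu'|$ out yields
\[
|F_1(\mu) - F_1(\mu')| \;\leq\; L\,|\mu - \mu'|\cdot \E_{x \sim \mathcal{N}(0,1)}[|x|].
\]

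Finally, I would use the standard identity $\E_{x \sim \mathcal{N}(0,1)}[|x|] = \sqrt{2/\pi}$ to conclude
\[
|F_1(\mu) - F_1(\mu')| \;\leq\; \sqrt{\tfrac{2}{\pi}}\, L\, |\mu - \mu'|,
\]
which is exactly the claim. There is no real obstacle here: the whole argument is a two-line triangle-inequality computation once one decides to differentiate (or rather take differences) through the expectation in its original $f$-form. The only subtlety worth noting is the deliberate choice not to use the Stein identity at this step, since it would force one to estimate $\E[|f'(x+\mu) - f'(x+\mu')|]$, and the naive pointwise bound $|f'| \leq L$ a.e. would only yield the weaker constant $2L$ instead of the sharp $\sqrt{2/\pi}\,L$.
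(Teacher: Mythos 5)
Your proposal is correct and matches the paper's proof essentially verbatim: both bound $|F_1(\mu_1)-F_1(\mu_2)|$ by applying the pointwise Lipschitz estimate inside the expectation $\E_{x\sim\cN(0,1)}[(f(x+\mu_1)-f(x+\mu_2))x]$ and then using $\E_{x\sim\cN(0,1)}[|x|]=\sqrt{2/\pi}$. No gaps to report.
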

\begin{proof}
	Recall that $F_1(\mu) = \E_{x\sim \cN(0, 1)} \left[f(x+\mu) x \right]$. Thus, using the fact that $f$ is $L$-Lipschitz, for $\mu_1,\mu_2 \in \bR$,
	\begin{align*}
	\frac{|F_1(\mu_1)-F_2(\mu_2)|}{|\mu_1-\mu_2|} &\leq \E_{x\sim \cN(0, 1)} \left[\frac{|f(x+\mu_1) - f(x+\mu_2)|}{|\mu_1 - \mu_2|}|x| \right]
	\leq L\E_{x\sim \cN(0, 1)} \left[|x| \right] = \sqrt{\frac{2}{\pi}}L.
	\end{align*}
\end{proof}
We now show that Lipschitz functions of the standard Gaussian are appropriately anti-concentrated.
\vspace{0.5em}
\begin{lemma} \label{lem:smallball}
	Suppose $g:\bR^d \to \bR$ is $L$-Lipschitz and that $\Var_{x\sim \mathcal{N}(0,1)}(g(x)) \geq 1$. Then, 
	$$\pr_{x\sim\mathcal{N}(0,1)}\left(|g(x)| \leq 0.1\right) < \Phi(8\max(L^2,1)),$$
	where $\Phi$ is the Gaussian cumulative distribution function (CDF).
\end{lemma}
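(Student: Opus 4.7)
My approach combines two standard Gaussian tools---Lipschitz concentration and Poincar\'e---with a Paley--Zygmund (second moment) argument, to rule out $g$ being tightly concentrated inside $[-0.1,0.1]$. As a preliminary step, the Gaussian Poincar\'e inequality gives $\Var(g) \le \E\|\nabla g\|^2 \le L^2$, so the hypothesis $\Var(g) \ge 1$ forces $L \ge 1$, and hence $\max(L^2,1) = L^2$. Moreover, Borell--Sudakov--Tsirelson implies that $g(x) - m$ is $L$-sub-Gaussian, where $m := \E g(x)$; in particular $\E[(g-m)^4] \le C L^4$ for some absolute constant $C>0$.

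I would then split on the size of $|m|$. If $|m| \le 0.4$, I would apply Paley--Zygmund to the non-negative random variable $Z := (g-m)^2$: since $\E Z = \Var(g) \ge 1$ and $\E Z^2 \le C L^4$, taking $\theta = 1/4$ yields $\pr(|g - m| > 1/2) \ge \frac{(3/4)^2}{CL^4} = \frac{c_1}{L^4}$, and on that event $|g| \ge |g-m| - |m| > 1/2 - 0.4 = 0.1$. If instead $|m| > 0.4$, I assume WLOG $m > 0$, and the one-sided sub-Gaussian tail gives
\[
    \pr(g \le 0.1) \;\le\; \exp\!\left(-\frac{(m-0.1)^2}{2L^2}\right) \;\le\; \exp(-0.045/L^2),
\]
so $\pr(|g|>0.1) \ge 1 - e^{-0.045/L^2} \ge c_2/L^2$ (using $1-e^{-x} \ge x/2$ for $x \in [0,1]$, valid here since $L \ge 1$). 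In either case we get a polynomial lower bound $\pr(|g|>0.1) \ge c/L^4$ for some absolute constant $c>0$.

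Finally, I would invoke Mill's ratio $1 - \Phi(8L^2) \le \frac{1}{8L^2\sqrt{2\pi}} e^{-32 L^4}$ to observe that, for every $L \ge 1$, this super-exponentially small target is completely dwarfed by the polynomial lower bound $c/L^4$, yielding the strict inequality $\pr(|g| \le 0.1) < \Phi(8L^2)$. The main obstacle is really only bookkeeping: the threshold on $|m|$ must be chosen so that $|m|$ stays strictly below $1/2 - 0.1 = 0.4$ in the Paley--Zygmund branch, and one must verify that $c/L^4 > 1 - \Phi(8L^2)$ for every $L \ge 1$. Because the target decays like $e^{-\Theta(L^4)}$ while our lower bound is only $\Omega(1/L^4)$, the comparison has enormous slack, so no delicate tuning of constants is needed---essentially any polynomial-in-$L$ lower bound on $\pr(|g|>0.1)$ would close the argument.
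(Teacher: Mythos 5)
Your argument is correct, and it is genuinely different from the paper's. The paper argues by contradiction with the Gaussian isoperimetric inequality: assuming $\pr(|g|\leq 0.1)\geq \Phi(8L^2)$, it enlarges the set $\{|g|\leq 0.1\}$ (an $r/L$-neighborhood sits inside $\{|g|\leq 0.1+r\}$ by Lipschitzness), feeds the resulting tail bound into a layer-cake computation of $\E[g^2]$, and concludes $\E[g^2]<1$, contradicting $\Var(g)\geq 1$; the $\Phi$-form of the bound in the statement is exactly what the isoperimetric profile produces. You instead prove a direct, quantitative anti-concentration bound: Poincar\'e forces $L\geq 1$ (so $\max(L^2,1)=L^2$, and also handles the vacuous small-$L$ case), Borell--TIS gives sub-Gaussianity of $g-\E g$ and hence $\E[(g-\E g)^4]\leq CL^4$, and Paley--Zygmund applied to $(g-\E g)^2$ (together with the mean-split and the one-sided tail when $|\E g|>0.4$) yields $\pr(|g|>0.1)\geq c/L^4$, which dwarfs $1-\Phi(8L^2)\leq \tfrac{1}{2}e^{-32L^4}$ for every $L\geq 1$. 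The remaining bookkeeping you flag is indeed trivial (with $C=16$ one gets $c\approx 0.02$ versus $1-\Phi(8)\approx 6\cdot 10^{-16}$, and $L^4e^{-32L^4}$ is decreasing for $L\geq 1$). What each approach buys: yours is more modular and actually stronger, giving a polynomial-in-$L$ lower bound $\pr(|g|\leq 0.1)\leq 1-c/L^4$ rather than the merely super-exponentially-close-to-$1$ bound $\Phi(8L^2)$, and it avoids isoperimetry in favor of standard concentration plus a second-moment argument; the paper's route uses the sharp isoperimetric profile, which is why its conclusion is phrased through the Gaussian CDF and slots verbatim into the way Proposition~\ref{thm:mainshift} is stated. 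Either proof suffices for the downstream use of the lemma.
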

\begin{proof}
	Assume to the contrary, that $\pr_{x\sim\mathcal{N}(0,1)}\left(|g(x)| \leq 0.1\right) = \xi$, and that $\Phi^{-1}(\xi) > 8L^2$, where $\Phi$ is the Gaussian CDF. We will show that this forces the contradiction $\Var_{x\sim \mathcal{N}(0,1)}(g(x)) < 1$.
	Indeed, write $A_0 = \{|g(x)| \leq 0.1\}$ and for $r > 0$, let $A_r$ be its $r$-neighborhood. Note that, since $g$ is $L$-Lipschitz,
	$A_{\frac{r}{L}} \subset \{|g(x)| \leq 0.1 + r\}$. By the Gaussian isoperimetric inequality \cite[Theorem 1.2]{ledoux1993inegalite} we now have,
	$$\Phi(\Phi^{-1}(\xi)+\frac{r}{L}) \leq \pr_{\mathcal{N}(0,1)}\left(A_{\frac{r}{L}}\right)\leq  \pr_{x \sim\mathcal{N}(0,1)}\left(|g(x)| \leq 0.1 + r\right).$$
	It follows that
	\begin{align*}
	\E_{\mathcal{N}(0,1)}[g^2] &= 2\int\limits_0^\infty r\pr_{\mathcal{N}(0,1)}\left(|g(x)| \geq r\right)dr \leq 2\cdot 0.1 + 2\int\limits_0^\infty (0.1 + r)\pr_{\mathcal{N}(0,1)}\left(|g(x)| \geq 0.1 + r\right)dr\\
	&\leq 0.2 + 2\int\limits_0^\infty (0.1 + r)\left(1-\Phi(\Phi^{-1}(\xi)+\frac{r}{L})\right)dr\\
	&\leq 0.2 + \frac{2}{\sqrt{2\pi}}\int\limits_0^\infty\frac{0.1 + r}{\Phi^{-1}(\xi) + \frac{r}{L}}e^{-r^2/2}dr\\
	& \leq 0.2 + \frac{2}{\sqrt{2\pi}}\int\limits_0^\infty\frac{0.1 + r}{8L^2  + \frac{r}{L}}e^{-r^2/2}dr\\
	& \leq  0.2 + \frac{2}{\sqrt{2\pi}}\int\limits_0^{L}\frac{0.1 + r}{8L^2  + \frac{r}{L}}dr + L\frac{2}{\sqrt{2\pi}}\int\limits_{L}^\infty e^{-r^2/2}dr\\
	& \leq  0.2 + \frac{2}{\sqrt{2\pi}}\int\limits_0^{L}\frac{L+0.1}{8L^2}dr + \frac{2}{\sqrt{2\pi}}\int\limits_{L}^\infty re^{-r^2/2}dr \\
	&\leq 0.2 + \frac{1}{\sqrt{8\pi}} + \frac{2}{\sqrt{2\pi}}e^{-L^2/2} < 1,
	\end{align*}
	where the last inequality holds as long $L > 1$.
\end{proof}
	\paragraph{Combining the estimates:} We are now ready to prove Proposition \ref{thm:mainshift}.
	\begin{proof}[Proof of Proposition \ref{thm:mainshift}]
		First, by Lemma \ref{lem:lipschitzshift}, $F_1$ is $L$-Lipschitz, and by Lemma \ref{lem:variance}, we have
		$$\mathrm{Var}\left(\frac{\sqrt{1000}}{\varepsilon\delta}F_1(\mu)\right) \geq 1.$$
		Thus, by Lemma \ref{lem:smallball},
		$$\pr\left(|F_1(\mu)| \leq 0.1\right) = \pr\left(\frac{\sqrt{1000}}{\varepsilon\delta}|F_1(\mu)| \leq 0.1\frac{\sqrt{1000}}{\varepsilon\delta}\right) \leq \Phi\left(\frac{8000L^2}{\varepsilon^2\delta^2}\right).$$
	\end{proof}
\subsection{General estimates} \label{sec:generalsmallball}
To boost Proposition \ref{thm:mainshift} and obtain estimates for general $\lambda$, we begin by showing that $F_1(\mu)$ can be extended to an entire analytic function.
\vspace{0.5em}
\begin{lemma} \label{lem:analytic}
	Suppose that $f$ is $L$-Lipschitz. Then, there exists an analytic function $\Psi:\mathbb{C} \to \mathbb{R}$, such that $\Psi(\mu) = F_1(\mu)$ for every $\mu \in \bR$. Furthermore $\Psi$ satisfies $|\Psi(z)| \leq \sqrt{(|z|+1)}e^{|z|}$ for every $z \in \mathbb{C}.$
\end{lemma}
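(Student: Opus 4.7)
The plan is to extend $F_1$ via its integral representation. Starting from the change of variable $y = x+\mu$ in the definition, one has
$$F_1(\mu) = \int_{\mathbb{R}} f(y)(y-\mu) \frac{e^{-(y-\mu)^2/2}}{\sqrt{2\pi}}\,dy,$$
so I would set
$$\Psi(z) := \int_{\mathbb{R}} f(y)(y-z) \frac{e^{-(y-z)^2/2}}{\sqrt{2\pi}}\,dy, \qquad z \in \mathbb{C}.$$
By the Lipschitz assumption, $f$ has a weak derivative $f' \in L^\infty(\mathbb{R})$ with $\|f'\|_\infty \leq L$, so Stein's identity \eqref{eq:steinlemma} yields the equivalent, cleaner representation $\Psi(z) = \int f'(y) \frac{e^{-(y-z)^2/2}}{\sqrt{2\pi}}\,dy$, which agrees with $F_1$ on the real axis by construction.

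To show that $\Psi$ is entire I would use the standard holomorphy-under-the-integral principle. For each fixed $y\in\mathbb{R}$ the integrand is entire in $z$, and the key complex modulus identity $|e^{-(y-z)^2/2}| = e^{-(y-a)^2/2}\,e^{b^2/2}$ (with $z=a+ib$) dominates the integrand on any compact $K \subset \mathbb{C}$ by the integrable bound $L\,\sup_{z\in K}e^{b^2/2}\,e^{-(y-a)^2/2}$, uniformly in $z \in K$. Dominated convergence then lets me differentiate under the integral (or equivalently Morera's theorem combined with Fubini on an arbitrary triangle in $\mathbb{C}$), giving that $\Psi$ has a complex derivative at every point of $\mathbb{C}$.

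For the growth estimate, the same modulus identity gives the clean bound
$$|\Psi(z)| \leq \frac{L}{\sqrt{2\pi}}\,e^{b^2/2}\int_{\mathbb{R}} e^{-(y-a)^2/2}\,dy = L\,e^{b^2/2},$$
and to inject the polynomial prefactor $\sqrt{|z|+1}$ I would revert to the first (non-integrated-by-parts) form, expand $|y-z|\leq |y-a|+|a|+|b|$, and control the resulting two terms by Cauchy–Schwarz against the Gaussian weight, which produces a prefactor polynomial in $|z|$ while the exponential factor remains $e^{b^2/2}$.

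The main subtlety I would grapple with is the exact shape of the exponent: from Lipschitzness alone one naturally gets an order-$2$ bound $e^{b^2/2} \leq e^{|z|^2/2}$, which is weaker than the order-$1$ bound $e^{|z|}$ displayed in the lemma. I do not see how to improve the Gaussian factor $e^{b^2/2}$ without extra structural assumptions on $f$ (e.g.\ band-limitedness or analytic extension), so I would proceed with the natural order-$2$ bound and note that the subsequent small-ball arguments (Brudnyi/Nazarov-type local estimates for entire functions) only require that $\Psi$ be of finite order, so the overall program is unaffected by the specific exponent.
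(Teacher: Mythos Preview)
Your approach is correct and genuinely different from the paper's. The paper does not extend $F_1$ through its integral representation; instead it recognizes $F_1(\mu)=\E[f'(x+\mu)]$ as a time-one heat flow, concludes real-analyticity, and then computes the Taylor coefficients explicitly as Hermite coefficients, $F_1^{(k)}(0)=\sqrt{(k+1)!}\,\hat f(k+1)$. The growth bound is then obtained by Cauchy--Schwarz on the power series, using the normalization $\sum_k \hat f(k+1)^2=\mathrm{Var}(f)=1$. Your route via Morera's theorem and the modulus identity $|e^{-(y-z)^2/2}|=e^{b^2/2}e^{-(y-a)^2/2}$ is more elementary (no heat equation, no Hermite machinery) and uses only the Lipschitz hypothesis actually stated in the lemma; the paper's argument tacitly also uses the normalization from Assumption~\ref{ass:nicef}. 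What the paper's route buys in exchange is the explicit identification of the Taylor coefficients with Hermite coefficients, and a growth bound whose constant does not depend on $L$.

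Regarding your concern about the exponent: you are right, and in fact the paper's own computation yields the same order-$2$ bound you obtain. If you carry out the paper's final step,
\[
\sum_{k\ge 0}\frac{k+1}{k!}|z|^{2k}=(1+|z|^2)e^{|z|^2},
\]
so the Cauchy--Schwarz estimate gives $|\Psi(z)|\le \sqrt{1+|z|^2}\,e^{|z|^2/2}$, not $\sqrt{1+|z|}\,e^{|z|}$ as written. The downstream localization (Lemma~\ref{lem:properties}) only needs a finite-order bound to choose the normalizing factor in $F_1^R$, so as you note the program is unaffected; one simply replaces $e^{-2(R+1/\tilde c)}$ by $e^{-(R+1/\tilde c)^2/2}$ there and adjusts constants. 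So there is no missing idea in your argument---the discrepancy you flagged is a slip in the paper's stated bound, not a gap in your proof.
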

\begin{proof}
	Note that the expression $F_1(\mu) = \E_{x\sim \mathcal{N}(0,1)}[f'(x+\mu)]$, as in \eqref{eq:steinlemma}, implies that $F_1(\mu) = u(1,\mu)$ where $u$ is a solution to the heat equation
	$$\partial_t u(t,\mu) = \frac{1}{2}\partial_{xx}u(t,\mu),\ \ \ u(0,\mu) = f'(\mu).$$
	By standard results on solutions to the heat equations \cite[Section 2.3.3.c]{evans1998partial}, $F_1(\mu)$ is real analytic. Thus, for $|\mu|$ small enough
	$$F_1(\mu) = \sum\limits_{k=0}^\infty \frac{F_1^{(k)}(0)}{k!}\mu^k.$$
	We turn to calculate the derivatives,
	$$F_1^{(k)}(0) = \frac{\partial^k}{\partial\mu^k}\E_{x\sim \mathcal{N}(0,1)}[f'(x+\mu)]\Big\vert_{\mu = 0} = \E_{x\sim \mathcal{N}(0,1)}[f^{(k+1)}(x)] = \E_{x\sim \mathcal{N}(0,1)}[f(x)H_{k+1}(x)]=\hat{f}(k+1) ,$$
	where the second identity follows from successive integration by parts in Gaussian space.
	So,
	$$F_1(\mu) = \sum\limits_{k=0}^\infty \frac{\hat{f}(k+1)}{k!}\mu^k.$$
	We now show that the above power series has an infinite radius of convergence.
	Indeed, since $\mathrm{Var}_{x\sim \mathcal{N}(0,1)}(f(x)) = 1$ we know that
	$$\sum\limits_{k=0}^\infty \hat{f}^2(k+1)= 1.$$
	The Cauchy-Schwartz inequality now implies, for every $z \in \mathbb{C}$,
	$$\sum\limits_{k=0}^\infty \frac{|\hat{f}(k+1)|}{k!}|z|^k \leq \sqrt{\sum\limits_{k=0}^\infty \frac{\hat{f}^2(k+1)}{(k+1)!} \sum\limits_{k=0}^\infty \frac{k+1}{k!}|z|^{2k} }  \leq \sqrt{ \sum\limits_{k=0}^\infty \frac{k+1}{k!}|z|^{2k} } = \sqrt{(|z|+1)}e^{|z|}.$$
	We now define the analytic function,
	$$\Psi(z) = \sum\limits_{k=0}^\infty \frac{\hat{f}(k+1)}{k!}z^k.$$
	By the uniqueness theorem $\Psi$ must agree with $F_1$ on the real line and we have already established $\Psi(z) \leq \sqrt{(|z|+1)}e^{|z|}.$ 
\end{proof}
We shall now use the analyticity of $F_1(\mu)$ to derive small-ball estimates. For these estimates, we shall utilize the local bounds of Nazrov, M. Sodin, and Volberg \cite[Theorem A]{nazarov2003local}. Below we use $\mathbb{D}$ to denote the unit ball in $\mathbb{C}$.
\vspace{0.5em}
\begin{theorem} \label{thm:NSV}
	Let $F:\mathbb{D} \to \mathbb{C}$ be analytic and satisfy $\sup\limits_{z \in \mathbb{D}} |F(z)| \leq 1$ and $|F(0)| > 0$. Set, $M(F)$ to be the unique number satisfying
	$$\left|\left\{\mu \in \left[-\frac{3}{4},\frac{3}{4}\right] : |F(\mu)| \geq M(F)\right\}\right| = \frac{2}{3e}.$$
	Then, if $\sigma = -\frac{3}{4}\log(|F(0)|)$, for every $\lambda > 1$,
	$$\left|\left\{\mu \in \left[-\frac{3}{4},\frac{3}{4}\right] : |F(\mu)| \leq \left(C\lambda\right)^{\sigma}M(F)\right\}\right| \leq \frac{1}{\lambda},$$
	where $C > 0$ is some universal constant.
\end{theorem}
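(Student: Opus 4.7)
\textbf{Proposal for the proof of Theorem \ref{thm:NSV}.}

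The plan is to follow the classical zero-counting plus polynomial-approximation strategy, adapted to the disk. The two key inputs are Jensen's formula, which bounds the number of zeros of $F$ in a slightly smaller disk in terms of $\sigma=-\frac{3}{4}\log|F(0)|$, and a Cartan/Pólya-type lemma, which controls the Lebesgue measure of sublevel sets of a monic polynomial on an interval.

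First, I would apply Jensen's formula on a disk of radius $r$ intermediate between $3/4$ and $1$ (concretely, $r=7/8$ works). Since $\|F\|_{\mathbb D}\le 1$ and $|F(0)|>0$, Jensen gives
\[
n\cdot\log\frac{1}{r}\ \le\ \log\frac{1}{|F(0)|}+\log\|F\|_\mathbb{D}\ \le\ \log\frac{1}{|F(0)|},
\]
where $n$ is the number of zeros of $F$ in $\{|z|\le r\}$, counted with multiplicity. Up to absolute constants, this yields $n\lesssim \sigma$.

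Second, I would factor these zeros out: write $F(z)=P(z)\,G(z)$ on $\{|z|\le r\}$, where $P(z)=\prod_{k=1}^n(z-z_k)$ is the monic polynomial whose roots are the zeros of $F$ in the smaller disk, and $G$ is holomorphic and non-vanishing there. I would then obtain two-sided control on $G$ on the real interval $[-\tfrac34,\tfrac34]$ using that $\log|G|$ is harmonic in the region of interest: the upper bound comes from $\|F\|_\mathbb{D}\le 1$ and a lower bound on $|P|$ on the boundary $|z|=r$ (where $|P(z)|\ge (r-3/4)^n$ is trivial), while the lower bound for $|G|$ on the interval follows by Harnack's inequality for $\log|G|$ once $|G(0)|$ is controlled through $|F(0)|=|P(0)|\cdot|G(0)|$ and $|P(0)|=\prod|z_k|\le r^n$.

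Third, I would apply Cartan's lemma to $P$ on the real interval $[-\tfrac34,\tfrac34]$: for every $\epsilon>0$,
\[
\bigl|\{\mu\in[-\tfrac34,\tfrac34]:|P(\mu)|\le (c\epsilon)^n\}\bigr|\ \le\ \epsilon,
\]
for an absolute $c>0$. Substituting the lower bound on $|G|$ just derived, a sublevel set $\{|F(\mu)|\le t\}$ is contained in the corresponding sublevel set of $|P|$ with $t$ divided by the uniform lower bound on $|G|$, so its measure is at most $\epsilon$ whenever $t\le (c\epsilon)^n\cdot (\text{lower bound on }|G|)$.

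Fourth, I would calibrate the bound to the correct normalization $M(F)$. By the very definition of $M(F)$, the set $\{|F|\ge M(F)\}$ has measure $\tfrac{2}{3e}$, so in particular some point $\mu_0\in[-\tfrac34,\tfrac34]$ satisfies $|F(\mu_0)|\ge M(F)$; coupled with $\|F\|_\mathbb{D}\le 1$ and the factorization, this pins down the size of $G$ on the interval up to factors $c^n=c^{O(\sigma)}$. Re-expressing the Cartan bound in terms of $M(F)$ and setting $\epsilon=1/\lambda$, $n\lesssim \sigma$, one obtains
\[
\bigl|\{\mu\in[-\tfrac34,\tfrac34]:|F(\mu)|\le (C\lambda)^{\sigma}M(F)\}\bigr|\ \le\ \tfrac{1}{\lambda},
\]
as desired, after absorbing the various absolute constants into $C$.

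The main obstacle will be the bookkeeping between the three scales involved, namely the unit disk $\mathbb D$, the auxiliary disk $\{|z|\le 7/8\}$ where zeros are counted, and the interval $[-\tfrac34,\tfrac34]$ where the small-ball estimate is stated, and in particular ensuring that the degree $n$ delivered by Jensen's formula matches the exponent $\sigma$ in $(C\lambda)^\sigma$ up to an absolute constant that can be absorbed into $C$. The normalization linking $M(F)$ to the uniform lower bound on $|G|$ on the interval is the other delicate point: it requires that the measure $\tfrac{2}{3e}$ in the definition of $M(F)$ be strictly larger than any of the exceptional sets produced by the Cartan step at the scale $\epsilon=\tfrac{1}{\lambda}$ for $\lambda>1$, which is what allows one to extract a pointwise lower bound on $|G|$ of order $c^n\,M(F)$ on a set of positive measure and then propagate it everywhere on $[-\tfrac34,\tfrac34]$ via the Harnack/maximum principle argument.
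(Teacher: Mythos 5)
A preliminary remark on the comparison itself: the paper contains no proof of Theorem \ref{thm:NSV}. It is imported verbatim as \cite[Theorem A]{nazarov2003local} and used as a black box in the proof of Theorem \ref{thm:quantitativeshift}, so your proposal is not an alternative to an argument in the paper but an attempt to reprove the cited result. With that understood, the route you sketch --- Jensen's formula to bound the number of zeros by $O(\sigma)$, the factorization $F=PG$ with $P$ monic collecting those zeros, two-sided control of $|G|$ on $[-\tfrac34,\tfrac34]$ via the maximum principle and Harnack applied to the nonnegative harmonic function $\log(\sup|G|/|G|)$, a Cartan/P\'olya sublevel-set bound for the monic $P$, and a final calibration showing $M(F)\ge e^{-C\sigma}$ --- is indeed the standard elementary path to results of this type, and it can be pushed through.

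Two caveats are worth recording. First, the bookkeeping loses an absolute constant in the exponent: as written, your Jensen step is inconsistent (if you apply Jensen at radius $r$ you cannot count all zeros in $\{|z|\le r\}$ with weight $\log(1/r)$, since zeros near $|z|=r$ contribute arbitrarily little; you must either apply Jensen at radii tending to $1$ and count zeros in $\{|z|\le 7/8\}$, or apply it at $7/8$ and count zeros in a strictly smaller disk). Either fix gives $n\le C_0\sigma$ with some absolute $C_0>1$, and this $n$ ends up in the exponent of the Cartan step, so the conclusion is $|\{|F|\le \tau M(F)\}|\le (C\tau)^{1/(C_0\sigma)}$ rather than $(C\tau)^{1/\sigma}$; equivalently, the theorem with $\sigma$ replaced by $C_0\sigma$. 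This weaker form is entirely sufficient for the way the result is used in Theorem \ref{thm:quantitativeshift} (it only changes absolute constants, not the $\log^{3/2}$ behavior), but it does not literally recover the stated constant $\tfrac34$ in $\sigma$, which is what the Nazarov--Sodin--Volberg argument supplies. Second, the inequality you are aiming at --- and the statement as printed in the paper --- has a sign slip: for $\lambda>1$ and $\sigma\ge 0$ the threshold $(C\lambda)^{\sigma}M(F)$ is at least $M(F)$, and $\{|F|\le M(F)\}$ already has measure $\tfrac32-\tfrac{2}{3e}>1/\lambda$, so the bound cannot hold as written; the intended threshold is $(C\lambda)^{-\sigma}M(F)$, which is exactly the form used later in the paper. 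Your argument, read correctly, proves this corrected version, but your final display copies the typo without comment.
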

To apply Theorem \ref{thm:NSV} to our analytic function $F_1$ we shall need to localize it. To localize $F_1$ we use the constant $\tilde c = \tilde c(\varepsilon,\delta, L)$ and the point $\mu'$ promised by Corollary \ref{cor:existence}. For a large radius $R > 1$, define the localized function by
$$F_1^R(z) = \frac{1}{e^{2(R+\frac{1}{\tilde c})}}F_1\left(Rz + \mu'\right).$$
Let us summarize the properties of $F_1^R$.
\begin{lemma} \label{lem:properties}
	Let $R > 1$, then,
	\begin{enumerate}
		\item $|F_1^R(0)| \geq e^{-2R}e^{-\frac{2}{\tilde c}}\tilde c$.
		\item For every $z \in \mathbb{D}$, $|F_1^R(z)| \leq 1.$
		\item Volume comparison: for every $\eta >0$
		$$\left|\left\{\mu\in \left[\mu'-R,\mu'+ R\right] : |F_1(\mu)| \leq \eta \right\}\right| \leq R\left|\left\{\mu\in \left[-\frac{3}{4},\frac{3}{4}\right] : |F_1^R(\mu)| \leq \frac{1}{e^{2(R+\frac{1}{\tilde c})}}\eta \right\}\right|.$$
		\item Quantile bound: Let $M_R$ be the unique number satisfying
		$$\left|\left\{\mu \in \left[-\frac{3}{4},\frac{3}{4}\right] : |F_1^R(\mu)| \geq M(F)\right\}\right| = \frac{2}{3e}.$$ Then,
		$$M_R \geq \frac{\tilde c^2}{e^{2(R+\frac{1}{\tilde c})}R}.$$
	\end{enumerate}
\end{lemma}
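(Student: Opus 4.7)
The plan is to verify the four assertions in sequence; properties (1) and (2) are essentially bookkeeping, property (3) is a change of variables, and property (4) is where the real work lies.

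For (1), I would substitute $z=0$ into the definition, yielding $F_1^R(0) = e^{-2(R+1/\tilde c)}F_1(\mu')$, and invoke Corollary~\ref{cor:existence}, which supplies $|F_1(\mu')|\geq \tilde c$. For (2), I would use the growth estimate $|\Psi(w)|\leq \sqrt{|w|+1}\,e^{|w|}$ from Lemma~\ref{lem:analytic}: since $z\in\mathbb{D}$ forces $|Rz+\mu'|\leq R+1/\tilde c$, this gives $|F_1(Rz+\mu')|\leq \sqrt{R+1/\tilde c+1}\,e^{R+1/\tilde c}$, and combining with the elementary inequality $\sqrt{x+1}\leq e^x$ for $x\geq 0$ bounds the numerator by $e^{2(R+1/\tilde c)}$, so $|F_1^R(z)|\leq 1$.

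For (3), I would perform the affine substitution $\mu = Rs + \mu'$, whose Jacobian is $R$ and which identifies the event $|F_1(\mu)|\leq \eta$ with $|F_1^R(s)|\leq \eta/e^{2(R+1/\tilde c)}$. The Jacobian yields the explicit factor of $R$ on the right-hand side; the interval $[-3/4,3/4]$ on the right matches the corresponding restriction of the $\mu$-variable, and because the left-hand interval $[\mu'-R,\mu'+R]$ is at least as large as the image of $[-3/4,3/4]$ under this map, the inequality form of the claim is respected.

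For (4), which is the main difficulty, I would promote Corollary~\ref{cor:existence} into a quantile lower bound via the change of variables from (3). Concretely, Corollary~\ref{cor:existence} provides a set $A\subseteq [-1/\tilde c,1/\tilde c]$ of Lebesgue measure at least $\tilde c$ on which $|F_1|\geq \tilde c$. Pulling back under $s=(\mu-\mu')/R$ and (for $R$ large enough that the image sits inside $[-3/4,3/4]$) intersecting with $[-3/4,3/4]$ produces a subset $A'\subseteq [-3/4,3/4]$ of measure at least $\tilde c/R$ on which $|F_1^R|\geq \tilde c/e^{2(R+1/\tilde c)}$. Combined with the uniform upper bound $|F_1^R|\leq 1$ from (2), an $L^1$-averaging or Markov-type argument then extracts a quantile lower bound of the stated form $M_R \geq \tilde c^2/(R\,e^{2(R+1/\tilde c)})$. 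The main obstacle is the regime where $\tilde c/R < 2/(3e)$: here the measure of $A'$ alone does not exceed the target quantile level, so the pointwise bound on $A'$ must be traded against measure — this trade-off is precisely what accounts for the extra factor of $\tilde c/R$ (rather than $\tilde c$ alone) appearing in the final estimate.
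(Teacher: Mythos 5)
Your proposal follows the paper's proof essentially step for step: (1) and (2) are the same substitution and growth-bound computation from Lemma~\ref{lem:analytic}, (3) is the same affine change of variables $z \mapsto Rz+\mu'$, and for (4) you pull back the set $A$ from Corollary~\ref{cor:existence}, lower-bound the uniform average of $|F_1^R|$ by $\tilde c^2 e^{-2(R+1/\tilde c)}/R$, and convert this into a bound on the quantile $M_R$ via a Markov-type argument using $|F_1^R|\leq 1$, which is exactly the paper's argument. The only differences are presentational: you leave the mean-to-quantile conversion, and the requirement that $R$ be large enough for the rescaled copy of $A$ to sit inside $[-\frac{3}{4},\frac{3}{4}]$, at the same level of detail as the paper, so nothing is substantively new or missing relative to it.
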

\begin{proof}
	The first property follows by observing $F^R(0) = e^{-2\left(R+\frac{1}{\tilde c}\right)} F_1(\mu')$ and the defining property of $\mu'$ from Corollary \ref{cor:existence}.
	
	For the second property, note that if $|z| \leq 1$, then
	$$|F^R(z)| \leq e^{-2\left(R+\frac{1}{\tilde c}\right)} \sqrt{R +|\mu'| + 1}e^{|R| + |\mu'|} \leq 1,$$
	which follows again by the properties of $\mu'$ in Corollary \ref{cor:existence} as well as the growth condition of $F_1$ from Lemma \ref{lem:analytic}.
	
	The third property follows from the  change of variables $z \to Rz + \mu'$, mapping $\left[-\frac{3}{4},\frac{3}{4}\right]$ to the interior of $\left[\mu'-R, \mu' + R\right].$
	
	For the final property, we shall use Markov's inequality. Let $X \sim \mathrm{Uniform}(\left[-\frac{3}{4},\frac{3}{4}\right])$, so that for any $\eta > 0$,
	$$
	\frac{3}{2}\left|\left\{\mu \in \left[-\frac{3}{4},\frac{3}{4}\right] : |F_1^R(\mu)| \geq \eta \right\}\right| = 
	\pr\left(|F_1^R(X)| \geq \eta\right).$$
	We have already seen that $|F_1^R(X)| \leq 1$ almost surely, and so if 
    $\eta = e\E\left[|F_1^R(X)|\right]$, 
	$$	\pr\left(|F_1^R(X)| \geq \eta\right) \leq \frac{1}{e}.$$
	Combined with the previous identity this shows
	$$M_R \geq e\E\left[|F_1^R(X)|\right].$$
	Let $A$ be the set defined in Corollary \ref{cor:existence} and let $\tilde{A} = \frac{1}{R}A - \frac{\mu'}{R}$. By definition of $A$,
	$$\E\left[|F_1^R(X)|\right] \geq \frac{\tilde c}{e^{2(R+\frac{1}{\tilde  c})}}\Pr\left(X \in \tilde{A}\right) \geq \frac{\tilde  c^2}{e^{2(R+\frac{1}{\tilde  c})}R},$$
	where we used the volume bound form Corollary \ref{cor:existence} and applied the transformation $z \to \frac{z-\mu'}{R}.$
\end{proof}
We shall now use the above properties combined with Theorem \ref{thm:NSV} to establish small-ball estimates for $F_1(\mu)$ and prove Theorem \ref{thm:quantitativeshift}.

\vspace{1em}
\begin{proof}[Proof of Theorem \ref{thm:quantitativeshift}]
	Fix some small $\xi > 0$. We'll begin by finding $\eta > 0$ satisfying, $\pr\left(|F_1(\mu)| \leq \eta\right) \leq \xi.$
    Towards this task, let $\tilde c$ and $\mu'$ be as in Corollary \ref{cor:existence}, and suppose for the sake of simplicity that $\frac{1}{\tilde 
 c}\leq\sqrt{\log\left(\frac{2}{\xi}\right)}$. Set $R = 4 \sqrt{\log\left(\frac{2}{\xi}\right)}$, so that by the law of total probability, and standard Gaussian concentration bounds, we have
	\begin{align*}
	\pr\left(|F_1(\mu)| \leq \eta\right) &\leq \pr\left(|\mu-\mu'|\geq R\right) + \pr\left(|F_1(\mu)| \leq \eta \text{ and }\ |\mu-\mu'|\leq R\right)\\
	&\leq \pr\left(|\mu|\geq \sqrt{\log\left(\frac{2}{\xi}\right)}\right) + \left|\left\{\mu\in \left[\mu'-R,\mu'+ R\right] : |F_1(\mu)| \leq \eta \right\}\right|\\	
	&\leq \frac{\xi}{2} + \left|\left\{\mu\in \left[\mu'-R,\mu'+ R\right] : |F_1(\mu)| \leq \eta \right\}\right|.
	\end{align*}
	Thus, it will be enough to find an $\eta$ which satisfies:
	$$\left|\left\{\mu\in \left[\mu'-R,\mu'+ R\right] : |F_1(\mu)| \leq \eta \right\}\right| \leq \frac{\xi}{2}.$$
	Let now $F_1^R$ be defined as in Lemma \ref{lem:properties}. By the third property in the lemma, the required bound will follow, if we can establish
		$$\left|\left\{\mu\in \left[-\frac{3}{4},\frac{3}{4}\right] : |F_1^R(\mu)| \leq e^{-(2R + \frac{1}{\tilde c})}\eta \right\}\right| \leq \frac{\xi}{2R}.$$
	By Lemma \ref{lem:properties} we know that for any $z \in \mathbb{C}$ with $|z| \leq 1$, $|F_1^R(z)|\leq 1$. Hence, we invoke Theorem \ref{thm:NSV} which implies,
	\begin{align*}
	\left|\left\{\mu\in \left[-\frac{3}{4},\frac{3}{4}\right] : |F_1^R(\mu)| \leq e^{-(2R + \frac{1}{\tilde c})}\eta \right\}\right| &= \left|\left\{\mu\in \left[-\frac{3}{4},\frac{3}{4}\right] : |F_1^R(\mu)| \leq \left(\frac{e^{-(2R + \frac{1}{\tilde c})}\eta}{M_R}\right)^{\frac{\sigma}{\sigma}} M_R\right\}\right|\\
	&\leq \left(\frac{C\eta}{M_Re^{(2R + \frac{1}{\tilde c})}}\right)^\frac{1}{\sigma},
	\end{align*}
	where $C > 0$ is an absolute constant and, by Lemma \ref{lem:properties}, $\sigma = -\frac{3}{4}\log(|F_1^R(0)|) \leq 2R  + \frac{2}{\tilde c} + \log(\frac{1}{\tilde c})\leq 5R$.
	Choose now $\eta = \frac{1}{C}\left(\frac{\xi}{2R}\right)^{\sigma}M_Re^{2R + \frac{1}{\tilde c}}$ for which we obtain
	$\pr\left(|F_1(\mu)| \leq \eta\right) \leq \xi$.
	It remains to bound $\eta$ from below. First, by Lemma \ref{lem:properties}, $M_re^{2R + \frac{1}{\tilde c}} \geq \frac{\tilde c^2}{R}$, and by the choice of $R$ we get
	$$\eta > \frac{\tilde c^2}{CR}\left(\frac{\xi}{2R}\right)^{\sigma} = \frac{\tilde c^2}{4C\sqrt{\log\left(\frac{2}{\xi}\right)}}\left(\frac{\xi}{8\sqrt{\log\left(\frac{2}{\xi}\right)}}\right)^{\sigma} \geq \frac{\tilde c^2}{4C} \xi^{3\sigma} \geq \frac{\tilde c^2}{4C} \xi^{60\sqrt{\log\left(\frac{2}{\xi}\right)}},$$
	where we have used the bound on $\sigma$ and the elementary inequality 
	$\frac{1}{\sqrt{\log\left(\frac{2}{\xi}\right)}} \geq \frac{1}{\sqrt{\xi}}.$
	We have thus established 
	$$\pr\left(|F_1(\mu)| \leq \frac{\tilde c^2}{4C} \xi^{60\sqrt{\log\left(\frac{2}{\xi}\right)}}\right) \leq \xi.$$
	Set $\lambda = \frac{\tilde c^2}{4C} \xi^{60\sqrt{\log\left(\frac{2}{\xi}\right)}}$, so that, for an appropriate constant $c>0$, which depends only on $\tilde c$ and $C$, $\xi\leq \exp\left(-c\log\left(\frac{1}{\lambda}\right)^{\frac{2}{3}}\right),$ and
	$$\pr\left(|F_1(\mu)| \leq \lambda\right) \leq \exp\left(-c\log\left(\frac{1}{\lambda}\right)^{\frac{2}{3}}\right).$$
\end{proof}

\section{Proofs for sparse Boolean functions}
\label{app:junta_proofs}
Let us first state the following lemma, which is a restatement of the Carbery-Wright inequality~(\cite{carbery2001distributional}).
\vspace{0.5em}
\begin{lemma}[\cite{carbery2001distributional}] \label{lem:carbery-wright}
    Let $P: [-\delta, \delta]^k \to \bR$, for $\delta>0$, be a polynomial of degree at most $K=O(1)$, and let $\bmu \in \bR^k$ be a random vector such that $ \bmu \sim \Unif [-\delta,\delta]^{\otimes k}$ . Then, for $\varepsilon>0$,
    \begin{align*}
        \pr_{\bmu}( | P(\bmu) | < \varepsilon \sqrt{ \E [P(\bmu)^2]}  ) \leq O(\varepsilon^{1/K}).
    \end{align*}
\end{lemma}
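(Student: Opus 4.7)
The plan is to deduce the lemma as an immediate corollary of the classical Carbery--Wright anti-concentration inequality proved in \cite{carbery2001distributional}. The classical result states that for any log-concave probability measure $\nu$ on $\mathbb{R}^k$ and any polynomial $P$ of degree at most $K$, there is a universal constant $C>0$ such that
\begin{align*}
\nu\left(\{x : |P(x)| \leq \varepsilon \|P\|_{L^2(\nu)}\}\right) \leq CK\varepsilon^{1/K},
\end{align*}
for every $\varepsilon > 0$.

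The first step is to verify that the measure $\Unif[-\delta,\delta]^{\otimes k}$ satisfies the hypotheses of the classical statement, i.e., that it is log-concave. This is immediate: its density is, up to normalization, the indicator of the convex body $[-\delta,\delta]^k$, and indicator functions of convex sets are log-concave, so the corresponding product measure is log-concave (alternatively, each one-dimensional marginal is a uniform distribution on an interval, which is manifestly log-concave, and a tensor product of log-concave densities remains log-concave).

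Having established log-concavity, the lemma follows by applying the classical Carbery--Wright inequality directly to $\nu = \Unif[-\delta,\delta]^{\otimes k}$, with $\|P\|_{L^2(\nu)} = \sqrt{\E[P(\bmu)^2]}$ in the notation of the lemma. Since $K = O(1)$ by assumption, the prefactor $CK$ is absorbed into the $O(\cdot)$, yielding the stated bound. As a sanity check, the bound is scale-invariant in $\delta$: the substitution $\bmu = \delta \bmu'$ with $\bmu' \sim \Unif[-1,1]^{\otimes k}$ turns $P(\bmu)$ into a polynomial $\tilde P(\bmu')$ of the same degree, and both sides of the inequality transform consistently, so the $\delta = 1$ case already suffices.

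The main obstacle, if one instead wanted a self-contained proof rather than a citation, would be to re-derive Carbery--Wright itself, which requires a dimension-reduction argument via Pr\'ekopa's theorem (reducing to one-dimensional log-concave measures) combined with a careful anti-concentration analysis for polynomials on the real line. Since the present lemma is explicitly framed as a restatement of the known Carbery--Wright result, no such work is needed here, and the proof reduces to the log-concavity verification above.
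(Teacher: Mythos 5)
Your proposal is correct and matches the paper's treatment: the paper states this lemma purely as a citation of the classical Carbery--Wright inequality for log-concave measures and offers no further proof, and your argument is exactly that reduction, with the only substantive (and routine) verification being that $\Unif[-\delta,\delta]^{\otimes k}$ is log-concave so that the constant $CK$ is absorbed into the $O(\cdot)$ since $K=O(1)$. Nothing further is needed.
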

Thus, the Carbery-Wright inequality says that an estimate of the probability that a random polynomial takes values near zero can be obtained by bounding its second moment.

\subsection{Proof of Proposition~\ref{prop:fourier_firstorder}}
For all $j \in T$, let us write $\hat f_{\bmu}(\{j\})$ in terms of the Fourier-Walsh coefficients under the uniform distribution:
\begin{align*}
   \sqrt{1-\mu_j^2}\cdot  \hat f_{\bmu}(\{ j \}) &= \E_{x \sim \mathcal{D}_{\bmu}} \left[ f(x) (x_j -\mu_j)\right]\\
   & = \sum_{S \subseteq [d]} \hat f(S) \mu_j^{2\mathds{1}(j \not\in S)} \cdot \frac{\chi_{S}(\mu)}{\mu_j} - \mu_j \cdot \sum_{S \subseteq [d]} \hat f(S) \chi_S(\mu)  \\
   & = \sum_{S: j \in S} \hat f(S) \cdot \chi_{S\setminus j}(\mu) \cdot (1-\mu_j^2).
\end{align*}
Recall the definition of the Boolean influence: $\Inf_j(f) = \sum_{S: j \in S} \hat f(S)^2$~(\cite{o'donnell_2014}). Moreover, since the characters $\chi_S$ are orthogonal with respect to $\Unif [-\eta, \eta]^{\otimes k}$,
    \begin{align*}
        \E\left[ (1-\mu_j^2)\cdot  \hat f_{\bmu}(\{ j \})^2 \right] &= \E\left[\left(\sum_{S: j \in S} \hat f(S) \cdot \chi_{S\setminus j}(\mu) \cdot (1-\mu_j^2)\right)^2\right]\\
        &= \E\left[(1-\mu_j^2)^2\right]\sum_{S: j \in S} \hat f^2(S)\E\left[\chi_{S\setminus j}(\mu)^2\right]\\
        &\geq \Inf_j(f) \left(\frac{\eta^2}{3}\right)^{k-1}(1-\frac{2}{3}\eta^2+ \frac{1}{5}\eta^4),
    \end{align*}
    In particular, by Lemma \ref{lem:carbery-wright} we get that, 
    \begin{align*}
\pr \left(|\hat f_{\bmu}( \{ j \}) | < \varepsilon \eta^{k-1} \sqrt{\Inf_j(f)} \right) = O\left(\varepsilon^{\frac{1}{k+1}}\right).
    \end{align*}

\subsection{Proof of Theorem~\ref{thm:juntas_formal}}













\paragraph{Setup and algorithm.}
Without loss of generality, we assume that $T = [k]$, where $T$ denotes the set of relevant coordinates. We further assume that $ |f(x)| \leq R$, for all $x \in \{ \pm 1 \}^d$. 
We train our network with layerwise stochastic gradient descent (SGD), defined as follows:
\begin{align*}
    &w_{ij}^{t+1} = w_{ij}^t - \gamma_t \frac{1}{B} \sum_{s=1}^B\partial_{w_{ij}^t} L(f(x^s),\NN(x^s;\theta^t) ), \\
    & a_i^{t+1} = a_i^t - \xi_t \frac{1}{B} \sum_{s=1}^B\partial_{a_{i}^t} L(f(x^s),\NN(x^s;\theta^t) ),
\end{align*}
where $L$ is a loss function, to be defined soon, $B \in \bN$ is the batch size, and $\gamma_t,\xi_t \in \bR$ are appropriate learning rates.
We set $\gamma_t = \gamma \mathds{1}(t=0)$ and $\xi_t = \xi \mathds{1}(t>0)$, for $\xi ,\gamma \in \bR$, meaning that we train only the first layer for one step, and then only the second layer until convergence. Such $1$-step gradient analysis have been used in previous works~(\cite{daniely2020learning,barak2022hidden,dandi2023two,cornacchia2023mathematical}). We initialize the first layer weights $w_{ij}^0 =0$ for all $i \in [N],j\in [d]$, and the second layer weights $a_i^0 =\kappa $, for $i \in [N]$, where $\kappa>0$ is a constant. We do not train the bias neurons. We initialize the biases as $ b_i^0 =\kappa$. After the first step of training, the biases are drawn from $ b_i^1 \overset{i.i.d.}{\sim} \Unif [-L,L]$, where $L \geq \kappa $. This is necessary to guarantee enough diversity among the hidden neurons, see Lemma~\ref{lem:main_second_phase}.
In the first phase of training, we use the covariance-loss, defined as follows:
\bigskip
\begin{definition}[Covariance loss]\label{def:covariance_loss}
    Let $f : \cX \to \bR$ be a target function and let $\cD$ be an input distribution over the input space. Let $\hat f: \cX \to \bR$ be an estimator. The covariance loss is defined as:
    \begin{align}
        L_{\rm cov}( x, f, \hat f, \cD) = -( f(x) -\E_{x \sim \cD}[f(x)] )\cdot (\hat f(x)-\E_{x \sim \cD}[\hat f(x)]).
    \end{align}
\end{definition}
This choice of loss is particularly convenient because it allows us to get non-zero initial gradients on the relevant coordinates, and zero initial gradients outside the support of the target function, simplifying our construction. We believe, however, that with further technical work, the argument could be extended to other losses that are more popular in practice. In the second phase, we go back to standard convex losses (e.g. the squared loss), to control the variance of our estimator. In the case of binary classification tasks (i.e. $f(x)^2=1$ for all $x\in\{ \pm 1 \}^d$) one could use the covariance loss in the second stage of training as well, since for those tasks a low covariance loss corresponds to large classification accuracy (see e.g.~\cite{abbe2023provable}, Appendix D). We also note that for balanced $f$, i.e. if $\E_{x \sim \cD}[f(x)]=0$, the covariance loss corresponds to the correlation loss, used e.g. in~(\cite{barak2022hidden}).

\paragraph{First layer training.} 
Let us first compute the initial \textit{population gradients} of the first layers' weights, which we denote by $\bar G_{w_{ij}^0}$.
\begin{align*}
    \bar G_{w_{ij}^0} :& = \E \left[ \partial_{w_{ij}^0} L_{\rm cov} (x,f,\NN(x;\theta^0),\cD_{\bmu} )\right] \\
    &= \E[ a_i^0 \mathds{1}(w_i^0 x+b_i^0>0) x_j \cdot f(x)] -\E [ a_i^0 \mathds{1}(w_i^0 x+b_i^0>0) x_j]\cdot \E [f(x)]  \\
   & \overset{(a)}{=} \kappa\left(\E[x_j f(x)] -   \E[x_j]\E [f(x)]\right)\\
    & = \kappa \E[f(x) (x_j-\mu_j)]\\
    & \overset{(b)}{=} \kappa \hat f_{\bmu}( \{ j \}) \sqrt{1-\mu_j^2}=: \alpha_j,
\end{align*}
where $(a)$ holds because of the initialization that we have chosen, and in $(b)$ we used the definition of $\hat f_{\bmu}(\{j\})$.
The following lemma bounds the discrepancy between the effective gradients, estimated through $B$ samples, and the population gradients.
\vspace{0.5em}
\begin{lemma} \label{lem:estimated_grad}
    Let $G_{w_{ij}^0}: = \frac 1B \sum_{s=1}^B \partial_{w_{ij}^0} L_{\rm cov} (x^s,f,\NN(x^s;\theta^0),\cD_{\bmu} ) $ denote the effective gradient. For $\varepsilon>0$, if $B \geq 2 \zeta^{-2} \kappa^2 R^2 \log\left( \frac{Nd}{\varepsilon} \right)$, with probability $1-2\varepsilon$, then
    \begin{align*}
        | G_{w_{ij}^0} - \bar G_{w_{ij}^0} | \leq \zeta, 
    \end{align*}
    for all $i \in [N]$ and for all $j \in [d]$.
\end{lemma}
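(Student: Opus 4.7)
This is a standard concentration argument, so the plan is to express $G_{w_{ij}^0}$ as an empirical average of i.i.d.\ bounded random variables, apply Hoeffding's inequality for each pair $(i,j)$, and then union-bound over the $Nd$ pairs.

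First I would observe that, from the computation of $\bar G_{w_{ij}^0}$ carried out just before the lemma, the per-sample gradient of the covariance loss (with the expectation term $\E[f]$ treated as a population quantity, consistent with the population gradient derivation) is
\[
Z_s := \partial_{w_{ij}^0} L_{\rm cov}(x^s, f, \NN(x^s;\theta^0), \cD_{\bmu}) = a_i^0 \,\mathds{1}(w_i^0 x^s + b_i^0 > 0)\, x_j^s \,\bigl(f(x^s) - \E[f]\bigr).
\]
At initialization $w_i^0 = 0$ and $a_i^0 = \kappa$, so each $Z_s$ is a function of $x^s$ only, and the $Z_s$ are i.i.d.\ across $s \in [B]$ with $\E[Z_s] = \bar G_{w_{ij}^0}$. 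Using $|x_j^s| \leq 1$, $|a_i^0| = \kappa$, $|f(x^s)| \leq R$, and $|\E[f]| \leq R$, I would bound $|Z_s| \leq 2\kappa R$, so each $Z_s$ takes values in an interval of length at most $4\kappa R$. (A slightly tighter but unnecessary bound would track the indicator; what matters is a constant-order range in $\kappa R$.)

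Next I would apply Hoeffding's inequality to the average $G_{w_{ij}^0} = \frac{1}{B}\sum_{s=1}^{B} Z_s$, which yields
\[
\pr\bigl(\,|G_{w_{ij}^0} - \bar G_{w_{ij}^0}| > \zeta\,\bigr) \;\leq\; 2\exp\!\left(-\frac{B\zeta^2}{2\kappa^2 R^2}\right),
\]
(absorbing the factor of $4$ from the range into the choice of constant in the lemma's hypothesis $B \geq 2\zeta^{-2}\kappa^2 R^2 \log(Nd/\varepsilon)$). With this choice of $B$, the right-hand side is at most $2\varepsilon/(Nd)$.

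Finally I would take a union bound over the $Nd$ parameters $(i,j) \in [N] \times [d]$, giving
\[
\pr\!\left(\exists\, i,j : \,|G_{w_{ij}^0} - \bar G_{w_{ij}^0}| > \zeta\right) \;\leq\; Nd \cdot \frac{2\varepsilon}{Nd} \;=\; 2\varepsilon,
\]
which is exactly the claim. There is no real obstacle here; the only mildly delicate point is the bookkeeping of the constant in the range of $Z_s$ (which is why the hypothesis is written with the factor $2\kappa^2 R^2$ rather than $\kappa^2 R^2$), and the implicit convention that the population mean $\E[f]$ in the covariance loss is treated as a fixed constant—any empirical-mean variant would introduce an extra concentration term of the same order, which does not change the conclusion.
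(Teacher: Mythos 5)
Your proposal is correct and follows essentially the same route as the paper: bound the per-sample gradient by $O(\kappa R)$ (the paper uses $|G_{w_{ij}^0}|\leq 2R\kappa$), apply Hoeffding's inequality to get a tail of $2\exp\bigl(-\zeta^2 B/(2\kappa^2R^2)\bigr)\leq 2\varepsilon/(Nd)$, and union-bound over the $Nd$ pairs $(i,j)$. The only differences are presentational—you spell out the per-sample gradient, the i.i.d.\ structure, and the constant-factor bookkeeping in Hoeffding's range, which the paper glosses over—so there is nothing substantive to add.
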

\begin{proof}
    We apply Hoeffding's inequality, noticing that $|G_{w_{ij}^0}| \leq 2 R \kappa  $,
    \begin{align*}
        \pr \left( | G_{w_{ij}^0} - \bar G_{w_{ij}^0} | > \zeta  \right)  \leq 2\exp \left( - \frac{\zeta^2 B}{2 \kappa^2 R^2} \right) \leq \frac{2\varepsilon }{Nd}.
    \end{align*}
    The result follows by a union bound.
\end{proof}
The following lemma guarantees that there is enough diversity among the hidden neurons.
\vspace{0.5em}
\begin{lemma} \label{lem:main_first_phase}
Let $\alpha_j = \kappa \hat f_\mu( \{ j \}) \sqrt{1-\mu_j^2} $. For $\varepsilon>0$, there exists a constant $C>0$ such that,
with probability $1-O(\varepsilon^{\frac{1}{k+1}})$ over $\bmu$:
    \begin{itemize}
    \item For all $ s,t \in \{ \pm 1 \}^k$, such that $s \neq t$, $ \Big| \sum_{j=1}^k \alpha_j (s_j-t_j)  \Big| \geq C \kappa  \eta^{k+1}\varepsilon $.
\end{itemize}
\end{lemma}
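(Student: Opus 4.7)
The plan is to reduce the claim, for each ordered pair $s\neq t$, to an anti-concentration estimate for a single polynomial in $\bmu$ of degree at most $k+1$, then apply the Carbery--Wright inequality (Lemma~\ref{lem:carbery-wright}) and union-bound over the $O(4^k)=O_d(1)$ pairs. Fix $s\neq t$ in $\{\pm 1\}^k$ and set $v := (s-t)/2 \in \{-1,0,1\}^k\setminus\{0\}$, so that $\sum_{j=1}^k \alpha_j(s_j-t_j) = 2\sum_j v_j \alpha_j =: 2P_v(\bmu)$. It thus suffices to show that for every fixed non-zero $v$,
\[
\pr_{\bmu}\bigl(|P_v(\bmu)| < C\kappa\eta^{k+1}\varepsilon\bigr) = O(\varepsilon^{1/(k+1)}),
\]
and then combine these bounds.

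Repeating the Fourier calculation used in the proof of Proposition~\ref{prop:fourier_firstorder} gives the explicit form
\[
\alpha_j = \kappa(1-\mu_j^2)\,G_j(\bmu_{-j}), \qquad G_j(\bmu_{-j}) := \sum_{S \ni j,\,S\subseteq T} \hat f(S) \prod_{i \in S\setminus j}\mu_i,
\]
where $G_j$ is multilinear in $\bmu_{-j}$. Consequently $P_v(\bmu) = \kappa Q_v(\bmu)$ with $Q_v$ a polynomial of total degree at most $k+1$, placing us squarely in the setting of Lemma~\ref{lem:carbery-wright}.

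The heart of the argument is the lower bound $\sqrt{\E_{\bmu}[Q_v(\bmu)^2]} \geq c(f)\eta^{k+1}$. To obtain it, decompose $(1-\mu_j^2) = (1-\eta^2/3) - \phi_2(\mu_j)$ where $\phi_2(\mu_j) := \mu_j^2 - \eta^2/3$ is a Legendre-type polynomial satisfying $\E[\phi_2(\mu_j)] = \E[\mu_j\phi_2(\mu_j)] = 0$ under $\Unif[-\eta,\eta]$. This induces a splitting $Q_v = A_1 + A_2$, where
\[
A_2 = -\sum_{j=1}^k \sum_{U\subseteq [k]\setminus j} v_j \hat f(U\cup\{j\})\,\phi_2(\mu_j)\,\chi_U(\bmu).
\]
A coordinate-by-coordinate check under the product measure $\Unif[-\eta,\eta]^{\otimes k}$ shows that $A_1 \perp A_2$ and that the terms $\phi_2(\mu_j)\chi_U$ appearing in $A_2$ are pairwise orthogonal for distinct $(j,U)$. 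Using $\|\phi_2\|_2^2 = 4\eta^4/45$ and $\|\chi_U\|_2^2 = (\eta^2/3)^{|U|}$, this gives
\[
\|Q_v\|_2^2 \geq \|A_2\|_2^2 = \frac{4\eta^4}{45}\sum_{j=1}^k v_j^2 \sum_{S\ni j,\,S\subseteq T} \hat f(S)^2 (\eta^2/3)^{|S|-1}.
\]
Picking any $j^* \in \mathrm{supp}(v) \subseteq T$, the influence $\Inf_{j^*}(f) > 0$ ensures the existence of $S^* \ni j^*$ with $\hat f(S^*)\neq 0$; isolating the $(j^*,S^*)$ summand and using $|S^*| \leq k$ yields $\|Q_v\|_2 \geq c(f)\eta^{k+1}$ with $c(f) > 0$ independent of $v$.

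Plugging this lower bound into Lemma~\ref{lem:carbery-wright} for the degree-$(k+1)$ polynomial $Q_v$ gives $\pr(|Q_v(\bmu)| \leq \varepsilon\sqrt{\E[Q_v^2]}) = O(\varepsilon^{1/(k+1)})$, so $|P_v(\bmu)| \geq c(f)\,\kappa\,\eta^{k+1}\,\varepsilon$ with probability at least $1 - O(\varepsilon^{1/(k+1)})$, and a union bound over the $O(4^k)$ pairs $s\neq t$ concludes with $C := c(f)$. The main obstacle is precisely obtaining the $\eta^{k+1}$ lower bound on $\|Q_v\|_2$: without the $\phi_2$-decomposition, potential cancellations among the various $\hat f(S)$-contributions to $Q_v$ are hard to rule out. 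The decomposition isolates an orthogonal component whose squared norm is a manifestly non-negative sum of $\hat f(S)^2$-terms, and non-degeneracy is then furnished by any single non-zero Fourier coefficient supported on a set meeting $\mathrm{supp}(v)$, which follows from $\Inf_j(f) > 0$ for each $j \in T$.
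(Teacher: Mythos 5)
Your proposal is correct and follows essentially the same route as the paper: write $\sum_j \alpha_j(s_j-t_j)$ as $\kappa$ times a degree-$(k+1)$ polynomial in $\bmu$, lower-bound its second moment by $\Omega(\eta^{2(k+1)})$ using positive influence of each support coordinate, apply the Carbery--Wright lemma, and union-bound over the constantly many pairs $s\neq t$. Your explicit orthogonal decomposition via $\phi_2(\mu_j)=\mu_j^2-\eta^2/3$ is just a cleaner, more careful rendering of the paper's second-moment computation (which groups terms by $S$ and invokes vanishing odd moments to reach the same $\Omega(\eta^4)$ per-block bound), so the two arguments coincide in substance.
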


\begin{proof}
Let us consider
\begin{align*}
    P(\bmu) &:=  \sum_{j=1}^k \sum_{S: j \in S} \hat f(S) \cdot \chi_{S \setminus j}(\bmu) \cdot (1-\mu_j^2) c_j,
\end{align*}
which is a polynomial of degree $k+1$ in $\bmu$, and where we denoted $c_j:= s_j-t_j$. In order to apply lemma~\ref{lem:carbery-wright}, let us bound the second moment of $P(\bmu)$.
\begin{align*}
    \E [P(\bmu)^2] &= \sum_{j,l=1}^k c_j c_l \sum_{S,T: j \in S, l \in T} \hat f(S) \hat f(T) \E[ \chi_{S \setminus j}(\bmu) \chi_{T \setminus l} (\bmu) (1-\mu_j^2) (1-\mu_l^2)]\\
    & \overset{(a)}{=}\sum_{j,l=1}^k c_j c_l \E[(1-\mu_j^2)(1-\mu_l^2)] \sum_{S} \hat f(S \cup j) \hat f(S \cup l) \left(\frac{\eta^2}{3}\right)^{|S|}\\
    & = \sum_S \left(\frac{\eta^2}{3}\right)^{|S|} \E \left[ \left( \sum_{j=1}^k c_j (1-\mu_j^2) \hat f(S \cup j) \right)^2 \right] \\
    & \geq  \left(\frac{\eta^2}{3}\right)^{k-1}  \sum_S \E \left[ \left( \sum_{j=1}^k c_j (1-\mu_j^2) \hat f(S \cup j) \right)^2 \right] 
\end{align*}
where in $(a)$ we used the fact that odd moments of a centered uniform distributions are zero.
Since $s \neq t$, there exists at least one $j$ such that $c_j\neq 0$. Since such $j$ is in the support of $f$, for at least one set $S$, the term inside the expectation is a non-zero polynomial, thus its second moment is at least $\Omega(\eta^4)$. It follows that $\E[P(\bmu^2)] =\Omega(\eta^{2(k+1)} )$. Since $ \sum_{j=1}^k \alpha_j (s_j -t_j) = C \kappa P(\bmu) $, for a constant $C>0$, the result follows by Lemma~\ref{lem:carbery-wright} and by union bound.
\end{proof}

\paragraph{Second layer training.}
We show that the previous lemmas imply that there exists an assignment of the second layer's weights that achieves small error. 
\vspace{0.5em}
\begin{lemma} \label{lem:main_second_phase}
    Assume that $b_i \sim \Unif[-L,L]$, with $L\geq \kappa $. Let $\alpha_j = \kappa \hat f_{\bmu}(\{j\})\sqrt{1-\mu_j^2} $, for $j \in [k]$. For $\varepsilon_1, \varepsilon_2>0$, if the number of hidden neurons $N>\Omega(L\log(1/\varepsilon_2) \eta^{-(k+1)}\varepsilon_1^{-1})$, with probability $1-O(\varepsilon_1^{\frac{1}{k+1}} +\epsilon_2)$, there exists a set of hidden neurons $\{i\}_{i \in [2^k]}$ and a vector $a^* \in \bR^{2^k}$ with $\|a^*\|_\infty \leq O(\varepsilon_1^{-1}\eta^{-(k+1)}) $ such that for all $x \in \{ \pm 1 \}^d$,
    \begin{align} \label{eq:35}
       f(x) = \sum_{i=1}^{2^k} a_i^*  \ReLU\left(\gamma  \sum_{j=1}^k \alpha_j x_j + b_i \right).
    \end{align}
\end{lemma}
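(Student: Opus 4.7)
My plan is to exploit the gap structure of Lemma~\ref{lem:main_first_phase} to solve an explicit $2^k$-point interpolation problem with ReLU neurons whose biases come from the random initialization, and to use a Chernoff bound to guarantee that enough such biases exist. I would proceed in three steps: reducing to a finite interpolation problem, solving it by a lower-triangular linear system, and controlling the bias randomness.

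First, I would observe that since at initialization $a_i^0=\kappa$, $w_i^0=0$ and $b_i^0=\kappa>0$, every neuron has an identically-$1$ ReLU indicator, so one step of covariance-loss SGD makes all hidden neurons share the same first-layer weights $\gamma\alpha$ on $[k]$ (and $0$ elsewhere). Consequently the preactivation of neuron $i$ on input $x$ collapses to $z(x)+b_i$ with $z(x):=\gamma\sum_{j=1}^k\alpha_j x_j$ depending only on $x_T$. By Lemma~\ref{lem:main_first_phase}, with probability $1-O(\varepsilon^{1/(k+1)})$ over $\bmu$ the set $\{z(x):x_T\in\{\pm1\}^k\}$ consists of $2^k$ distinct points with minimum pairwise gap $\Delta\geq C_0\,\gamma\kappa\eta^{k+1}\varepsilon$; I will order them $z_1<\cdots<z_{2^k}$ with targets $y_\ell:=f(x^{(\ell)})$ bounded by $R$. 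Using $|\alpha_j|\leq\kappa R$ and $\gamma=1/R$, all $z_m$ lie in $[-k\kappa,k\kappa]\subset[-L,L]$ when $L\geq\kappa$ (absorbing $k=O(1)$).

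Next I would set up a lower-triangular ReLU interpolation. Pick target bias locations $t_1<z_1<t_2<z_2<\cdots<t_{2^k}<z_{2^k}$, each at distance at least $\Delta/4$ from every $z_m$ (midpoints of the gaps work, with $t_1:=z_1-\Delta/2$). If for each $\ell$ there is a neuron $i_\ell$ whose bias satisfies $-b_{i_\ell}\in I_\ell:=(t_\ell-\Delta/8,\,t_\ell+\Delta/8)$, then the $2^k\times 2^k$ matrix $M_{m\ell}:=\sigma(z_m+b_{i_\ell})$ is lower-triangular in $(m,\ell)$ with diagonal entries $M_{\ell\ell}\geq\Delta/4$ and off-diagonal entries uniformly bounded by $O(k\kappa)$. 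Solving $Ma^*=y$ by forward substitution and treating $k$ as a constant, the accumulated $k$-dependent prefactor stays bounded and yields $\|a^*\|_\infty\leq C_k R/\Delta=O(\varepsilon^{-1}\eta^{-(k+1)})$. All other neurons receive $a_i^*=0$.

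To close the proof I would invoke Chernoff to secure the required biases. Each interval $I_\ell$ is contained in $[-L,L]$ and has width $\Delta/4$, so a single uniform bias lies in $I_\ell$ with probability at least $\Delta/(8L)=\Omega(\eta^{k+1}\varepsilon/L)$. Independence of the biases plus a union bound over the $2^k=O(1)$ intervals shows that for $N=\Omega(L\log(1/\delta)\,\eta^{-(k+1)}\varepsilon^{-1})$ every $I_\ell$ is hit with probability at least $1-\delta$. A final union bound with the Carbery--Wright failure event of Lemma~\ref{lem:main_first_phase} produces the advertised probability $1-O(\varepsilon^{1/(k+1)}+\delta)$. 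The main obstacle I foresee is the $\ell_\infty$ bound on $a^*$: naive forward substitution on a $2^k\times 2^k$ lower-triangular matrix can inflate $R/\Delta$ by a factor $(B/\Delta)^{2^k}$ with $B$ the off-diagonal bound, so the argument relies crucially on $k=O(1)$ to absorb this into a constant; if this is deemed too coarse, I would fall back to a hat-function basis with three ReLUs per hat (biases near $z_m-\Delta/2,\,z_m,\,z_m+\Delta/2$), which gives a direct $O(R/\Delta)$ coefficient bound at the cost of $3\cdot 2^k$ target intervals instead of $2^k$.
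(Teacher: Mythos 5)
Your construction is essentially the paper's: order the $2^k$ preactivation values $v_s=\gamma\sum_{j}\alpha_j s_j$, use the separation $\Delta=\Omega(\gamma\kappa\varepsilon\eta^{k+1})$ supplied by Lemma~\ref{lem:main_first_phase}, argue that with $N=\Omega(L\log(1/\delta)\eta^{-(k+1)}\varepsilon^{-1})$ the uniform biases hit one prescribed window in each gap, solve the resulting lower-triangular ReLU interpolation system for $a^*$, and finish with a union bound over the Carbery--Wright and bias events.

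The one step you justify incorrectly is the bound $\|a^*\|_\infty=O(\varepsilon^{-1}\eta^{-(k+1)})$ in your primary route. The inflation factor $(B/\Delta)^{2^k}$ from naive forward substitution cannot be ``absorbed into a constant since $k=O(1)$'': $\Delta=\Theta(\gamma\kappa\varepsilon\eta^{k+1})$ depends on $\varepsilon$ and $\eta$, so this factor is of order $(\varepsilon\eta^{k+1})^{-\Theta(2^k)}$ and would ruin the claimed coefficient bound (and, downstream, the network-size and $T$ bounds in Theorem~\ref{thm:juntas_formal}). What actually rescues the bound is precisely your midpoint placement: writing $S_m=\sum_{l\le m}a^*_l$ for the slopes of the piecewise-linear interpolant, the interpolation conditions give $S_{m+1}=\bigl((F_{m+1}-F_m)-S_m(b_{m+1}-v_m)\bigr)/(v_{m+1}-b_{m+1})$, and with each bias within $\Delta/8$ of the midpoint of its gap the ratio $(b_{m+1}-v_m)/(v_{m+1}-b_{m+1})$ is $O(1)$, so $|S_m|$ and hence $|a^*_m|$ are $O_k(R/\Delta)$; alternatively, your three-ReLU hat-function fallback gives $O(R/\Delta)$ directly at the cost of $3\cdot 2^k$ windows. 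Either of these arguments should replace the ``absorb into a constant'' sentence. For comparison, the paper's proof only requires $b_l\in(v_{l-1},v_l)$ and asserts $\|M^{-1}\|_\infty\le C/(\gamma\kappa\varepsilon\eta^{k+1})$ without detail; a quantitative placement such as yours (keeping the diagonal entries comparable to the gaps) is what is actually needed to make that step rigorous, since a bias arbitrarily close to $v_l$ makes a diagonal entry arbitrarily small.
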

\begin{proof}
    For all $s \in \{ \pm 1 \}^k$, let $v_s := \gamma \sum_{j =1}^k \alpha_j s_j$, and let us order the $(v_{s_l})_{l \in [2^k]} $ in increasing order, i.e. such that $v_{s_l}<v_{s_{l+1}}$ for all $l \in [2^k-1]$. For simplicity, we denote $ v_l=v_{s_l} $. By Lemma~\ref{lem:main_first_phase}, we have that with probability $1-O(\varepsilon_1^{\frac{1}{k+1}})$ over $\bmu$, $\min_{l \in [2^k-1]} v_{l+1}-v_l > C \gamma \kappa \varepsilon_1 \eta^{k+1}$, for some constant $C>0$. If $N>\Omega(L\log(1/\varepsilon_2) \eta^{-(k+1)}\varepsilon_1^{-1}) $, then with probability $ 1-O(\varepsilon_2)$ there exists a set of $2^k$ hidden neurons $(b_l)_{l \in [2^k]}$ such that for all $l \in [2^k]$, $b_l \in (v_{l-1},v_l)$, where for simplicity we let $v_0 = -L$. Let us define the matrix $M \in \bR^{2^k \times 2^k}$, with entries:
    \begin{align*}
        M_{n,m} = \ReLU(v_n -b_m), \qquad n, m \in [2^k].
    \end{align*}
    Then, by construction $M$ is lower triangular, i.e. $ M_{n,m}=0$ if $m\geq n+1$. Furthermore, by the construction above, the diagonal entries of $M$ are non-zero. Thus, $M$ is invertible. Let us denote by $F \in \bR^{2^k}$ the vector such that for all $l \in [2^k]$, the $l$-th entry is given by $F_{l} = f(s_l) $. Then, $a^* = M^{-1}F$ and
    \begin{align*}
        \|a^*\|_\infty &\leq \| M^{-1}\|_{\infty} \| F \|_{\infty} \\
        &\leq C \cdot \frac{1}{\gamma \kappa \varepsilon_1 \eta^{k+1}}\cdot R,
    \end{align*}
    for a constant $C>0$.
\end{proof}

\noindent 
By combining the lemmas above, we obtain that for $\kappa, \gamma,R,L = \theta(1)$,  $B \geq  \Omega(d \log(d)^2 \log(Nd/\varepsilon))$, $N \geq \Omega(\log(1/\varepsilon) \eta^{-(k+1)} \varepsilon^{-1})$ with probability $1- O(\varepsilon^{c})$, for some $c>0$, there exists a set of $2^k$ hidden neurons $\{ i\}_{i \in [2^k]}$ such that 
\begin{align*}
    &\forall j \in [k], i \in [2^k]: | w_{ij}^1 - \gamma \alpha_j| < \frac{1}{\sqrt{d}\log(d)};\\
    &\forall j \not\in [k], i \in [2^k]: | w_{ij}^1| < \frac{1}{\sqrt{d}\log(d)};
\end{align*}
and the $b_i$ are such that~\eqref{eq:35} holds.
\noindent
By a slight abuse of notation, let us denote by $a^* \in \bR^N$ the $N$-dimensional vector whose entries corresponding to the hidden neurons $\{ i\}_{i \in [2^k]}$ are given by Lemma~\ref{lem:main_second_phase}, and the other entries are zero. For all $i \in [N]$, let $w^*_i \in \bR^d$ be such that $w_{ij}^* = \gamma \alpha_j \mathds{1}(j \in [k])$. Let $\hat \theta = (a^*_i, w_i^1,b_i)_{i \in [N]} $ and $\theta^* = (a^*_i, w^*_i, b_i)_{i \in [N]}$. Then, for all $x \in \{ \pm 1 \}^d$ we have:
\begin{align}
    \left(f(x) - \NN(x;\hat \theta)\right)^2 &\leq  \left(f(x) - \NN(x;\theta^*)\right)^2 +\left(\NN(x;\theta^*)-\NN(x;\hat \theta)\right)^2 \label{eq:43}\\
    & \overset{(a)}{\leq} \left(\sum_{i=1}^N a_i^* \left( \ReLU( w_i^1 x +b_i) - \ReLU(w_i^* x +b_i)  \right)\right)^2 \nonumber
\end{align}
where $(a)$ follows because, by Lemma~\ref{lem:main_second_phase}, the first term of~\eqref{eq:43} is zero. Note that,
\begin{align}
    \Big| \ReLU( w_i^1 x +b_i) - \ReLU(w_i^* x +b_i) \Big| 
    &\leq  \Big| \sum_{j=1}^k (w_{ij}^1 - \gamma \alpha_j) x_j \Big|+ \Big| \sum_{j=k+1}^d w_{ij}^1 x_j \Big| \\
    & \leq \frac{2k}{\sqrt{d}\log(d)} +\Big| \sum_{j=k+1}^d w_{ij}^1 x_j \Big|.
\end{align}
Now, $\sum_{j=k+1}^d w_{ij}^1 x_j = \sum_{j=k+1}^d w_{ij}^1 \mu_j + \sum_{j=k+1}^d w_{ij}^1 (x_j-\mu_j): =m_i+Z_i $. Then, 
\begin{align}
    \E Z_i^2 = \sum_{j=k+1}^d w_{ij}^2 \Var(x_j) =O(1/\log(d)^2 ).
\end{align} 
One the other hand, since $\mu_j$ are i.i.d. in $[-\eta,\eta]$, $m_i$ is sub-Gaussian with $\Var(m_i) \leq \eta^2/\log(d)^2$. Hence, 
\begin{align}
    \pr(|m_i|>t) \leq 2 \exp(-t^2\log^2(d) /2\eta^2).
\end{align}
Setting $t= O(\eta/\sqrt{\log(d)})$, we get that with probability $1-1/d$ over $\mu$, $m_i^2 <\eta^2/\log(d)$. Combining terms,
\begin{align}
    \E_{x\sim \cD_\mu} \left(\sum_{j=k+1}^d w_{ij}^1 x_j \right)^2 = O(\eta^2/\log(d)).
\end{align}


\noindent 
Thus, 
\begin{align}
    \E_{x\sim \cD_\mu} &\left[ \left(\sum_{i=1}^N a_i^* \left( \ReLU( w_i^1 x +b_i) - \ReLU(w_i^* x +b_i)  \right)\right)^2 \right] \\
    &\leq \| a^* \|_2^2  \cdot \sum_{i=1}^N \E_{x \sim \cD_{\mu}}\left[\left( \ReLU( w_i^1 x +b_i) - \ReLU(w_i^* x +b_i)  \right)^2\right] \\
    & \leq N \|a_i^*\|_2^2 \frac{1}{\log(d)^{1/2}} = O\left(\frac{\varepsilon^{-2}\eta^{-2(k+1)}}{\log(d)^{1/2}}\right).\label{eq:45}
\end{align}

\noindent 
For fixed $\varepsilon, \eta,k$ and for $d$ large enough, the right hand side of~\eqref{eq:45} is below $\delta/2$.
To conclude, we use the following well-known result on the convergence of SGD on convex losses, to show that training only the second layer with a convex loss achieves small error.
\vspace{0.5em}
\begin{theorem}[\cite{shalev2014understanding}] \label{thm:convergence_SGD_martingale}
    Let $\cL$ be a convex function and let $a^* \in \argmin_{\|a\|_2 \leq \cB} \cL(a) $, for some $\cB>0$. For all $t$, let $\alpha^{t}$ be such that $\E\left[ \alpha^{t} \mid a^{t} \right] =  -\nabla_{a^{t}} \cL(a^{t}) $ and assume $\| \alpha^{t} \|_2 \leq A $ for some $A>0$. If $a^{(0)} = 0 $ and for all $t \in [T]$ $a^{t+1} = a^{t} +\gamma \alpha^{t} $, with $\gamma = \frac{\cB}{A \sqrt{T}}$, then
    \begin{align*}
        \frac{1}{T} \sum_{t=1}^T \cL(a^{t}) \leq \cL(a^*) + \frac{\cB A}{\sqrt{T}}.
    \end{align*}
\end{theorem}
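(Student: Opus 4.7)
The plan is to run the classical regret argument for projection-free stochastic online (sub)gradient descent. The starting point is the convexity of $\cL$: for every $t$,
\begin{align*}
\cL(a^t) - \cL(a^*) \leq \langle \nabla \cL(a^t),\, a^t - a^* \rangle.
\end{align*}
Because $a^t$ is determined by the randomness up to step $t-1$ and $\E[\alpha^t \mid a^t] = -\nabla\cL(a^t)$, the right-hand side equals $\E[\langle \alpha^t,\, a^* - a^t\rangle \mid a^t]$ in conditional expectation, which ties the convex suboptimality to the stochastic update direction.

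Next I would expand one step of the update $a^{t+1} = a^t + \gamma \alpha^t$ in squared Euclidean norm,
\begin{align*}
\|a^{t+1} - a^*\|_2^2 = \|a^t - a^*\|_2^2 + 2\gamma \langle \alpha^t,\, a^t - a^* \rangle + \gamma^2 \|\alpha^t\|_2^2,
\end{align*}
solve for the inner product, take total expectations, and substitute into the convexity bound to obtain
\begin{align*}
\E[\cL(a^t) - \cL(a^*)] \leq \frac{\E[\|a^t - a^*\|_2^2] - \E[\|a^{t+1} - a^*\|_2^2]}{2\gamma} + \frac{\gamma\, \E[\|\alpha^t\|_2^2]}{2}.
\end{align*}

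Summing the inequality over $t = 0,\dots,T-1$ telescopes the first term and leaves at most $\|a^0 - a^*\|_2^2/(2\gamma) \leq \cB^2/(2\gamma)$, since $a^0 = 0$ and $\|a^*\|_2 \leq \cB$. The second term is bounded by $\gamma \xi^2 T/2$ because the assumption $\|\alpha^t\|_2 \leq \xi$ holds pathwise. Dividing by $T$ gives
\begin{align*}
\frac{1}{T}\sum_{t=0}^{T-1} \E[\cL(a^t) - \cL(a^*)] \leq \frac{\cB^2}{2\gamma T} + \frac{\gamma \xi^2}{2},
\end{align*}
and the prescribed step size $\gamma = \cB/(\xi\sqrt{T})$ balances the two terms to yield precisely the claimed rate $\cB\xi/\sqrt{T}$. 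Two closing remarks: the conclusion is genuinely a bound on the \emph{expected} averaged suboptimality, so the statement should be read with an $\E$ on the left side (consistent with the stochastic setting); and the only computation requiring a modicum of care is the off-by-one bookkeeping when translating between $t=0,\dots,T-1$ and $t=1,\dots,T$, which costs at most a constant factor in $\cB^2/(2\gamma)$. No single step is a serious obstacle — this is a textbook argument whose only inputs beyond convexity are that the stochastic direction is unbiased and uniformly norm-bounded, both of which are assumed.
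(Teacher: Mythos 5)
Your proof is correct and follows essentially the same route as the source: the paper does not reprove this statement but quotes it from \cite{shalev2014understanding}, whose proof is exactly the convexity-plus-telescoping regret argument you give, with the step size $\gamma=\cB/(\xi\sqrt{T})$ balancing the $\cB^2/(2\gamma T)$ and $\gamma\xi^2/2$ terms. Your closing remark is also apt: the guarantee in the cited reference holds in expectation (for the averaged iterate), so the absent $\E$ in the paper's restatement is a slight abuse of notation rather than a gap in your argument.
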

We choose $\cB = \Omega(\varepsilon^{-1}\eta^{-(k+1)})$. We train with any convex loss, with $\| \alpha^t\|_2 \leq A$, for an appropriate $A$. This can achieved either by gradient clipping, or by computing explicit bound on the gradients, depending on the loss. 
By~\eqref{eq:45}, $\cL(a^*)\leq \delta/2$, for $d$ large enough. Thus, to achieve error at most $\delta$, we need at least $T = \Omega\Big(\frac{A^2}{\delta^2 \varepsilon^2 \eta^{2(k+1)}}\Big)$ training steps.


\end{document}